\documentclass[runningheads]{llncs}

\usepackage{times,tikz}
\usetikzlibrary{matrix,arrows,decorations.pathmorphing,calc,shadings,decorations.pathreplacing,patterns}
\usepackage{bbm}
\usepackage{amssymb,enumerate,multirow,commath,amsmath,graphicx}
\usepackage{amsfonts,bbding}
\usepackage{latexsym}

\hyphenation{know-ledge}

\def\natnum{{\mathbb N}}

\def\email#1{Email: {\tt #1}}

\spnewtheorem{thm}{Theorem}{\bf }{\it }
\spnewtheorem{prop}[thm]{Proposition}{\bf }{\it }
\spnewtheorem{prob}[thm]{Open Problem}{\bf }{\it }
\spnewtheorem{cor}[thm]{Corollary}{\bf }{\it }
\spnewtheorem{lem}[thm]{Lemma}{\bf }{\it }
\spnewtheorem{defn}[thm]{Definition}{\bf }{\rm }
\spnewtheorem{rem}[thm]{Remark}{\bf }{\rm }
\spnewtheorem{exmp}[thm]{Example}{\bf }{\rm }
\spnewtheorem{clm}[thm]{Claim}{\bf }{\it }
\spnewtheorem{nota}[thm]{Notation}{\bf }{\rm }
\spnewtheorem{ppt}[thm]{Property}{\bf }{\rm }
\spnewtheorem{fact}[thm]{Fact}{\bf }{\rm }

\def\nats{{\mathbb N}}
\def\rats{{\mathbb Q}}

\newcommand{\PBTD}{\mathrm{PBTD}}

\newcommand{\TD}{\mathrm{TD}}
\newcommand{\cl}{\mathrm{cl}}
\newcommand{\trcl}{\mathrm{trcl}}
\newcommand{\RTD}{\mathrm{RTD}}
\newcommand{\ord}{\mathrm{ord}}
\newcommand{\cL}{{\mathcal{L}}}
\newcommand{\Ra}{{\Rightarrow}}
\newcommand{\La}{{\Leftarrow}}
\newcommand{\cD}{{\mathcal{D}}}
\newcommand{\cF}{{\mathcal{F}}}
\newcommand{\cX}{{\mathcal{X}}}
\newcommand{\cS}{{\mathcal{S}}}
\newcommand{\cC}{{\mathcal{C}}}

\newcommand{\cR}{{\mathcal{R}}}
\newcommand{\cH}{{\mathcal{H}}}

\newcommand{\reals}{\mathbbm{R}}

\newcommand{\sm}{\setminus}
\newcommand{\seq}{\subseteq}
\newcommand{\ra}{\rightarrow}
\newcommand{\da}{\downarrow}
\newcommand{\ua}{\uparrow}
\newcommand{\LINSET}{\mathrm{LINSET}}
\newcommand{\CFLINSET}{\mathrm{CF}\mbox{-}\mathrm{LINSET}}

\newcommand{\dund}{\Longleftrightarrow}
\newcommand{\impl}{\Rightarrow}

\newcommand{\eset}{\emptyset}

\newcommand{\NELINSET}{\mathrm{NE}\mbox{-}\mathrm{LINSET}}
\newcommand{\NECFLINSET}{\mathrm{NE}\mbox{-}\mathrm{CF}\mbox{-}\mathrm{LINSET}}

\def\natnum{{\mathbb N}}

\newcommand{\bR}{{\mathbb{R}}}

\newcommand{\Sg}{\mathrm{Sg}}
\newcommand{\sumg}{\mathrm{sum}}
\newcommand{\tupleg}{\mathrm{tuple}}
\newcommand{\sign}{\mathrm{sign}}
\newcommand{\Ind}{\mathbbm{1}}
\newcommand\spn[1]{{\left\langle#1\right\rangle}}

\begin{document}

\title{Preference-based Teaching}

\titlerunning{Preference-based Teaching}

\author{Ziyuan Gao$^1$ \and Christoph Ries$^2$ \and Hans Ulrich Simon$^2$ \and Sandra Zilles$^1$}
\authorrunning{Z.~Gao, C.~Ries, H.~Simon and S.~Zilles}

\institute{Department of Computer Science\\University of Regina, Regina, SK, Canada S4S 0A2 
\\\email{\{gao257,zilles\}@cs.uregina.ca} \and
Department of Mathematics, \\
Ruhr-Universit\"{a}t Bochum, D-44780 Bochum, Germany\\\email{\{christoph.ries,hans.simon\}@rub.de}}

\maketitle

\begin{abstract}
We introduce a new model of teaching named 
``preference-based \linebreak[4]teaching'' and a corresponding complexity
parameter---the preference-based \linebreak[4]teaching dimension
(PBTD)---representing the worst-case number of examples 
needed to teach any concept in a given concept class. Although 
the PBTD coincides with the well-known recursive teaching
dimension (RTD) on finite classes, 
it is radically different on infinite ones: 
the RTD becomes infinite already for 
trivial infinite classes (such as half-intervals)
whereas the PBTD evaluates to 
reasonably small values for a wide collection
of infinite classes including classes consisting
of so-called closed sets w.r.t.~a given closure 
operator, including various classes related to
linear sets over $\nats_0$ (whose RTD had been 
studied quite recently) and including the class 
of Euclidean half-spaces. On top of presenting these concrete 
results, we provide the reader with a theoretical 
framework (of a combinatorial flavor) which 
helps to derive bounds on the PBTD.
\end{abstract}

\begin{keywords}
teaching dimension, preference relation, recursive teaching dimension,\linebreak[4] learning halfspaces, linear sets
\end{keywords}

\section{Introduction} \label{sec:introduction}

The classical model of teaching~\cite{SM1991,GK1995}
formulates the following interaction protocol between 
a teacher and a student:
\begin{itemize}
\item
Both of them agree on a ``classification-rule system'', 
formally given by a concept class $\cL$.
\item
In order to teach a specific concept $L \in \cL$, 
the teacher presents to the student a \emph{teaching set}, 
i.e., a set $T$ of labeled examples so that $L$ is the only 
concept in $\cL$ that is consistent with $T$.
\item
The student determines $L$ as the unique concept in $\cL$ 
that is consistent with $T$.
\end{itemize}

Goldman and Mathias~\cite{GM1996} pointed out that this model of teaching 
is not powerful enough, since the teacher is required 
to make \emph{any\/} consistent learner successful. 
A challenge is to model powerful teacher/student 
interactions without enabling unfair ``coding tricks''. Intuitively, the term ``coding trick'' 
refers to any form of undesirable collusion between teacher and learner, which would 
reduce the learning process to a mere decoding of a code the teacher sent to the learner.
There is no generally accepted definition of what constitutes a coding trick, in part because 
teaching an exact learner could always be considered coding to some extent: the teacher presents a 
set of examples which the learner ``decodes'' into a concept. 

In this paper, we adopt the notion of 
``valid teacher/learner pair'' introduced by \cite{GM1996}. They consider their model to be intuitively 
free of coding tricks while it provably allows for a much broader 
class of interaction protocols than the original teaching 
model. In particular, teaching may thus become more efficient 
in terms of the number of examples in the teaching sets. 
Further definitions of how to avoid unfair coding tricks 
have been suggested~\cite{ZLHZ2011}, but they were less 
stringent than the one proposed by Goldman and Mathias. The latter simply requests that, if the learner 
hypothesizes concept $L$ upon seeing a sample set $S$ of labeled examples, then the learner will still 
hypothesize $L$ when presented with any sample set $S\cup S'$, where $S'$ contains only examples labeled 
consistently with $L$. A coding trick would then be any form of exchange between the teacher and the learner that 
does not satisfy this definition of validity.

The model of recursive teaching~\cite{ZLHZ2011,MGZ2014}, 
which is free of coding tricks according to the Goldman-Mathias 
definition, has recently gained attention because its complexity 
parameter, the recursive teaching dimension (RTD), has shown 
relations to the VC-dimension and to sample 
compression~\cite{ChenCT16,DFSZ2014,MSWY2015,SZ2015}, 
when focusing on finite concept classes. Below though we will 
give examples of rather simple infinite concept classes with 
infinite RTD, suggesting that the RTD is inadequate for 
addressing the complexity of teaching infinite classes.

In this paper, we introduce a model 
called \emph{preference-based teaching}, in which the teacher 
and the student do not only agree on a classification-rule 
system $\cL$ but also on a preference relation (a strict partial 
order) imposed on $\cL$. If the labeled examples presented by the
teacher allow for several consistent explanations (= consistent
concepts) in $\cL$, the student will choose a concept $L \in \cL$
that she prefers most. This gives more flexibility to the teacher 
than the classical model: the set of labeled examples need not 
distinguish a target concept $L$ from any other concept in $\cL$ 
but only from those concepts $L'$ over which $L$ is not preferred.\footnote{Such a preference relation can be thought of as a kind of bias in learning: the student is ``biased'' towards concepts that are preferred over others, and the teacher, knowing the student's bias, selects teaching sets accordingly.}
At the same time, preference-based teaching yields valid 
teacher/learner pairs according to Goldman and Mathias's definition.
We will show that the new model, despite avoiding coding tricks, 
is quite powerful. Moreover, as we will see in the course of the 
paper, it often allows for a very natural design of teaching sets.

Assume teacher and student choose a preference relation that 
minimizes the worst-case number $M$ of examples required for 
teaching any concept in the class $\mathcal{L}$. This number $M$ 
is then called the preference-based teaching dimension (PBTD) 
of $\mathcal{L}$. In particular, we will show the following:

(i) Recursive teaching is a special case of preference-based 
teaching where the preference relation satisfies a so-called 
``finite-depth condition''. It is precisely this additional 
condition that renders recursive teaching useless for many 
natural and apparently simple infinite concept classes. 
Preference-based teaching successfully addresses these 
shortcomings of recursive teaching, see Section~\ref{sec:rtd}. 
For finite classes, PBTD and RTD are equal.

(ii) A wide collection of geometric and algebraic concept 
classes with infinite RTD can be taught very efficiently, 
i.e., with low PBTD. To establish such results, we show 
in Section~\ref{sec:closure-operator} that spanning sets 
can be used as preference-based teaching sets with positive 
examples only --- a result that is very simple to obtain but
quite useful.

(iii) In the preference-based model, linear sets over 
$\nats_0$ with origin 0 and at most $k$ generators can be taught with 
$k$ positive examples, while recursive teaching with a bounded number 
of positive examples was previously shown to be impossible and it is 
unknown whether recursive teaching with a bounded number of positive 
and negative examples is possible for $k \geq 4$. We also give 
some almost matching 
upper and lower bounds on the PBTD for other classes of linear sets, 
see Section~\ref{sec:linsets}.

(iv) The PBTD of halfspaces in $\reals^d$ is upper-bounded by $6$, independent of the dimensionality $d$ 
(see Section~\ref{sec:halfspaces}), 
while its RTD is infinite. %This result is based on the design of a 
%lexicographic preference-relation that can be described by a 
%hierarchical rule system. 

(v) We give full characterizations of concept classes that can be taught with only one example (or with only one example, which is positive) in the preference-based model (see Section~\ref{sec:pbtd1}).

Based on our 
results and the naturalness of the teaching sets and preference 
relations used in their proofs, we claim that 
%our elegant and simple 
%extension of the classical teaching model 
preference-based teaching is far more 
suitable to the study of infinite concept classes than recursive 
teaching. 

Parts of this paper were published in a previous conference version~\cite{GRSZ2016}.

\section{Basic Definitions and Facts} \label{sec:definitions}

$\nats_0$ denotes the set of all non-negative integers and
$\nats$ denotes the set of all positive integers. 
A {\em concept class} $\cL$ is a family of subsets over a universe $\cX$, 
i.e., $\cL \seq 2^\cX$ where $2^\cX$ denotes the powerset of $\cX$.
The elements of $\cL$ are called {\em concepts}. A {\em labeled example} 
is an element of $\cX \times\{-,+\}$. We slightly deviate from this notation in 
Section~\ref{sec:halfspaces}, where our treatment of halfspaces makes it 
more convenient to use $\{-1,1\}$ instead of $\{-,+\}$, and in Section~\ref{sec:pbtd1}, 
where we perform Boolean operations on the labels and therefore use 
$\{0,1\}$ instead of $\{-,+\}$. Elements of $\cX$ are called 
{\em examples}. Suppose that $T$ is a set of labeled examples. 
Let $T^+ = \{x\in\cX : (x,+) \in T\}$ and $T^- = \{x\in\cX : (x,-) \in T\}$.
A set $L \seq\cX$ is {\em consistent with $T$} if it
includes all examples in $T$ that are labeled ``$+$'' and excludes
all examples in $T$ that are labeled ``$-$'', i.e, if
$T^+ \seq L$ and $T^- \cap L = \eset$. A set of labeled examples that
is consistent with $L$ but not with $L'$ is said to {\em distinguish $L$
from $L'$}. The classical model of teaching is then defined as follows.

\begin{definition}[\cite{SM1991,GK1995}] \label{def:classical-model}
A {\em teaching set} for a concept $L \in \cL$ w.r.t.~$\cL$ is a set $T$ of labeled
examples such that $L$ is the only concept in $\cL$ that is consistent
with $T$, i.e., $T$ distinguishes $L$ from any other concept in $\cL$. Define 
$\TD(L,\cL) = \inf\{|T| : T\mbox{ is a teaching\ }$ $\mbox{set for $L$ w.r.t.~$\cL$}\}$.
i.e., $\TD(L,\cL)$ is the smallest possible size of a teaching set for $L$ 
w.r.t.~$\cL$.  If $L$ has no finite teaching set w.r.t.~$\cL$, 
then $\TD(L,\cL)=\infty$. 
The number $\TD(\cL) = \sup_{L \in \cL}\TD(L,\cL) \in \nats_0\cup\{\infty\}$
is called the {\em teaching dimension of $\cL$}.
\end{definition}

% Unless it is $\infty$, it is the smallest number $t$ 
%such that any concept $L \in \cL$ is guaranteed to have 
%a teaching set w.r.t.~$\cL$ of size at most $t$.
For technical reasons, we will occasionally deal with the 
number %$\TD_{min}(\cL)$ given by 
$\TD_{min}(\cL) = \inf_{L \in \cL}\TD(L,$ $\cL)$, i.e., the number of
examples needed to teach the concept from $\cL$ that is easiest to
teach. 

In this paper, we will examine a teaching model in which the teacher
and the student do not only agree on a classification-rule system $\cL$
but also on a preference relation, denoted as $\prec$, imposed on $\cL$.
We assume that $\prec$ is a {\em strict partial order} on $\cL$,
i.e., $\prec$ is asymmetric and transitive. %If $L' \prec L$, 
%then we say that $L$ is {\em strictly preferred over $L'$}. 
%As usual, $L$ and $L'$ are said to be {\em comparable w.r.t.~$\prec$} 
%if $L \prec L'$ or $L' \prec L$.  If neither $L \prec L'$ nor $L' \prec L$, 
%they are said to be {\em incomparable}. A partial order on $\cL$ is called
%{\em linear} if any pair $L \neq L' \in \cL$ is comparable. 
The partial order that makes every pair $L \neq L' \in \cL$ 
incomparable %is called the {\em empty partial order} on $\cL$ and 
is denoted by $\prec_\eset$. For every $L \in\cL$, let 
\[ \cL_{\prec L} = \{L'\in\cL: L' \prec L\} \]
be the set of concepts over which $L$ is strictly preferred. Note 
that $\cL_{\prec_\eset L} = \eset$ for every $L\in\cL$.

As already noted above, a teaching set $T$ of $L$ w.r.t.~$\cL$ 
distinguishes $L$ from any other concept in $\cL$. If a preference 
relation comes into play, then $T$ will be exempted from the 
obligation to distinguish $L$ from the concepts in $\cL_{\prec L}$ 
because $L$ is strictly preferred over them anyway. %In other words, 
%it will suffice that $T$ is a teaching set for $L$ 
%w.r.t.~$\cL\sm\cL_{\prec L}$. %These considerations motivate the following

\begin{definition} \label{def:td-prec-L}
A {\em teaching set for $L \seq X$ w.r.t.~$(\cL,\prec)$} is defined as 
a teaching set for $L$ w.r.t.~$\cL\sm\cL_{\prec L}$. Furthermore define 
\[ 
\PBTD(L,\cL,\prec) = \inf\{|T| : T\mbox{ is a teaching set for $L$ w.r.t.~$(\cL,\prec$})\}
\in \nats_0\cup\{\infty\} \enspace .
\]
The number $\PBTD(\cL,\prec) = \sup_{L \in \cL}\PBTD(L,\cL,\prec) \in \nats_0\cup\{\infty\}$ 
is called the {\em teaching dimension of $(\cL,\prec)$}. 
\end{definition}
Definition~\ref{def:td-prec-L} implies that
\begin{equation} \label{eq:td-prec-L}
\PBTD(L,\cL,\prec) = \TD(L,\cL\sm\cL_{\prec L}) \enspace . 
\end{equation}
Let $L \mapsto T(L)$ be a mapping that assigns a teaching 
set for $L$ w.r.t.~$(\cL,\prec)$ to every $L\in\cL$. It
is obvious from Definition~\ref{def:td-prec-L} that $T$
must be injective, i.e., $T(L) \neq T(L')$ if $L$ and $L$'
are distinct concepts from $\cL$.
The classical model of teaching is obtained from the model
described in Definition~\ref{def:td-prec-L} when we plug in the
empty preference relation $\prec_\eset$ for $\prec$. 
In particular, $\PBTD(\cL,\prec_\eset)$ $= \TD(\cL)$.

We are interested in finding the partial order 
that is optimal for the purpose of teaching and we aim at 
determining the corresponding teaching dimension.
This motivates the following notion:
\begin{definition} \label{def:td-best-le}
The {\em preference-based teaching dimension of $\cL$} is given by
\[
\PBTD(\cL) = \inf\{\PBTD(\cL,\prec) : \mbox{$\prec$ is a strict partial order on $\cL$}\}
\enspace .
\]
\end{definition}

A relation $R'$ on $\cL$ is said to be an {\em extension of a relation $R$}
if $R \seq R'$. The {\em order-extension principle} states that any partial order 
has a linear extension \cite{Jech-1973}. The following result (whose second assertion follows 
from the first one in combination with the order-extension principle) is 
pretty obvious:

\begin{lemma} \label{lem:extension}
\begin{enumerate}
\item
Suppose that $\prec'$ extends $\prec$. If $T$ is a teaching set for $L$
w.r.t.~$(\cL,\prec)$, then $T$ is a teaching set for $L$ w.r.t.~$(\cL,\prec')$.
Moreover $\PBTD(\cL,\prec') \le \PBTD(\cL,$ $\prec)$.
\item
$\PBTD(\cL) = \inf\{\PBTD(\cL,\prec) : \mbox{$\prec$ is a strict linear order on $\cL$}\}$.
\end{enumerate}
\end{lemma}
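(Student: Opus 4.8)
The plan is to prove both parts of Lemma~\ref{lem:extension} directly from the definitions, with the first part being essentially a consistency-monotonicity argument and the second part following by a clean application of the order-extension principle.

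For part (1), suppose $\prec'$ extends $\prec$, so $\prec\,\seq\,\prec'$. First I would observe that for every $L\in\cL$ this inclusion yields $\cL_{\prec L}\seq\cL_{\prec' L}$: indeed, if $L'\prec L$ then $L'\prec' L$ by the inclusion. Consequently $\cL\sm\cL_{\prec' L}\seq\cL\sm\cL_{\prec L}$. Now let $T$ be a teaching set for $L$ w.r.t.~$(\cL,\prec)$, meaning by Definition~\ref{def:td-prec-L} that $T$ is a teaching set for $L$ w.r.t.~the subclass $\cL\sm\cL_{\prec L}$, i.e., $L$ is the unique concept in $\cL\sm\cL_{\prec L}$ consistent with $T$. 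Since $\cL\sm\cL_{\prec' L}$ is a subclass of $\cL\sm\cL_{\prec L}$ that still contains $L$ (note $L\notin\cL_{\prec' L}$ because $\prec'$ is asymmetric, hence irreflexive), $L$ remains the unique concept consistent with $T$ within this smaller class. Hence $T$ is a teaching set for $L$ w.r.t.~$(\cL,\prec')$. Taking infima over teaching sets gives $\PBTD(L,\cL,\prec')\le\PBTD(L,\cL,\prec)$ for every $L$, and taking the supremum over $L\in\cL$ yields $\PBTD(\cL,\prec')\le\PBTD(\cL,\prec)$.

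For part (2), I would argue by two inequalities. The inequality $\PBTD(\cL)\le\inf\{\PBTD(\cL,\prec):\prec\text{ linear}\}$ is immediate, since every strict linear order is in particular a strict partial order, so the infimum in Definition~\ref{def:td-best-le} is taken over a superset of the linear orders. For the reverse inequality, let $\prec$ be any strict partial order on $\cL$. By the order-extension principle~\cite{Jech-1973}, $\prec$ admits a linear extension $\prec'$, i.e., a strict linear order with $\prec\,\seq\,\prec'$. Applying part (1) gives $\PBTD(\cL,\prec')\le\PBTD(\cL,\prec)$. Thus every partial order is dominated (in terms of $\PBTD$) by some linear order, so the infimum over linear orders is at most the infimum over all partial orders, i.e., $\inf\{\PBTD(\cL,\prec):\prec\text{ linear}\}\le\PBTD(\cL)$. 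Combining the two inequalities establishes the claimed equality.

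The only point requiring any care is the correct direction of the set inclusions: enlarging the preference relation \emph{shrinks} the competing subclass $\cL\sm\cL_{\prec L}$, which is exactly why a teaching set survives the extension rather than being destroyed by it. This is the crux, and it is mild; I do not anticipate a genuine obstacle. The one technical hypothesis worth flagging is that $\prec'$ must remain a strict partial order (asymmetric and transitive), which guarantees $L\notin\cL_{\prec' L}$ so that $L$ stays inside the relevant subclass; the order-extension principle supplies an extension that is even a strict linear order, so this is automatic in the application in part (2).
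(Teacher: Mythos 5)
Your proof is correct and follows exactly the route the paper intends: the paper states this lemma as ``pretty obvious,'' noting only that part (2) follows from part (1) together with the order-extension principle, and your argument fills in precisely those details (the inclusion $\cL_{\prec L}\seq\cL_{\prec' L}$, hence $\cL\sm\cL_{\prec' L}\seq\cL\sm\cL_{\prec L}$, for part (1), and a linear extension via \cite{Jech-1973} for part (2)). No gaps; your flagged point about $L\notin\cL_{\prec' L}$ is handled correctly by irreflexivity.
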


Recall that Goldman and Mathias \cite{GM1996} suggested to avoid coding tricks by requesting 
that any superset $S$ of a teaching set for a concept $L$ remains a teaching set, 
if $S$ is consistent with $L$. This property is obviously satisfied in 
preference-based teaching. A preference-based teaching set needs to distinguish 
a concept $L$ from all concepts in $\cL$ that are preferred over $L$. 
Adding more labeled examples from $L$ to such a teaching set will still 
result in a set distinguishing $L$ from all concepts in $\cL$ that are 
preferred over $L$.

\paragraph{Preference-based teaching with positive examples only.}

Suppose that $\cL$ contains two concepts $L,L'$ such that $L \subset L'$.
In the classical teaching model, any teaching set for $L$ w.r.t.~$\cL$ has to employ 
a negative example in order to distinguish $L$ from $L'$. Symmetrically, any
teaching set for $L'$ w.r.t.~$\cL$ has to employ a positive example. Thus 
classical teaching cannot be performed with one
type of examples only unless $\cL$ is an antichain w.r.t.~inclusion. As for
preference-based teaching, the restriction to one type of examples is much less
severe, as our results below will show. %Before we can present concrete results in this direction, we have to
%introduce the following definitions: \\

A teaching set $T$ for $L \in \cL$ w.r.t.~$(\cL,\prec)$ is said to be 
{\em positive} if it does not make use of negatively labeled examples,
i.e., if $T^- = \eset$. In the sequel, we will occasionally identify
a positive teaching set $T$ with $T^+$. A positive teaching set 
for $L$ w.r.t.~$(\cL,\prec)$ can clearly not distinguish $L$ from a 
proper superset of $L$ in $\cL$. Thus, the following holds:

\begin{lemma} \label{lem:ts-pos}
Suppose that $L \mapsto T^+(L)$ maps each $L \in \cL$ to a
positive teaching set for $L$ w.r.t.~$(\cL,\prec)$. 
Then $\prec$ must be an extension of $\supset$ (so that proper
subsets of a set $L$ are strictly preferred over $L$) and, 
for every $L \in \cL$, the set $T^+(L)$ must distinguish $L$ 
from every proper subset of $L$ in $\cL$. 
\end{lemma}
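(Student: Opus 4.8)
The plan is to build everything on the single elementary observation already highlighted in the paragraph preceding the lemma: a positive teaching set for a concept cannot distinguish it from any proper superset. Concretely, if $L' \supset L$ and $T^+(L)$ is positive and consistent with $L$, then every positively labeled example of $T^+(L)$ lies in $L \seq L'$ while $T^+(L)^- = \eset$, so $T^+(L)$ is consistent with $L'$ as well and hence cannot separate $L$ from $L'$. From this observation I would derive the two assertions in the order in which the second depends on the first, using only the definition of a teaching set for $(\cL,\prec)$ via \eqref{eq:td-prec-L} and the fact that $\prec$ is a strict partial order.

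For the first assertion I would argue that no proper superset can survive in the reduced class that a positive teaching set is obliged to resolve. Fix $M, L \in \cL$ with $M \subset L$ and consider the positive teaching set $T^+(M)$. By the observation above (applied with $L$ playing the role of the proper superset of $M$), $T^+(M)$ is consistent with $L$ and therefore does not distinguish $M$ from $L$. Since $M \neq L$ and $T^+(M)$ is by hypothesis a teaching set for $M$ w.r.t.~$\cL \sm \cL_{\prec M}$, the concept $L$ cannot belong to $\cL \sm \cL_{\prec M}$; hence $L \in \cL_{\prec M}$, i.e.\ $L \prec M$. Thus every proper subset $M$ of $L$ is strictly preferred over $L$, which is exactly the statement that $\prec$ extends $\supset$.

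For the second assertion I would use asymmetry of $\prec$ to push each proper subset back into the class that $T^+(L)$ must resolve. Fix again $M \subset L$ in $\cL$. The first part yields $L \prec M$, and since $\prec$ is a strict partial order it is asymmetric, so $M \not\prec L$ and therefore $M \notin \cL_{\prec L}$. Consequently $M$ is a concept of $\cL \sm \cL_{\prec L}$ distinct from $L$, and because $T^+(L)$ is a teaching set for $L$ w.r.t.~$\cL \sm \cL_{\prec L}$, it must distinguish $L$ from $M$, as claimed.

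The argument is short, and the only point that needs genuine care — the ``hard part'', such as it is — is bookkeeping with the direction of $\prec$: one must apply the superset observation to $T^+(M)$ (not $T^+(L)$) to learn $L \prec M$ in the first step, and then invoke asymmetry to conclude $M \notin \cL_{\prec L}$ before appealing to $T^+(L)$ in the second step. Getting these two invocations in the right order, and resisting the temptation to conflate ``$M$ is preferred over $L$'' with ``$M \in \cL_{\prec L}$'', is what makes the proof go through cleanly.
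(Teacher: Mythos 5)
Your proof is correct and is essentially the paper's own argument: the paper justifies the lemma with the single observation that a positive teaching set for $L$ is automatically consistent with every proper superset of $L$, and you have simply spelled out how that observation forces $L \prec M$ for each $M \subset L$ (via $T^+(M)$) and then, by asymmetry, forces $M$ back into $\cL \sm \cL_{\prec L}$ so that $T^+(L)$ must distinguish $L$ from $M$. Your careful handling of the direction of $\prec$ matches the paper's convention that $\cL_{\prec L}$ consists of the concepts over which $L$ is preferred.
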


\noindent
Define
\begin{equation} \label{eq:td-plus-L}
\PBTD^+(L,\cL,\prec) = \inf\{|T| : T\mbox{ is a positive teaching set for $L$ w.r.t.~$(\cL,\prec$})\}
\enspace .
\end{equation}
The number $\PBTD^+(\cL,\prec) = \sup_{L \in \cL}\PBTD^+(L,\cL,\prec)$ (possibly $\infty$) 
is called the {\em positive teaching dimension of $(\cL,\prec)$}. 
The {\em positive preference-based teaching dimension of $\cL$} is then 
given by
\begin{equation} \label{eq:td-plus-cL}
\PBTD^+(\cL) = \inf\{\PBTD^+(\cL,\prec) : \mbox{$\prec$ is a strict partial order on $\cL$}\}
\enspace .
\end{equation}

\paragraph{Monotonicity.}

A complexity measure $K$ that assigns a number $K(\cL) \in \nats_0$ 
to a concept class $\cL$ is said to be {\em monotonic} if $\cL' \seq \cL$
implies that $K(\cL') \le K(\cL)$. It is well known (and trivial to see)
that $\TD$ is monotonic. It is fairly obvious that $\PBTD$ is
monotonic, too:
%Moreover, 
%\begin{equation} \label{eq:monotonicity-td}
%\forall L\in\cL:\ \TD(L,\cL') \le \TD(L,\cL) \enspace ,
%\end{equation}
%i.e., the monotonicity even holds ``$L$-wise''.

\begin{lemma} \label{lem:monotonicity}
$\PBTD$ and $\PBTD^+$ are monotonic.
\end{lemma}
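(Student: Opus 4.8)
The plan is to prove monotonicity for $\PBTD$ and $\PBTD^+$ separately, though the two arguments share the same structure. The key idea is that a preference relation on the larger class $\cL$, when restricted to the subclass $\cL'$, yields a preference relation on $\cL'$ under which teaching becomes no harder. Concretely, fix a strict partial order $\prec$ on $\cL$ and let $\prec'$ denote its restriction to $\cL'$, i.e.\ $\prec' = {\prec} \cap (\cL' \times \cL')$; this is again asymmetric and transitive, hence a strict partial order on $\cL'$. For each $L \in \cL'$ I would compare $\cL'_{\prec' L}$ with $\cL_{\prec L}$ and observe that $\cL'_{\prec' L} = \cL_{\prec L} \cap \cL' \seq \cL_{\prec L}$, so that
\[
\cL' \sm \cL'_{\prec' L} \seq \cL \sm \cL_{\prec L} \enspace .
\]

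The crux is then a monotonicity property of the classical teaching dimension applied to the right-hand side of \eqref{eq:td-prec-L}. Since $\TD$ is monotonic (as noted in the excerpt just before the lemma), and since $\cL' \sm \cL'_{\prec' L}$ is a subclass of $\cL \sm \cL_{\prec L}$ that still contains $L$, any teaching set for $L$ w.r.t.\ $\cL \sm \cL_{\prec L}$ is also consistent with the obligation of distinguishing $L$ from the fewer remaining concepts in $\cL' \sm \cL'_{\prec' L}$. Using \eqref{eq:td-prec-L}, this gives
\[
\PBTD(L,\cL',\prec') = \TD(L, \cL' \sm \cL'_{\prec' L}) \le \TD(L, \cL \sm \cL_{\prec L}) = \PBTD(L,\cL,\prec) \enspace .
\]
Taking the supremum over $L \in \cL'$ yields $\PBTD(\cL',\prec') \le \PBTD(\cL,\prec)$.

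To finish the $\PBTD$ case, I would take the infimum over all strict partial orders $\prec$ on $\cL$. For each such $\prec$ the restriction $\prec'$ is an admissible order on $\cL'$, so $\PBTD(\cL') \le \PBTD(\cL',\prec') \le \PBTD(\cL,\prec)$; taking the infimum over $\prec$ on the right gives $\PBTD(\cL') \le \PBTD(\cL)$ by Definition~\ref{def:td-best-le}. The argument for $\PBTD^+$ is identical once I check that restricting a positive teaching set for $L$ w.r.t.\ $(\cL,\prec)$ to the smaller class keeps it positive and consistent: dropping concepts from the comparison class cannot force the introduction of negative examples, so a positive teaching set remains positive, and the same chain of inequalities goes through with $\PBTD^+$ and $\TD$ replaced appropriately.

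I expect the only mildly delicate step to be the bookkeeping around $\cL'_{\prec' L} = \cL_{\prec L} \cap \cL'$ and making sure the subclass inclusion is in the correct direction so that classical $\TD$-monotonicity applies; everything else is routine. No genuine obstacle is anticipated, which matches the excerpt's own assessment that the result is ``fairly obvious.''
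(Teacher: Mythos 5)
Your argument is correct and is exactly the canonical one the paper has in mind (the paper omits the proof, declaring the lemma ``fairly obvious''): restrict $\prec$ to $\cL'$, observe $\cL'\sm\cL'_{\prec' L}\seq\cL\sm\cL_{\prec L}$, and invoke monotonicity of $\TD$ via Equation~(\ref{eq:td-prec-L}), with the positive case going through verbatim since the same (positive) teaching set is reused. No gaps.
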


%\begin{proof}
%We focus on proving the monotonicity of $\PBTD$. The monotonicity of $\PBTD^+$
%can be shown in a similar fashion. Suppose that $\cL' \seq \cL$ and 
%that $\prec$ is a strict partial order on $\cL$. Consider a 
%concept $L\in\cL'$. Clearly, $\cL'\sm\cL'_{\prec L} \seq \cL\sm\cL_{\prec L}$. 
%Thus,
%\[ 
%\PBTD(L,\cL',\prec) \stackrel{(\ref{eq:td-prec-L})}{=} \TD(L,\cL'\sm\cL'_{\prec L}) 
%\stackrel{(\ref{eq:monotonicity-td})}{\le} \TD(L,\cL\sm\cL_{\prec L}) 
%\stackrel{(\ref{eq:td-prec-L})}{=} \PBTD(L,\cL,\prec) \enspace .
%\]
%Thus $\PBTD$ is even monotonic ``$(L,\prec)$-wise''. Taking the infimum over 
%all strict partial orders $\prec$ and the supremum over all $L\in\cL$
%does not destroy monotonicity. It follows that $\PBTD$ is monotonic.
%\end{proof}

\noindent
As an application of monotonicity, we show the following result:

\begin{lemma} \label{lem:lb-tdmin}
For every finite subclass $\cL'$ of $\cL$, we have
$ \PBTD(\cL) \ge \PBTD(\cL') \ge \TD_{min}(\cL') $.
\end{lemma}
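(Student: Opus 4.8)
The plan is to prove the chain of inequalities $\PBTD(\cL) \ge \PBTD(\cL') \ge \TD_{min}(\cL')$ in two separate steps. The first inequality is immediate from monotonicity: since $\cL'$ is a finite subclass of $\cL$, Lemma~\ref{lem:monotonicity} gives directly that $\PBTD(\cL') \le \PBTD(\cL)$, which is the left-hand inequality. So essentially all the work lies in establishing the second inequality, $\PBTD(\cL') \ge \TD_{min}(\cL')$, where we may now work entirely inside the finite class $\cL'$.

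For the second inequality, I would argue as follows. Fix an arbitrary strict partial order $\prec$ on $\cL'$; by the definition of $\PBTD(\cL')$ as an infimum over all such orders (Definition~\ref{def:td-best-le}), it suffices to show that $\PBTD(\cL',\prec) \ge \TD_{min}(\cL')$ for every $\prec$. Since $\cL'$ is finite, the strict partial order $\prec$ must have at least one minimal element---call it $L_0$---i.e., a concept with $\cL'_{\prec L_0} = \eset$. For this minimal concept, the set of concepts over which $L_0$ is preferred is empty, so by equation~\eqref{eq:td-prec-L} we have $\PBTD(L_0,\cL',\prec) = \TD(L_0, \cL' \sm \eset) = \TD(L_0,\cL')$. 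Hence a teaching set for $L_0$ in the preference-based model must in fact be an ordinary teaching set for $L_0$ w.r.t.~$\cL'$; the preference relation grants no exemption at a minimal element.

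Putting this together, for the chosen $\prec$ we have
\[
\PBTD(\cL',\prec) = \sup_{L \in \cL'}\PBTD(L,\cL',\prec) \ge \PBTD(L_0,\cL',\prec) = \TD(L_0,\cL') \ge \inf_{L \in \cL'}\TD(L,\cL') = \TD_{min}(\cL').
\]
Taking the infimum over all strict partial orders $\prec$ on $\cL'$ then yields $\PBTD(\cL') \ge \TD_{min}(\cL')$, as desired. Combining the two steps completes the proof.

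The only subtlety---and the step I would flag as the conceptual core---is the guaranteed existence of a minimal element of $\prec$, which is exactly where finiteness of $\cL'$ is used: an infinite partial order (such as the half-intervals mentioned in the introduction) may have no minimal element, and indeed that is precisely why the $\TD_{min}$ lower bound need not propagate to the full infinite class and why the PBTD can be small even when $\TD_{min}$ behaves badly. For a finite class every nonempty subset has a minimal element, so no appeal to Zorn's lemma or the order-extension principle is needed here. Everything else is bookkeeping with the definitions of $\sup$ and $\inf$.
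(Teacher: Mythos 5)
Your proof is correct and follows essentially the same route as the paper's: the first inequality via monotonicity (Lemma~\ref{lem:monotonicity}), and the second by picking a $\prec$-minimal element $L_0$ of the finite class, applying Equation~(\ref{eq:td-prec-L}) to see that $\PBTD(L_0,\cL',\prec)=\TD(L_0,\cL')\ge\TD_{min}(\cL')$, and then taking the infimum over $\prec$. Your closing remark correctly identifies finiteness as the essential hypothesis, exactly as the paper does.
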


\begin{proof}
The first inequality holds because $\PBTD$ is monotonic. The second 
inequality follows from the fact that a finite partially ordered set
must contain a minimal element. Thus, for any fixed choice of $\prec$,
$\cL'$ must contain a concept $L'$ such that $\cL'_{\prec L'} = \eset$.
Hence, 
\[
\PBTD(\cL',\prec) \ge \PBTD(L',\cL',\prec)  \stackrel{(\ref{eq:td-prec-L})}{=}
\TD(L',\cL'\sm\cL'_{\prec L'}) = \TD(L',\cL') \ge \TD_{min}(\cL') \enspace .
\]
Since this holds for any choice of $\prec$, we 
get $\PBTD(\cL') \ge \TD_{min}(\cL')$, as desired.
\end{proof}

\section{Preference-based versus Recursive Teaching} \label{sec:rtd}

The preference-based teaching dimension is a relative of the
recursive teaching dimension. In fact, both notions coincide on 
finite classes, as we will see shortly. %But, for the convenience of the reader, we 
We first recall the definitions of the recursive 
teaching dimension and of some related notions~\cite{ZLHZ2011,MGZ2014}.

A {\em teaching sequence for $\cL$} is a sequence of the 
form $\cS = (\cL_i,d_i)_{i\ge1}$ where $\cL_1,\cL_2,\cL_3,\ldots$ 
form a partition of $\cL$ into non-empty sub-classes and, for every $i\ge1$, 
we have that 
\begin{equation} \label{eq:rtd}
d_i = \sup_{L\in\cL_i}\TD\left(L,\cL\sm\cup_{j=1}^{i-1}\cL_j\right) \enspace .
\end{equation}
If, for every $i\ge 1$, $d_i$ is the supremum over all $L \in \cL_i$ 
of the smallest size of a \emph{positive teaching set} for $L$ 
w.r.t.\ $\cup_{j\geq i}\cL_j$ (and $d_i = \infty$ if some $L \in \cL_i$ 
does not have a positive teaching set w.r.t.\ $\cup_{j\ge i}\cL_j$), 
then $\cS$ is said to be a \emph{positive teaching sequence for $\cL$}.  
The {\em order} of a teaching sequence or a positive teaching
sequence $\cS$ (possibly $\infty$) is defined as $\ord(\cS) = \sup_{i\ge1}d_i$. 
The {\em recursive teaching dimension of $\cL$} (possibly $\infty$) 
is defined as the order of the teaching sequence of lowest order for $\cL$. 
More formally, $\RTD(\cL) = \inf_{\cS}\ord(\cS)$ where $\cS$
ranges over all teaching sequences for $\cL$. 
Similarly, $\RTD^+(\cL) = \inf_{\cS}\ord(\cS)$, where
$\cS$ ranges over all positive teaching sequences for $\cL$.
Note that the following holds for every $\cL' \seq \cL$ and for every teaching 
sequence $\cS = (\cL_i,d_i)_{i\ge1}$ for $\cL'$ such that $\ord(\cS) = \RTD(\cL')$:
\begin{equation} \label{eq:rtd-tdmin}
\RTD(\cL) \ge \RTD(\cL') = \ord(\cS) \ge 
d_1 = \sup_{L\in\cL_1}\TD(L,\cL') \ge \TD_{min}(\cL') \enspace .
\end{equation}

Note an important difference between $\PBTD$ and $\RTD$: while $\RTD(\cL) \ge$ $\TD_{min}$ $(\cL')$ 
for \emph{all}\/ $\cL'\subseteq\cL$, in general the same holds for $\PBTD$ only when restricted to finite $\cL'$, 
cf.\ Lemma~\ref{lem:lb-tdmin}. This difference will become evident in the proof of Lemma~\ref{lem:huge-gap}.

The {\em depth} of $L \in \cL$ w.r.t.~a strict partial order 
imposed on $\cL$ is defined as the length of the longest 
chain in $(\cL,\prec)$ that ends with the $\prec$-maximal element $L$ (resp.~as $\infty$
if there is no bound on the length of these chains).
The recursive teaching dimension is related to the 
preference-based teaching dimension as follows:

\begin{lemma} \label{lem:rtd-pbtd}
$\RTD(\cL) = \inf_{\prec}\PBTD(\cL,\prec)$ 
and $\RTD^+(\cL) = \inf_{\prec}\PBTD^+(\cL,\prec)$ where $\prec$ ranges
over all strict partial orders on $\cL$ that satisfy the following
``finite-depth condition'': every $L \in \cL$ has a finite depth
w.r.t.~$\prec$.
\end{lemma}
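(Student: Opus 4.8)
The plan is to prove the identity by establishing the two inequalities $\inf_{\prec}\PBTD(\cL,\prec)\le\RTD(\cL)$ and $\RTD(\cL)\le\inf_{\prec}\PBTD(\cL,\prec)$, where $\prec$ ranges over the finite-depth strict partial orders, via a two-way translation between teaching sequences and finite-depth preference relations. The one fact I will use repeatedly is the anti-monotonicity of the classical teaching dimension in its class argument: if $L\in\cL''\seq\cL'$, then every teaching set for $L$ w.r.t.~$\cL'$ is also one w.r.t.~$\cL''$, so $\TD(L,\cL'')\le\TD(L,\cL')$; the same holds verbatim for \emph{positive} teaching sets. I will carry out the argument for $\RTD$ in detail, and the argument for $\RTD^+$ is then identical word for word once ``teaching set'' is replaced by ``positive teaching set''.

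For the first inequality I would start from an arbitrary teaching sequence $\cS=(\cL_i,d_i)_{i\ge1}$ and read a preference relation directly off its layers: for $L\in\cL_i$ and $L'\in\cL_j$ put $L'\prec L$ iff $j<i$. This $\prec$ is irreflexive, asymmetric and transitive, hence a strict partial order, and any chain ending at some $L\in\cL_i$ visits strictly increasing layer indices, so its length is at most $i$; thus $\prec$ has finite depth. By construction $\cL_{\prec L}=\cup_{j=1}^{i-1}\cL_j$ for $L\in\cL_i$, whence $\PBTD(L,\cL,\prec)=\TD\!\left(L,\cL\sm\cup_{j=1}^{i-1}\cL_j\right)$. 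Taking the supremum over $L\in\cL_i$ recovers exactly $d_i$, and taking the supremum over $i$ gives $\PBTD(\cL,\prec)=\ord(\cS)$. As this $\prec$ is finite-depth, $\inf_{\prec}\PBTD(\cL,\prec)\le\ord(\cS)$, and letting $\cS$ range over all teaching sequences yields $\inf_{\prec}\PBTD(\cL,\prec)\le\RTD(\cL)$.

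For the reverse inequality I would start from an arbitrary finite-depth strict partial order $\prec$ and let $\cL_i$ be the set of concepts of depth exactly $i$. The finite-depth condition guarantees every concept lands in some $\cL_i$ with $i\ge1$, so these classes partition $\cL$; moreover they form an initial segment of indices and are non-empty (possibly finitely many), because if $L$ has depth $i\ge2$ then the penultimate element of a longest chain ending at $L$ has depth exactly $i-1$: it is $\ge i-1$ via the sub-chain, and $\le i-1$ since otherwise $L$ would have depth $>i$. The crucial inclusion is that $L'\prec L$ forces $\mathrm{depth}(L')<\mathrm{depth}(L)$, so $\cL_{\prec L}\seq\cup_{j=1}^{i-1}\cL_j$ for $L\in\cL_i$; passing to complements and applying anti-monotonicity gives $\TD\!\left(L,\cL\sm\cup_{j=1}^{i-1}\cL_j\right)\le\TD(L,\cL\sm\cL_{\prec L})=\PBTD(L,\cL,\prec)$. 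Hence the sequence $\cS=(\cL_i,d_i)$ with $d_i$ given by~(\ref{eq:rtd}) satisfies $d_i\le\PBTD(\cL,\prec)$ for all $i$, so $\ord(\cS)\le\PBTD(\cL,\prec)$ and therefore $\RTD(\cL)\le\PBTD(\cL,\prec)$. Taking the infimum over finite-depth $\prec$ completes the equality.

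The routine parts are the two computations of $\cL_{\prec L}$. The step I expect to require the most care is the direction-two construction: verifying that the depth-layers are genuinely non-empty and contiguous (so that $\cS$ is a legitimate teaching sequence) and invoking anti-monotonicity of $\TD$ in the correct direction after establishing $\cL_{\prec L}\seq\cup_{j=1}^{i-1}\cL_j$.
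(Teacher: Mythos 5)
Your proof is correct. The paper states Lemma~\ref{lem:rtd-pbtd} without giving a proof, so there is no argument of the authors' to compare against; your two-way translation --- reading a finite-depth partial order off the layers of a teaching sequence so that $\cL_{\prec L}=\cup_{j=1}^{i-1}\cL_j$ exactly, and conversely partitioning $\cL$ into depth classes and combining the inclusion $\cL_{\prec L}\seq\cup_{j=1}^{i-1}\cL_j$ with anti-monotonicity of $\TD$ in the class argument --- is precisely the argument the lemma is implicitly relying on, and you correctly identify and settle the one delicate point, namely that the depth layers are non-empty and contiguous so that they form a legitimate teaching sequence. The observation that everything carries over verbatim for positive teaching sets gives the $\RTD^+$ statement as claimed.
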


The following is an immediate consequence of Lemma~\ref{lem:rtd-pbtd}
and the trivial observation that the finite-depth condition is always
satisfied if $\cL$ is finite:

\begin{corollary} \label{cor:rtd-pb}
$\PBTD(\cL) \le \RTD(\cL)$,
with equality if $\cL$ is finite.
\end{corollary}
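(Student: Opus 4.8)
The plan is to derive Corollary~\ref{cor:rtd-pb} directly from Lemma~\ref{lem:rtd-pbtd} by comparing the two infima involved, so essentially all the work will consist of observing which set of partial orders we are optimizing over. Recall that by Definition~\ref{def:td-best-le} we have $\PBTD(\cL) = \inf_\prec \PBTD(\cL,\prec)$ where $\prec$ ranges over \emph{all} strict partial orders on $\cL$, whereas Lemma~\ref{lem:rtd-pbtd} expresses $\RTD(\cL)$ as the same infimum but taken only over those $\prec$ satisfying the finite-depth condition.

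First I would prove the inequality $\PBTD(\cL) \le \RTD(\cL)$. Since the infimum defining $\RTD(\cL)$ ranges over a \emph{subset} of the partial orders over which the infimum defining $\PBTD(\cL)$ ranges (namely the finite-depth ones), and in both cases the quantity being minimized is $\PBTD(\cL,\prec)$, the infimum over the larger collection can only be smaller or equal. Formally, every finite-depth strict partial order is in particular a strict partial order, so
\[
\PBTD(\cL) = \inf_{\prec}\PBTD(\cL,\prec) \le \inf_{\prec:\,\mathrm{fin.\ depth}}\PBTD(\cL,\prec) = \RTD(\cL),
\]
where the last equality is exactly Lemma~\ref{lem:rtd-pbtd}. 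This handles the general inequality.

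Next I would address the equality in the finite case. When $\cL$ is finite, every element trivially has finite depth w.r.t.\ any strict partial order, because any chain in a finite poset has length at most $|\cL|$. Hence the finite-depth condition imposes no restriction at all: the collection of finite-depth partial orders coincides with the collection of \emph{all} strict partial orders on $\cL$. Consequently the two infima in the displayed chain above are taken over identical index sets, forcing $\PBTD(\cL) = \RTD(\cL)$.

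I do not expect any genuine obstacle here, since the statement is explicitly flagged in the excerpt as an immediate consequence of Lemma~\ref{lem:rtd-pbtd}; the only point requiring a moment's care is the justification that finiteness of $\cL$ makes the finite-depth condition vacuous, which reduces to the elementary fact that chains in a finite poset are bounded in length by the cardinality of the poset. Everything else is purely a matter of unwinding the definitions of the two infima and observing that restricting the admissible partial orders can only increase (weakly) the infimum.
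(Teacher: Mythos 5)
Your proposal is correct and follows exactly the paper's own reasoning: the paper states the corollary as an immediate consequence of Lemma~\ref{lem:rtd-pbtd} together with the observation that the finite-depth condition is vacuous for finite $\cL$. Your unwinding of the two infima (the one for $\PBTD$ ranging over all strict partial orders, the one for $\RTD$ over the finite-depth ones) is precisely the intended argument.
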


\noindent
While $\PBTD(\cL)$ and $\RTD(\cL)$ refer to the same finite number when $\cL$ is finite, there are classes for which $\RTD$ is finite and yet larger than $\PBTD$, as Lemma~\ref{lem:huge-gap} will show. Generally, for infinite classes, the gap between $\PBTD$ and $\RTD$
can be arbitrarily large:

\begin{lemma} \label{lem:huge-gap}
There exists an infinite class $\cL_\infty$ 
of VC-dimension $1$ such that $\PBTD^+$ $(\cL_\infty)$ $=1$
and $\RTD(\cL_\infty)=\infty$. Moreover, for every $k\ge1$,
there exists an infinite class $\cL_k$ such 
that $\PBTD^+(\cL_k)=1$ and $\RTD(\cL_k)=k$.
\end{lemma}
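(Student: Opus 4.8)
The plan is to exhibit explicit classes built from \emph{principal down-sets of posets}, a shape tailored to make $\PBTD^+=1$ transparent while leaving enough freedom to dial in the value of $\RTD$. For a poset $(P,\le)$ and $v\in P$ write $D_v=\{u\in P:u\le v\}$ and take $\cL=\{D_v:v\in P\}$ over universe $P$. The key observation is that each $D_v$ has its ``corner'' $v$ as a representative: since $v\in D_w\iff v\le w\iff D_v\subseteq D_w$, the concepts containing $v$ are exactly $D_v$ and its supersets. Hence, letting $\prec$ be reverse strict inclusion (so $A\prec B\iff A\supsetneq B$, which trivially extends $\supset$ as Lemma~\ref{lem:ts-pos} demands), the single positive example $v$ distinguishes $D_v$ from every proper subset and every incomparable concept, i.e.\ from every concept in $\cL\setminus\cL_{\prec D_v}$. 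This gives $\PBTD^+(\cL,\prec)=1$ in one stroke, and $\PBTD^+(\cL)\ge1$ holds whenever $|\cL|\ge2$ (with the empty sample all concepts are consistent, so at most one concept can be taught by the empty set). Thus $\PBTD^+(\cL)=1$ for all classes below, and the whole difficulty is pushed into the choice of $P$.

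For $\cL_\infty$ I would take $P=(\ratnum,\le)$, so that $\cL_\infty=\{D_q:q\in\ratnum\}$ is a \emph{chain}; being a chain it cannot shatter two points but clearly shatters one, giving VC-dimension $1$. For the lower bound I would argue directly on teaching sequences: in any $(\cL_i,d_i)_{i\ge1}$ the part $\cL_1$ is nonempty, and teaching any $D_q$ within all of $\cL_\infty$ already forces excluding the supersets $D_{q'}$ for $q'>q$; since $D_{q'}\setminus D_q=(q,q']\cap\ratnum$ and any finite set of (necessarily $>q$) negative examples has a least element $\mu>q$, every $D_{q'}$ with $q<q'<\mu$ survives. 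Hence $\TD(D_q,\cL_\infty)=\infty$, so $d_1=\infty$ and $\RTD(\cL_\infty)=\infty$, while $\PBTD^+(\cL_\infty)=1$ by the first paragraph.

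For the finite values I would take $P=(\nats^k,\le)$ with the coordinatewise order, so $\cL_k=\{D_v:v\in\nats^k\}$ is infinite with $\PBTD^+(\cL_k)=1$, and the crux is the double bound $\RTD(\cL_k)=k$. For $\RTD\ge k$ I would show $\TD_{min}(\cL_k)\ge k$ and invoke~(\ref{eq:rtd-tdmin}): the $k$ covers $D_{v+e_1},\dots,D_{v+e_k}$ of $D_v$ are proper supersets, so each must be excluded by a \emph{negative} example from $D_{v+e_i}\setminus D_v=\{u:u_i=v_i+1,\ u_j\le v_j\ (j\ne i)\}$; these witness sets are pairwise disjoint (they disagree on which coordinate is raised), so no single example kills two covers and $\TD(D_v,\cL_k)\ge k$ for every $v$. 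For $\RTD\le k$ I would peel concepts by increasing coordinate sum, $\cL_i=\{D_v:\sum_j v_j=k+i-1\}$, and teach $D_v$ inside $\bigcup_{j\ge i}\cL_j$ using the $k$ negatives $m_l=\mathbf 1+v_l e_l$ (the point with $l$-th coordinate $v_l+1$ and all others $1$), each consistent with $D_v$. Since $m_l\in D_w\iff w_l>v_l$, these exclude every $D_w$ with $w\not\le v$, leaving only $D_w$ with $w\le v$; but such a $w$ with $\sum_j w_j\ge\sum_j v_j$ forces $w=v$, so $D_v$ is isolated and $d_i\le k$. Combining the bounds yields $\RTD(\cL_k)=k$.

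The step I expect to be the main obstacle is the $\RTD\le k$ upper bound for $\cL_k$: one must choose the peeling order and the isolating negatives so that, once the smaller-sum subsets are gone, the $k$ axis-negatives $m_l$ simultaneously eliminate all remaining supersets \emph{and} all remaining incomparable concepts with no positive example. The disjointness argument for $\RTD\ge k$ is routine by comparison, and the $\PBTD^+=1$ claims reduce entirely to the ``corner representative'' observation, so the write-up should concentrate on verifying that the chosen $m_l$ leave exactly $D_v$ consistent inside each tail $\bigcup_{j\ge i}\cL_j$.
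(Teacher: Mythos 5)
Your proof is correct, but for the family $\cL_k$ it takes a genuinely different route from the paper. For $\cL_\infty$ the two arguments essentially coincide: the paper uses the half-intervals $\{[0,a]:0\le a<1\}$ and you use the down-sets of $(\ratnum,\le)$, and in both cases the density of the chain kills every finite teaching set while a single positive ``endpoint'' example suffices under the subset-preference. For $\cL_k$, however, the paper works over $[0,2)$ with concepts $I_a=[0,a]\cup\{a_1,\ldots,a_k\}$, where $a_i$ encodes every $k$-th bit of the binary expansion of $a$; the upper bound $\RTD\le k$ comes from the fact that $\{a_1,\ldots,a_k\}$ pins down $a$, and the lower bound $\RTD\ge k$ needs a rather delicate perturbation argument (flipping infinitely available zero bits in a residue class to dodge finitely many negative examples, within a carefully chosen subclass of finitely-supported expansions). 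Your boxes $D_v\subseteq\nats^k$ replace this with two clean combinatorial facts: the $k$ upper covers $D_{v+e_i}$ have pairwise disjoint differences from $D_v$, forcing $\TD_{\min}(\cL_k)\ge k$ directly and hence $\RTD(\cL_k)\ge k$ via~(\ref{eq:rtd-tdmin}); and peeling by coordinate sum with the $k$ negatives $m_l=\mathbf 1+v_le_l$ gives an explicit teaching plan of order $k$ (your worry about incomparable concepts is unfounded, since $m_l\in D_w\iff w_l>v_l$ already eliminates every $w\not\le v$, and the sum constraint handles $w<v$). Your construction is simpler and more self-contained; what the paper's buys is that the gap is exhibited by a class of subsets of the real line closely related to the half-interval example, which it later reuses to argue $\RTD(\cH_d)=\infty$. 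Both establish the lemma in full.
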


\iffalse
The complete proof of Lemma~\ref{lem:huge-gap} is given
in Appendix~\ref{app:rtd}. Here we only specify the
classes $\cL_\infty$ and $\cL_k$ that are employed in
this proof:
\begin{itemize}
\item
Choose $\cL_\infty$ as the class of 
half-intervals $[0,a]$, where $0 \le a < 1$, 
over the universe $[0,1)$.\footnote{$\RTD(\cL_\infty)=\infty$
had been observed by~\cite{MSWY2015} already.} 
\item
Let $\cX = [0,2)$. For 
each $a = \sum_{n \ge 1}\alpha_n2^{-n}\in [0,1)$ and for 
all $i=1,\ldots,k$, let $1 \le a_i < 2$ be given 
by $a_i = 1+\sum_{n \ge 0}\alpha_{kn+i}2^{-n}$. Finally,
let $I_a = [0,a) \cup \{a_1,\ldots,a_k\} \seq \cX$
and let $\cL_k = \{I_a: 0 \le a < 1\}$.
\end{itemize}. 
\fi

\begin{proof}
We first show that there exists a class of VC-dimension $1$,
say $\cL_\infty$, such that $\PBTD^+(\cL_\infty)=1$
while $\RTD(\cL_\infty)=\infty$. To this end,
let $\cL_\infty$ be the family of closed half-intervals over $[0,1)$, 
i.e., $\cL_\infty = \{ [0,a]: 0 \le a < 1\}$. We first prove 
that $\PBTD^+(\cL_\infty) = 1$. Consider the preference 
relation given by $[0,b] \prec [0,a]$ iff $a<b$. 
Then, for each $0 \le a <1$, we have 
\[
\PBTD([0,a],\cL_\infty,\prec) \stackrel{(\ref{eq:td-prec-L})}{=}
\TD([0,a],\{[0,b]:\ 0 \le b \le a\}) = 1
\] 
because the single example $(a,+)$ suffices for 
distinguishing $[0,a]$ from any interval $[0,b]$ 
with $b<a$. 

It was observed by~\cite{MSWY2015} already
that $\RTD(\cL_\infty)=\infty$ because every teaching
set for some $[0,a]$ must contain an infinite sequence 
of distinct reals that converges from above to $a$.
Thus, using Equation~(\ref{eq:rtd-tdmin}) 
with $\cL'=\cL$, we have $\RTD(\cL_\infty) \ge \TD_{min}(\cL_\infty) = \infty$.

Next we show that, for every $k\ge1$, there exists 
a class, say $\cL_k$, such that $\PBTD^+$ $(\cL_k)=1$ 
while $\RTD(\cL_k)=k$. To this end, let $\cX = [0,2)$.
For each $a \in [0,1)$, fix a binary representation
$\sum_{n \ge 1}\alpha_n2^{-n}$ of $a$,  
%For each $a = \sum_{n \ge 1}\alpha_n2^{-n}\in [0,1)$, 
where $\alpha_n\in\{0,1\}$ are binary coefficients, and for 
all $i=1,\ldots,k$, let $1 \le a_i < 2$ be given 
by $a_i = 1+\sum_{n \ge 0}\alpha_{kn+i}2^{-kn+i}$.\footnote{Note that, for $a=\frac{m}{2^N}$ with $m,N\in\mathbb{N}$, there are two binary representations. We can pick either one to define the $\alpha_n$ and $a_i$ values.}
Let $A$ be the set of all $a \in [0,1)$ such that
if $\sum_{n \ge 1}\alpha_n2^{-n}$ is the binary representation
of $a$ fixed earlier, then for all $i \in \{1,\ldots,k\}$,
there is some $n \geq 0$ for which $\alpha_{nk+i} \neq 0$. Finally,
let $I_a = [0,a] \cup \{a_1,\ldots,a_k\} \seq \cX$
and let $\cL_k = \{I_a: 0 \le a < 1 \wedge a \in A\}$. %\mbox{$a$
%has finitely many $1$'s in its binary representation} \wedge 
%[a = \sum_{n \ge 1}\alpha_n2^{-n} \Rightarrow
%(\forall i \geq \{1,\ldots,k\})(\exists n \geq 0)[\alpha_{nk+i} \neq 0]\}$. 
Clearly $\PBTD^+(\cL_k)=1$ because, using the preference
relation given by $I_b \prec I_a$ iff $a<b$, we can 
teach $I_a$ w.r.t.~$\cL_k$ by presenting the single
example $(a,+)$ (the same strategy as for half-intervals).
Moreover, note that $I_a$ is the only concept in $\cL_k$
that contains $a_1,\ldots,a_k$, i.e., $\{a_1,\ldots,a_k\}$
is a positive teaching set for $I_a$ w.r.t.~$\cL_k$.
It follows that $\RTD(\cL_k) \le \TD(\cL_k) \le k$.
It remains to show that $\RTD(\cL_k) \ge k$. To this end,
we consider the subclass $\cL'_k$  consisting of all 
concepts $I_a$ such that $a \in A$ and $a$ has only finitely many $1$'s 
in its binary representation $(\alpha_n)_{n\in\mathbb{N}}$, i.e., all but finitely many of the $\alpha_n$ are zero. 
Pick any concept $I_a \in \cL'_k$.
Let $T$ be any set of at most $k-1$ examples labeled 
consistently according to $I_a$. At least one of the 
positive examples $a_1,\ldots,a_k$ must be missing, 
say $a_i$ is missing. Let $J_{a,i}$ be the set of indices 
given by $J_{a,i} = \{n\in\nats_0:\ \alpha_{kn+i}=0\}$.
The following observations show that there 
exists some $a' \in \cX\sm\{a\}$ such that $I_{a'}$ is
consistent with $T$. 
\begin{itemize}
\item
When we set some (at least one but only finitely many) 
of the bits $\alpha_{kn+i}$ with $n \in J_{a,i}$ from $0$ to $1$ 
(while keeping fixed the remaining bits of the binary 
representation of $a$), then we obtain a number $a' \neq a$ 
such that $I_{a'}$ is still consistent with all positive 
examples in $T$ (including the example $(a,+)$ which might 
be in $T$).
\item
Note that $J_{a,i}$ is an infinite set. It is therefore possible
to choose the bits that are set from $0$ to $1$ in such a
fashion that the finitely many bit patterns represented
by the numbers in $T^- \cap [1,2)$ are avoided.
\item
It is furthermore possible to choose the bits that are
set from $0$ to $1$ in such a fashion that the resulting
number $a'$ is as close to $a$ as we like so that $I_{a'}$
is also consistent with the negative examples 
from $T^- \cap[0,1)$ and $a' \in A$.
\end{itemize}
It follows from this reasoning that no set with less
than $k$ examples can possibly be a teaching set for~$I_a$.
Since this holds for an arbitrary choice of $a$, we may 
conclude that 
$\RTD(\cL_k) \ge \RTD(\cL'_k) \ge \TD_{min}(\cL'_k) = k$.
\end{proof}

\section{Preference-based Teaching with Positive Examples Only} 
\label{sec:closure-operator}

The main purpose of this section is to relate
positive preference-based teaching to ``spanning sets" 
and ``closure operators", which are well-studied
concepts in the computational learning theory literature. 
Let $\cL$ be a concept class over the universe $\cX$.
We say that $S\seq\cX$ is a {\em spanning set} of $L\in\cL$ 
w.r.t.~$\cL$ if $S \seq L$ and any set in $\cL$ that 
contains $S$ must contain $L$ as well.\footnote{This 
generalizes the classical definition of a spanning 
set~\cite{HSW1990}, which is given 
w.r.t.~intersection-closed classes only.}
In other words, $L$ is the unique smallest concept in $\cL$ 
that contains $S$. We say that $S\seq\cX$ is a {\em weak spanning 
set} of $L\in\cL$ w.r.t.~$\cL$ if $S\subseteq L$ and $S$ is not 
contained in any proper subset of $L$ in $\cL$.\footnote{Weak 
spanning sets have been used in the field of recursion-theoretic 
inductive inference under the name ``tell-tale sets''~\cite{Ang1980}.} 
We denote by $I(\cL)$ (resp.~$I'(\cL)$) the smallest number $k$ 
such that every concept $L \in \cL$ has a spanning set (resp.~a 
weak spanning set) w.r.t.~$\cL$ of size at most $k$. Note that $S$ 
is a spanning set of $L$ w.r.t.~$\cL$ iff $S$ distinguishes $L$ 
from all concepts in $\cL$ except for supersets of $L$, i.e., 
iff $S$ is a positive teaching set for $L$ w.r.t.~$(\cL,\supset)$. 
Similarly, $S$ is a weak spanning set of $L$ w.r.t.~$\cL$ 
iff $S$ distinguishes $L$ from all its proper subsets in $\cL$ 
(which is necessarily the case when $S$ is a positive teaching 
set). These observations can be summarized as follows:

\begin{equation} \label{eq:span-pbtd}
I'(\cL) \le \PBTD^+(\cL) \le \PBTD^+(\cL,\supset) \le I(\cL) 
\enspace .
\end{equation}

The last two inequalities are straightforward. The inequality $I'(\cL) \le \PBTD^+(\cL)$ follows from Lemma~\ref{lem:ts-pos}, which implies that no concept $L$ can have a preference-based teaching set $T$ smaller than its smallest weak spanning set. Such a set $T$ would be consistent with some proper subset of $L$, which is impossible by Lemma~\ref{lem:ts-pos}.

Suppose $\cL$ is intersection-closed. 
Then $\cap_{L\in\cL:S \seq L}L$ is the unique smallest
concept in $\cL$ containing $S$. If $S \seq L_0$ is 
a weak spanning set of $L_0 \in \cL$, 
then $\cap_{L\in\cL:S \seq L}L = L_0$ because, on the one hand,
$\cap_{L\in\cL:S \seq L}L \seq L_0$ and, on the other hand,
no proper subset of $L_0$ in $\cL$ contains $S$.
Thus the distinction between spanning sets and weak spanning
sets is blurred for intersection-closed classes:

\begin{lemma} \label{lem:cap-closed}
Suppose that $\cL$ is intersection-closed. 
Then $I'(\cL) = \PBTD^+(\cL) = I(\cL)$.
\end{lemma}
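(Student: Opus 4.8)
The plan is to prove the chain of equalities $I'(\cL) = \PBTD^+(\cL) = I(\cL)$ by combining the general inequality~(\ref{eq:span-pbtd}) with a single additional fact that holds for intersection-closed classes, namely that every weak spanning set is automatically a spanning set. The preamble to the lemma has essentially supplied this fact already, so my job is to assemble the pieces. First I would recall the standing inequality $I'(\cL) \le \PBTD^+(\cL) \le \PBTD^+(\cL,\supset) \le I(\cL)$ from~(\ref{eq:span-pbtd}); the whole statement reduces to showing $I(\cL) \le I'(\cL)$, which would collapse the entire chain to a single value.

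The key step is the observation established just before the lemma: if $\cL$ is intersection-closed and $S \seq L_0$ is a weak spanning set of some $L_0 \in \cL$, then $L_0 = \bigcap_{L\in\cL:\,S \seq L}L$ is the unique smallest concept in $\cL$ containing $S$, which is exactly the definition of $S$ being a spanning set of $L_0$. I would write this out cleanly: intersection-closedness guarantees that $\bigcap_{L\in\cL:\,S \seq L}L \in \cL$ and is contained in every concept containing $S$, hence contained in $L_0$; and since no \emph{proper} subset of $L_0$ in $\cL$ contains $S$ (this is what ``weak spanning'' means), the intersection cannot be a proper subset of $L_0$, forcing equality. Thus any set $S$ that contains $L_0$ in $\cL$ must contain $L_0 = \bigcap_{L:\,S\seq L}L$, so $S$ spans $L_0$.

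From this, the size comparison is immediate. For every $L \in \cL$, take a weak spanning set of minimum size (at most $I'(\cL)$); by the previous step it is also a spanning set of $L$, so $L$ admits a spanning set of size at most $I'(\cL)$. Taking the supremum over $L \in \cL$ gives $I(\cL) \le I'(\cL)$. Combined with~(\ref{eq:span-pbtd}), this yields $I'(\cL) = \PBTD^+(\cL) = \PBTD^+(\cL,\supset) = I(\cL)$, and in particular the three quantities named in the lemma coincide.

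I do not anticipate a serious obstacle here, since the content is genuinely the single equality $I(\cL) \le I'(\cL)$ and its proof is the short argument already rehearsed in the text. The only point demanding care is the justification that the intersection $\bigcap_{L\in\cL:\,S \seq L}L$ is itself a member of $\cL$ and is well-defined (the index set is nonempty because $S \seq L_0$ guarantees at least $L_0$ participates), which is exactly where intersection-closedness is used and where the argument would fail for a non-intersection-closed class. I would make sure to flag that this is the \emph{only} place the hypothesis enters, so the reader sees precisely why weak and ordinary spanning sets cease to differ under this assumption.
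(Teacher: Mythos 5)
Your proposal is correct and follows essentially the same route as the paper, which places the entire argument in the paragraph preceding the lemma: intersection-closedness makes $\bigcap_{L\in\cL:\,S\seq L}L$ the unique smallest concept containing $S$, so every weak spanning set is a spanning set, giving $I(\cL)\le I'(\cL)$, and the chain in~(\ref{eq:span-pbtd}) collapses. (Only a wording slip to fix: ``any set $S$ that contains $L_0$'' should read ``any concept in $\cL$ that contains $S$ must contain $L_0$''.)
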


\begin{example}
Let $\cR_d$ denote the class of $d$-dimensional axis-parallel
hyper-rectangles (= $d$-dimensio- nal boxes). This class is
intersection-closed and clearly $I(\cR_d)=2$.  
Thus $\PBTD^+(\cR_d)=2$. 
\end{example}

A mapping $\cl:2^\cX \ra 2^\cX$ is said 
to be a {\em closure operator} on the universe $\cX$
if the following conditions hold for all sets $A,B \seq \cX$:
\[
A \seq B \impl \cl(A) \seq \cl(B)\ \mbox{ and }\ 
A \seq \cl(A) = \cl(\cl(A)) \enspace .
\]
The following notions refer to an arbitrary but fixed closure 
operator. The set $\cl(A)$ is called the {\em closure} of $A$. 
A set $C$ is said to be {\em closed}
if $\cl(C) = C$. It follows 
that precisely the sets $\cl(A)$ with $A \seq \cX$ are closed. With this notation, we observe the following lemma.

\begin{lemma} \label{lem:span-closure}
Let $\cC$ be the set of all closed subsets of $\cX$ under some closure operator $\cl$, and let $L\in\cC$. If $L = \cl(S)$, then $S$ is a spanning set of $L$ w.r.t.~$\cC$.
\end{lemma}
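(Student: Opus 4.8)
The plan is to prove directly that $S$ satisfies the two defining properties of a spanning set of $L$ with respect to $\cC$: namely (a) $S \seq L$, and (b) any closed set in $\cC$ containing $S$ must also contain $L$. Both properties should fall out immediately from the axioms of the closure operator together with the hypothesis $L = \cl(S)$.

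First I would verify containment $S \seq L$. Since $L = \cl(S)$ by hypothesis, and since the closure operator satisfies $A \seq \cl(A)$ for every $A \seq \cX$, applying this with $A = S$ gives $S \seq \cl(S) = L$, as required.

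Next I would establish the key spanning property. Let $L' \in \cC$ be any closed set with $S \seq L'$. Applying monotonicity of the closure operator (the property $A \seq B \impl \cl(A) \seq \cl(B)$) to the inclusion $S \seq L'$ yields $\cl(S) \seq \cl(L')$. Now $L' \in \cC$ means $L'$ is closed, so $\cl(L') = L'$, and by hypothesis $\cl(S) = L$. Combining these gives $L = \cl(S) \seq \cl(L') = L'$, so $L \seq L'$. This shows every closed set containing $S$ contains $L$, which is exactly the spanning condition.

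This proof is essentially mechanical, so I do not anticipate a genuine obstacle; the only point requiring mild care is to use the hypothesis $L \in \cC$ (equivalently $\cl(L) = L$) at the right moments — it guarantees both that $L$ is itself a member of the ambient class $\cC$ over which spanning is defined, and, via $\cl(L') = L'$, that the monotonicity step collapses cleanly. The whole argument is just the two closure axioms applied once each (idempotence/extensivity for part (a) and monotonicity plus closedness for part (b)), so the write-up will be short.
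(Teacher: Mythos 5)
Your proof is correct and follows essentially the same route as the paper's: the paper's one-line argument is exactly your step (b), $L = \cl(S) \seq \cl(L') = L'$ via monotonicity and closedness of $L'$, with the containment $S \seq L$ left implicit. No issues.
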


\begin{proof}
Suppose $L'\in\cC$ and $S\subseteq L'$. Then $L = \cl(S) \seq \cl(L') = L'$.
\end{proof}

%Let $\cL \seq \cC$ be a subclass of the class of closed 
%sets over the universe $\cX$. 
For every closed set $L \in \cL$, 
let $s_{cl}(L)$ denote the size (possibly $\infty$) of the 
smallest set $S \seq \cX$ such that $\cl(S) = L$. With this 
notation, we get the following (trivial but useful) result:

\begin{theorem} \label{th:span}
Given a closure operator, let $\cC[m]$ be the class 
of all closed subsets $C \seq \cX$ with $s_{cl}(C) \le m$. 
Then $\PBTD^+(\cC[m]) \le \PBTD^+(\cC[m],\supset) \le m$. 
Moreover, this holds with equality provided 
that $\cC[m] \sm \cC[m-1] \neq \eset$.
\end{theorem}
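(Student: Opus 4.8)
The plan is to establish the two inequalities of the first assertion separately and then close the gap under the stated hypothesis by invoking the weak-spanning-set lower bound from~(\ref{eq:span-pbtd}). The leftmost inequality $\PBTD^+(\cC[m]) \le \PBTD^+(\cC[m],\supset)$ is immediate from the definition of $\PBTD^+(\cC[m])$ in~(\ref{eq:td-plus-cL}) as an infimum over all strict partial orders, since $\supset$ is one such order on $\cC[m]$. For the rightmost inequality I would fix an arbitrary $L \in \cC[m]$ and use $s_{cl}(L) \le m$ to pick some $S \seq \cX$ with $\cl(S) = L$ and $|S| \le m$. By Lemma~\ref{lem:span-closure}, $S$ is a spanning set of $L$ w.r.t.~the class $\cC$ of \emph{all} closed sets; since $\cC[m] \seq \cC$ and $L \in \cC[m]$, the same $S$ is a spanning set of $L$ w.r.t.~the subclass $\cC[m]$ (any member of $\cC[m]$ containing $S$ is a member of $\cC$ containing $S$, hence contains $L$). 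By the characterization recorded just before~(\ref{eq:span-pbtd}), a spanning set w.r.t.~$\cC[m]$ is precisely a positive teaching set for $L$ w.r.t.~$(\cC[m],\supset)$, so $\PBTD^+(L,\cC[m],\supset) \le |S| \le m$; taking the supremum over $L$ yields $\PBTD^+(\cC[m],\supset) \le m$.

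For the equality claim I would produce the matching lower bound $\PBTD^+(\cC[m]) \ge m$ through the bound $I'(\cC[m]) \le \PBTD^+(\cC[m])$ from~(\ref{eq:span-pbtd}). Pick $L \in \cC[m] \sm \cC[m-1]$, so that $s_{cl}(L) = m$ exactly (it is $\le m$ because $L \in \cC[m]$, and $\ge m$ because $L \notin \cC[m-1]$). I claim $L$ has no weak spanning set of size at most $m-1$. Indeed, suppose $S \seq L$ with $|S| \le m-1$. Then $\cl(S)$ is closed with $s_{cl}(\cl(S)) \le |S| \le m-1$, so $\cl(S) \in \cC[m-1] \seq \cC[m]$; moreover $\cl(S) \seq \cl(L) = L$, and $\cl(S) \neq L$ (otherwise $s_{cl}(L) \le m-1$, contradicting $s_{cl}(L)=m$), so $\cl(S)$ is a proper subset of $L$ lying in $\cC[m]$. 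Since $S \seq \cl(S)$, the set $S$ is contained in a proper subset of $L$ belonging to $\cC[m]$, so $S$ is not a weak spanning set of $L$. Hence every weak spanning set of $L$ has size at least $m$, giving $I'(\cC[m]) \ge m$ and therefore $\PBTD^+(\cC[m]) \ge m$. Combined with the first assertion, $m \le \PBTD^+(\cC[m]) \le \PBTD^+(\cC[m],\supset) \le m$, forcing all three quantities to equal $m$.

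I expect no serious obstacle here; the statement is essentially a bookkeeping exercise resting on the two closure axioms. The one point requiring genuine care is the transfer of the spanning-set property from the full class $\cC$ down to the subclass $\cC[m]$ in the upper bound, together with the dual observation in the lower bound that the witness $\cl(S)$ actually lands inside $\cC[m]$ and is a \emph{proper} subset of $L$. Both hinge on monotonicity and idempotence of $\cl$ and on the elementary fact that $s_{cl}(\cl(S)) \le |S|$, so I would state these two transfers explicitly rather than leave them implicit.
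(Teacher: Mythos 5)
Your proposal is correct and follows essentially the same route as the paper: the upper bound via Lemma~\ref{lem:span-closure} and the characterization of spanning sets as positive teaching sets w.r.t.~$(\cC[m],\supset)$, and the lower bound by showing that a witness $L$ with $s_{cl}(L)=m$ admits no weak spanning set of size $m-1$ because $\cl(S)$ is a proper subset of $L$ lying in $\cC[m]$. Your version merely makes explicit two transfers the paper leaves implicit (restricting the spanning-set property from $\cC$ to $\cC[m]$, and checking $\cl(S)\in\cC[m-1]$), and routes the lower bound through $I'(\cC[m])$ rather than citing Lemma~\ref{lem:ts-pos} directly, which amounts to the same argument.
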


\begin{proof}
The inequality $\PBTD^+(\cC[m],\supset) \le m$ follows directly
from Equation~(\ref{eq:span-pbtd}) and Lemma~\ref{lem:span-closure}. \\
Pick a concept $C_0 \in \cC[m]$ such that $s_{cl}(C_0) = m$.
Then any subset $S$ of $C_0$ of size less than $m$ spans only
a proper subset of $C_0$, i.e., $\cl(S) \subset C_0$.
Thus $S$ does not distinguish $C_0$ from $\cl(S)$. However, by Lemma~\ref{lem:ts-pos}, 
any preference-based learner must strictly prefer $\cl(S)$ over $C_0$.
It follows that there is no positive teaching set of size
less than $m$ for $C_0$ w.r.t.~$\cC[m]$.
\end{proof}

Many natural classes can be cast as classes of the form $\cC[m]$
by choosing the universe and the closure operator appropriately; the following examples illustrate the usefulness of Theorem~\ref{th:span} in that regard.

\begin{example}\label{exmp:pbtdpluslinset}
Let 
\[ \LINSET_k = \{\spn{G}: (G \subset \nats) \wedge (1 \le |G| \le k)\} \]
where $\spn{G} = \left\{\sum_{g \in G}a(g)g: a(g)\in\nats_0\right\}$.
In other words, $\LINSET_k$ is the set of all non-empty linear subsets of $\mathbb{N}_0$ that are generated by at most $k$ generators. 
Note that the mapping $G \mapsto \spn{G}$ is a closure operator over 
the universe $\nats_0$. Since 
obviously $\LINSET_k \sm \LINSET_{k-1} \neq \eset$,
we obtain $\PBTD^+(\LINSET_k) = k$. 
\end{example}

\begin{example} \label{ex:polygons}
Let $\cX = \reals^2$ and let $\mathcal{C}_k$ be the class of convex polygons with
at most $k$ vertices. Defining $\cl(S)$ to be the convex closure
of $S$, we obtain $\cC[k]=\mathcal{C}_k$ and thus $\PBTD^+(\mathcal{C}_k) = k$. 
\end{example}

\begin{example} 
Let $\mathcal{X}=\mathbb{R}^n$ and let $\mathcal{C}_k$ 
be the class of polyhedral cones that can be generated 
by $k$ (or less) vectors in $\reals^n$. If we take $\cl(S)$ 
to be the conic closure of $S \seq\reals^n$, 
then $\mathcal{C}[k]=\mathcal{C}_k$ and 
thus $\PBTD^+(\mathcal{C}_k)=k$.
\end{example}

\section{A Convenient Technique for Proving Upper Bounds}
\label{sec:admissible-mappings}

In this section, we give an alternative definition of 
the preference-based teaching dimension using the notion of 
an ``admissible mapping".  Given a concept class $\cL$ over 
a universe $\cX$, let $T$ be a mapping $L \mapsto T(L) 
\seq \cX \times \{-,+\}$ that assigns a set $T(L)$ 
of labeled examples to every set $L \in \cL$ such that
the labels in $T(L)$ are consistent with $L$. The {\em order} 
of $T$, denoted as $\ord(T)$, is defined 
as $\sup_{L \in \cL}|T(L)| \in \nats\cup\{\infty\}$.
Define the mappings $T^+$ and $T^-$ by 
setting $T^+(L) = \{x : (x,+) \in T(L)\}$ 
and $T^-(L) = \{x : (x,-) \in T(L)\}$ for every $L\in\cL$.
We say that $T$ is {\em positive} if $T^-(L)=\eset$ for every $L\in\cL$.
In the sequel, we will occasionally identify a positive mapping $L \mapsto T(L)$ 
with the mapping $L \mapsto T^+(L)$. The symbol ``$+$'' as an upper index of $T$
will always indicate that the underlying mapping $T$ is positive.

\noindent
The following relation will help to clarify under which
conditions the sets $(T(L))_{L\in\cL}$ are teaching sets 
w.r.t.~a suitably chosen preference relation:
\[ 
R_T = 
\{(L,L')\in\cL\times\cL:\ (L \neq L') \wedge (\mbox{$L$ is consistent with $T(L')$})\}
\enspace .
\]
The transitive closure of $R_T$ is denoted as $\trcl(R_T)$ in the sequel.
The following notion will play an important role in this paper:

\begin{definition} \label{def:admissible}
A mapping $L \mapsto T(L)$ with $L$ ranging over all concepts in $\cL$ 
is said to be {\em admissible for $\cL$} if the following holds:
\begin{enumerate}
\item
For every $L  \in\cL$, $L$ is consistent with $T(L)$.
\item 
The relation $\trcl(R_T)$ is asymmetric (which clearly implies that $R_T$ 
is asymmetric too).
\end{enumerate} 
\end{definition}
If $T$ is admissible, then $\trcl(R_T)$ is transitive and asymmetric, i.e.,
$\trcl(R_T)$ is a strict partial order on $\cL$. We will therefore use the
notation $\prec_T$ instead of $\trcl(R_T)$ whenever $T$ is known to be 
admissible.

\begin{lemma}
Suppose that $T^+$ is a positive admissible mapping for $\cL$. Then 
the relation $\prec_{T^+}$ on $\cL$ extends the relation $\supset$ 
on $\cL$. More precisely, the following holds for all $L,L' \in \cL$:
\[ L' \subset L \impl (L,L') \in R_{T^+} \impl L \prec_{T^+} L' \enspace . \]
\end{lemma}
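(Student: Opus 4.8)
The plan is to prove the two implications in the chain separately, as the statement itself suggests.

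For the first implication, suppose $L' \subset L$. I must show $(L,L') \in R_{T^+}$. By the definition of $R_T$, this requires two things: that $L \neq L'$, and that $L$ is consistent with $T^+(L')$. The first is immediate since $L' \subset L$ is a \emph{proper} inclusion. For the second, since $T^+$ is positive, $T^+(L')$ consists of positively labeled examples only, and by admissibility condition~1, $L'$ is consistent with $T^+(L')$, which means $T^+(L') \subseteq L'$ (there are no negative examples to worry about). Combining $T^+(L') \subseteq L' \subset L$ gives $T^+(L') \subseteq L$, so $L$ contains all the positive examples in $T^+(L')$ and trivially excludes the (empty) set of negative examples. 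Hence $L$ is consistent with $T^+(L')$, establishing $(L,L') \in R_{T^+}$.

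The second implication, $(L,L') \in R_{T^+} \impl L \prec_{T^+} L'$, is essentially definitional. Since $T^+$ is admissible, $\prec_{T^+}$ denotes $\trcl(R_{T^+})$, the transitive closure of $R_{T^+}$. Any pair in $R_{T^+}$ lies in its transitive closure, so $(L,L') \in R_{T^+}$ directly yields $(L,L') \in \trcl(R_{T^+})$, i.e., $L \prec_{T^+} L'$. Chaining the two implications then gives $L' \subset L \impl L \prec_{T^+} L'$, which is exactly the statement that $\prec_{T^+}$ extends $\supset$: whenever $L' \subset L$ (equivalently $L \supset L'$), we have $L \prec_{T^+} L'$, so proper subsets are strictly preferred.

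I do not anticipate any genuine obstacle here; the lemma is a direct unwinding of the definitions, and its only subtlety is the observation that positivity of $T^+$ turns ``consistency'' into a plain containment statement, which is what lets the inclusion $T^+(L') \subseteq L' \subset L$ do all the work. The one point worth stating carefully is that consistency of $L$ with $T^+(L')$ needs both $(T^+(L'))^+ \subseteq L$ and $(T^+(L'))^- \cap L = \eset$; positivity makes the latter vacuous, so only the former must be checked, and it follows from transitivity of $\subseteq$. I would present the argument compactly, citing Definition~\ref{def:admissible} for the meaning of $\prec_{T^+}$ as $\trcl(R_{T^+})$ and the definition of $R_T$ for the membership condition.
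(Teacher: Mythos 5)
Your proposal is correct and follows essentially the same route as the paper's proof: admissibility gives $T^+(L')\subseteq L'\subset L$, positivity makes the negative-example condition vacuous so $L$ is consistent with $T^+(L')$, and membership in $R_{T^+}$ passes trivially to its transitive closure $\prec_{T^+}$. You merely spell out the details (properness giving $L\neq L'$, the two clauses of consistency) that the paper leaves implicit.
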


\begin{proof}
If $T^+$ is admissible, then $L'$ is consistent with $T^+(L')$.
Thus $T^+(L') \seq L' \subset L$ so that $L$ is consistent 
with $T^+(L')$ too. Therefore $(L,L') \in R_{T^+}$, i.e.,
$L \prec_{T^+} L'$.
\end{proof}

\noindent
The following result clarifies how admissible mappings are related
to preference-based teaching:

\begin{lemma}
For each concept class $\cL$, the following holds:
\[ 
\PBTD(\cL) = \inf_T \ord(T)\ \mbox{ and }\  
\PBTD^+(\cL) = \inf_{T^+} \ord(T^+) 
\]
where $T$ ranges over all mappings that are admissible for $\cL$
and $T^+$ ranges over all positive mappings that are admissible for $\cL$.
\end{lemma}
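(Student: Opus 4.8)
The plan is to establish each of the two identities by proving the corresponding pair of inequalities; the positive version is obtained by restricting all mappings and teaching sets to positive ones and requires no new idea, so I would treat the two cases in parallel. The conceptual heart of the argument is a dictionary between the two sides: an admissible mapping induces, via $\prec_T = \trcl(R_T)$, a preference relation under which each $T(L)$ is a teaching set; and conversely, any family of teaching sets with respect to a fixed $\prec$ is automatically admissible.

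For the inequality $\PBTD(\cL) \le \inf_T \ord(T)$, I would fix an admissible mapping $T$ and work with the strict partial order $\prec_T = \trcl(R_T)$ (which is a strict partial order precisely because $T$ is admissible, as noted after Definition~\ref{def:admissible}). The goal is to show that $T(L)$ is a teaching set for $L$ w.r.t.~$(\cL,\prec_T)$, i.e.~w.r.t.~$\cL \sm \cL_{\prec_T L}$. Consistency of $L$ with $T(L)$ is condition~(1) of admissibility, so it remains to show that no other concept in $\cL \sm \cL_{\prec_T L}$ is consistent with $T(L)$. I would argue by contraposition: if $L' \neq L$ is consistent with $T(L)$, then $(L',L) \in R_T$ by definition of $R_T$, hence $L' \prec_T L$ and $L' \in \cL_{\prec_T L}$; thus every concept other than $L$ that survives in $\cL \sm \cL_{\prec_T L}$ is distinguished from $L$ by $T(L)$. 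This yields $\PBTD(L,\cL,\prec_T) \le |T(L)|$ for all $L$, whence $\PBTD(\cL) \le \PBTD(\cL,\prec_T) \le \ord(T)$; taking the infimum over admissible $T$ gives the claim.

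For the reverse inequality $\inf_T \ord(T) \le \PBTD(\cL)$, I would fix an arbitrary strict partial order $\prec$ and, for each $L \in \cL$, a teaching set $T(L)$ for $L$ w.r.t.~$(\cL,\prec)$. The key step is to verify that the resulting mapping $T$ is admissible. Condition~(1) is immediate. For condition~(2), I would show $R_T \seq \prec$: if $(L,L') \in R_T$, then $L \neq L'$ is consistent with $T(L')$, and since $T(L')$ distinguishes $L'$ from every concept in $\cL \sm \cL_{\prec L'}$ other than $L'$, the concept $L$ cannot lie in $\cL \sm \cL_{\prec L'}$, forcing $L \prec L'$. As $\prec$ is transitive, $\trcl(R_T) \seq \prec$, and asymmetry of $\prec$ then forces $\trcl(R_T)$ to be asymmetric, so $T$ is admissible. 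Finally, to turn admissibility into the order bound I would note that when $\PBTD(\cL,\prec) = M < \infty$ each $L$ admits a teaching set of size at most $M$ (the infimum defining $\PBTD(L,\cL,\prec)$ is over natural numbers and hence attained whenever finite), so the sets $T(L)$ may be chosen with $\ord(T) = \sup_L |T(L)| \le M$; when $\PBTD(\cL,\prec)=\infty$ there is nothing to prove. Thus $\inf_T \ord(T) \le \PBTD(\cL,\prec)$ for every $\prec$, and taking the infimum over $\prec$ completes the proof.

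The argument is essentially a matter of unfolding definitions, so I do not expect a serious obstacle; the only points requiring care are the bookkeeping of the infimum in the second direction (resolved by the integrality of set sizes) and the verification that the distinguishing property of teaching sets w.r.t.~$(\cL,\prec)$ is exactly what forces $R_T \seq \prec$. For the positive statements I would run the identical argument with $T^+$ in place of $T$, using that a positive teaching set is still consistent with its concept and still distinguishes it from all non-preferred concepts; the induced order $\prec_{T^+}$ additionally extends $\supset$ by the preceding lemma, but this fact is not needed for the equality itself.
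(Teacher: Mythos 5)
Your proposal is correct and follows essentially the same route as the paper's proof: the forward inequality by showing each $T(L)$ is a teaching set w.r.t.\ $(\cL,\prec_T)$, and the reverse by verifying that any family of teaching sets w.r.t.\ a fixed $\prec$ yields an admissible mapping because $\prec$ extends $\trcl(R_T)$. Your extra remark that the infimum defining $\PBTD(L,\cL,\prec)$ is attained (integrality of set sizes) is a small point of added care that the paper glosses over, but it does not change the argument.
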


\begin{proof}
We restrict ourselves to the proof 
for $\PBTD(\cL) = \inf_T \ord(T)$ because the equation
$\PBTD^+(\cL) = \inf_{T^+} \ord(T^+)$ can be obtained
in a similar fashion. We first prove 
that $\PBTD(\cL)$ $\le \inf_T \ord(T)$. Let $T$ be an
admissible mapping for $\cL$. It suffices to show that, for every $L\in\cL$, 
$T(L)$ is a teaching set for $L$ w.r.t.~$(\cL,\prec_T)$. 
Suppose $L'\in\cL\sm\{L\}$ is consistent with $T(L)$. Then $(L',L) \in R_T$ 
and thus $L' \prec_T L$. It follows that $\prec_T$ prefers $L$ 
over all concepts $L'\in\cL\sm\{L\}$ that are consistent with $T(L)$. Thus $T$ is a teaching set for $L$ w.r.t.~$(\cL,\prec_T)$, as desired. 

We now prove that $\inf_T \ord(T) \le \PBTD(\cL)$. 
Let $\prec$ be a strict partial order on $\cL$ and let $T$ be a mapping
such that, for every $L\in\cL$, $T(L)$ is a teaching set for $L$
w.r.t.~$(\cL,\prec)$. It suffices to show that $T$ is admissible
for $\cL$. Consider a pair $(L',L) \in R_T$. The definition of $R_T$ implies
that $L' \neq L$ and that $L'$ is consistent with $T(L)$. Since $T(L)$ 
is a teaching set w.r.t.~$(\cL,\prec)$, it follows that $L' \prec L$.
Thus, $\prec$ is an extension of $R_T$. Since $\prec$ is transitive,
it is even an extension of $\trcl(R_T)$. Because $\prec$ is asymmetric,
$\trcl(R_T)$ must be asymmetric, too. It follows %from this discussion
that $T$ is admissible.
\end{proof}

%As a preparation for the forthcoming sections, we make the following easy 
%but useful observations:
%
%\begin{lemma} \label{lem:no-baby-cycles}
%\begin{enumerate}
%\item
%Suppose that $T$ is an admissible mapping for $\cL$.
%If $L,L' \in \cL$ are two distinct concepts in $\cL$ such that $L$ 
%is consistent with $T(L')$, then $T(L)$ must contain an example
%that distinguishes $L$ from $L'$. 
%\item
%Suppose that $T^+$ is a positive admissible mapping for $\cL$.
%If $L,L' \in \cL$ are two distinct concepts in $\cL$ such that $L' \subset L$, 
%then $T^+(L) \sm L' \neq \eset$.
%\item
%Suppose that $T^+(L)$ is a positive admissible mapping for $\cL$.
%Then, for every $L\in\cL$, $T^+(L)$ distinguishes $L$ from every 
%proper subset of $L$ in $\cL$.
%\end{enumerate}
%\end{lemma}
%
%\begin{proof}
%\begin{enumerate}
%\item
%Because $T$ is assumed to be admissible for $\cL$, it follows that $\prec_T$
%and $R_T$ are asymmetric. Assume by way of contradiction that $L$ is consistent 
%with $T(L')$ but $T(L)$ does not contain an example that distinguishes $L$ 
%from $L'$ (so that $L'$ is consistent with $T(L)$). It follows 
%that $(L,L'),(L',L) \in R_T$ -- which contradicts the above observation 
%that $R_T$ has to be asymmetric.
%\item
%The second assertion of the lemma easily follows from specializing 
%the first assertion of the lemma to admissible mappings which are 
%positive.
%\item
%The third assertion of the lemma is an immediate consequence of the
%second-one.
%\end{enumerate}
%\end{proof}

\section{Preference-based Teaching of Linear Sets} \label{sec:linsets}

Some work in computational learning theory \cite{Abe89,GSZ2015,Takada92} is concerned with learning \emph{semi-linear sets}, i.e., 
unions of linear subsets of $\mathbb{N}^k$ for some fixed $k\ge 1$, where each linear set consists of exactly those elements that can be written as the sum of some constant vector $c$ and a linear combination of the elements of some fixed set of generators, see Example~\ref{exmp:pbtdpluslinset}. While semi-linear sets are of common interest in mathematics in general, they play a particularly important role in the theory of formal languages, due to \emph{Parikh's theorem}, by which the so-called Parikh vectors of strings in a context-free language always form a semi-linear set~\cite{Parikh66}.

A recent study \cite{GSZ2015} analyzed computational teaching of classes of linear subsets of $\mathbb{N}$ (where $k=1$) and some variants thereof, as a substantially simpler yet still interesting special case of semi-linear sets.  In this section, we extend that study to preference-based teaching.

Within the scope of this section, all concept classes are formulated over the universe 
$\cX = \nats_0$. Let $G = \{g_1,\ldots$ $,g_k\}$ be a finite subset
of $\nats$. We denote by $\spn{G}$ resp.~by $\spn{G}_+$ the following
sets:
\[ 
\spn{G} = \left\{\sum_{i=1}^{k}a_ig_i:\ a_1,\ldots,a_k\in\nats_0\right\}\ 
\mbox{ and }\  
\spn{G}_+ = \left\{\sum_{i=1}^{k}a_ig_i:\ a_1,\ldots,a_k\in\nats\right\} 
\enspace .
\]

We will determine (at least approximately) the 
preference-based teaching dimension of the following concept classes 
over $\nats_0$:
\begin{eqnarray*}
\LINSET_k & = & \{\spn{G}:\ (G \subset \nats) \wedge (1 \le |G| \le k)\} \enspace . \\
\CFLINSET_k & = & \{\spn{G}:\ (G \subset \nats) \wedge (1 \le |G| \le k) \wedge (\gcd(G)=1)\} \enspace . \\
\NELINSET_k & = & \{\spn{G}_+:\ (G \subset \nats) \wedge (1 \le |G| \le k) \} \enspace . \\
\NECFLINSET_k & = & \{\spn{G}_+:\ (G \subset \nats) \wedge (1 \le |G| \le k) \wedge (gcd(G)=1)\} \enspace .
\end{eqnarray*}

A subset of $\nats_0$ whose complement in $\nats_0$ is finite is said
to be {\em co-finite}. The letters ``CF'' in $\CFLINSET$ mean ``co-finite''. 
The concepts in $\LINSET_k$ have the algebraic structure of a monoid
w.r.t.~addition. The concepts in $\CFLINSET_k$ are also known as 
``numerical semigroups''~\cite{RG-S2009}. A zero coefficient $a_j=0$ 
erases $g_j$ in the linear combination $\sum_{i=1}^{k}a_ig_i$. 
Coefficients from $\nats$ are non-erasing in this sense. 
The letters ``NE'' in ``$\NELINSET$'' mean ``non-erasing''.

The {\em shift-extension} $\cL'$ of a concept class $\cL$ 
over the universe $\nats_0$ is defined as follows:
\begin{equation} \label{eq:shift-extension}
\cL' = \{c+L:\ (c\in\nats_0) \wedge (L\in\cL)\} \enspace .
\end{equation}
%We will refer to $\cL'$ as the {\em shift-extension} of $\cL$.

The following bounds on $\RTD$ and $\RTD^+$ (for sufficiently
large values of $k$)\footnote{For instance, $\RTD^+(\LINSET_k)=\infty$
holds for all $k\ge2$ and $\RTD(\LINSET_k) = \mbox{?}$ 
(where ``?'' means ``unknown'') holds for all $k\ge4$.} 
are known from~\cite{GSZ2015}:
\[
\begin{array}{|l|l|l|}
\hline
             & \RTD^+ & \RTD \\
\hline
\LINSET_k    & =\infty & \mbox{?} \\
\CFLINSET_k  & =k      & \in\{k-1,k\} \\
\NELINSET'_k & =k+1  & \in\{k-1,k,k+1\} \\ 
\hline
\end{array}
\]
Here $\NELINSET'_k$ denotes the shift-extension
of $\NELINSET_k$ .

The following result shows the corresponding bounds 
with PBTD in place of RTD:

\begin{theorem} \label{th:bounds-linset}
The bounds in the following table are valid:
\[
\begin{array}{|l|l|l|}
\hline
             & \PBTD^+ & \PBTD \\
\hline
\LINSET_k    & =k & \in\{k-1,k\} \\
\CFLINSET_k  & =k      & \in\{k-1,k\} \\
\NELINSET_k  & %\in \left[\left\lfloor\frac{k-1}{2}\right\rfloor:k\right] &
							 \in \left[k-1:k\right] &
               \in \left[\left\lfloor\frac{k-1}{2}\right\rfloor:k\right] \\
\NECFLINSET_k & %\in \left[\left\lfloor\frac{k-1}{2}\right\rfloor:k\right] &
							 \in \left[k-1:k\right] &
               \in \left[\left\lfloor\frac{k-1}{2}\right\rfloor:k\right] \\
\hline
\end{array}
\]
Moreover
\begin{equation} \label{eq:more-bounds}
\PBTD^+(\cL') = k+1\ \wedge\ \PBTD(\cL') \in \{k-1,k,k+1\} 
\end{equation}
holds for all $\cL\in\{\LINSET_k,\CFLINSET_k,\NELINSET_k,\NECFLINSET_k\}$.
\end{theorem}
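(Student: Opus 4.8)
The plan is to prove each entry of the two tables by pairing an upper bound with a lower bound, using the closure-operator machinery of Section~\ref{sec:closure-operator} wherever the class genuinely consists of closed sets, and falling back on explicit admissible mappings (Section~\ref{sec:admissible-mappings}) and finite-subclass arguments (Lemma~\ref{lem:lb-tdmin}) elsewhere.

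\emph{Upper bounds.} The equality $\PBTD^+(\LINSET_k)=k$ is already recorded in Example~\ref{exmp:pbtdpluslinset}, since $G\mapsto\spn{G}$ is a closure operator and $\LINSET_k=\cC[k]$ for it. Because a positive admissible mapping is in particular admissible, $\PBTD(\cL)\le\PBTD^+(\cL)$, which supplies every ``$\le k$'' entry in the $\PBTD$ columns. For $\CFLINSET_k\subseteq\LINSET_k$ and $\NECFLINSET_k\subseteq\NELINSET_k$, monotonicity (Lemma~\ref{lem:monotonicity}) reduces the co-finite upper bounds to the general ones. The non-erasing classes are \emph{not} closed under $G\mapsto\spn{G}$, so there I would build an explicit positive admissible mapping of order $k$: writing $\spn{G}_+=s+\spn{G}$ with $s=\sum_{g\in G}g$, the natural candidate $T^+(\spn{G}_+)=\{s+g:g\in G\}$ lies in $\spn{G}_+$ and reconstructs $s$ (as $\tfrac{1}{|G|+1}$ of the sum of its members) and hence $G$; the content is to choose the preference so that only genuine supersets remain undistinguished and to verify that $\trcl(R_{T^+})$ is asymmetric.

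\emph{Lower bounds.} For the positive columns I would use $I'(\cL)\le\PBTD^+(\cL)$ from~(\ref{eq:span-pbtd}): a linear set with a size-$k$ minimal generating set chosen so that no single element of $\spn{G}$ lies outside more than one of the proper sub-semigroups $\spn{G\setminus\{g_i\}}$ has every weak spanning set of size $\ge k$, giving the ``$k$'' endpoints for $\LINSET_k,\CFLINSET_k$ (and the weaker ``$k-1$'' endpoints in the non-erasing case, where the shift by $s$ couples the generators). For the $\PBTD$ columns, Lemma~\ref{lem:lb-tdmin} reduces everything to exhibiting a \emph{finite} subclass $\cL'$ whose \emph{easiest} concept still needs $k-1$ examples, i.e.\ $\TD_{min}(\cL')\ge k-1$; concretely I would take finitely many generating sets from widely separated (e.g.\ prime) generators, so that for every member and every candidate teaching set of size $k-2$ one generator can still be toggled to yield a second consistent concept inside $\cL'$.

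\emph{Shift-extensions.} In~(\ref{eq:more-bounds}) the ``$+1$'' is the cost of locating the offset: since $\min(c+L)=c$, I would adjoin $c$ to a spanning set, taking $T^+(c+\spn{G})=\{c\}\cup\{c+g:g\in G\}$ and choosing a preference (e.g.\ favouring smaller offsets) under which these $k+1$ positive examples teach; the matching lower bound $I'(\cL')\ge k+1$ forces $c$ together with $k$ generator-witnesses. The range $\PBTD(\cL')\in\{k-1,k,k+1\}$ then combines $\PBTD\le\PBTD^+=k+1$ with a finite-subclass $\TD_{min}$ bound of $k-1$. \textbf{The main obstacle is the lower-bound side, above all for the non-erasing classes.} Because $\spn{G}_+$ is the single rigid shift $s+\spn{G}$, perturbing one generator moves the whole set, so a teaching set separates concepts more cheaply and the finite-subclass adversary can force only on the order of $\lfloor(k-1)/2\rfloor$ examples rather than $k-1$; making this factor-of-two slack precise, verifying asymmetry of $\trcl(R_T)$ for the hand-built mappings where closure operators fail, and designing subclasses whose \emph{minimum} (not merely some) teaching dimension is large, are exactly what keep several entries as intervals rather than equalities.
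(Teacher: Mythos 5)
Your high-level architecture (closure operators and monotonicity for the easy upper bounds, a Shift Lemma costing one extra example, and lower bounds via weak spanning sets plus finite subclasses with large $\TD_{min}$) matches the paper's. But the two places you defer are exactly where the proof lives, and your sketches for them would not go through as stated. First, your positive teaching set $\{s+g : g\in G\}$ for $\spn{G}_+$ omits the example $s=\sumg(G)$ itself, and ``reconstructing $s$ as $\tfrac{1}{|G|+1}$ of the sum of the members'' is a decoding step, not a preference-based argument: a competing concept $\spn{\widehat G}_+$ with $\sumg(\widehat G)>s$ can be consistent with all of $\{s+g\}$ without being a superset of $\spn{G}_+$, so ``only genuine supersets remain undistinguished'' fails and the asymmetry of $\trcl(R_{T^+})$ is exactly what is in doubt. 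The paper's Lemma~\ref{lem:nelinset-ub} includes $(\sumg(G),+)$ as the first example to pin down the generator sum, and then needs a three-tiered preference (sum, then lexicographic order of the non-redundant generators, then the prefix condition) precisely to handle redundant generators, for which the teaching set deliberately drops one shifted generator to stay within $k$ examples.

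Second, the lower bound $\PBTD^+(\cL')\ge k+1$ is the heart of the theorem and you essentially assert it as ``$I'(\cL')\ge k+1$.'' The paper does not prove that inequality; what it proves (Lemma~\ref{lem:special-sets}) is only that a $(k,N)$-special set needs $k$ weak-spanning elements \emph{within the fixed-offset slice} $\NECFLINSET'_k[N]$, and the extra $+1$ comes from a separate admissible-mapping argument: an auxiliary concept $L=N+p+\spn{1}$ is built so that a special $L'$ with $L'\sm L=\{N\}$ is consistent with $T^+(L)$, forcing $T^+(L')$ to additionally contain $(N,+)$, the one example none of the $k$ subset-distinguishing examples can be. Getting $L'$ to exist requires the Frobenius-number and independence machinery of Definition~\ref{def:special-set}. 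Relatedly, your explanation of the $\lfloor(k-1)/2\rfloor$ bound (``teaching sets separate concepts more cheaply'') misidentifies the mechanism: the loss of a factor of two occurs because embedding a finite subclass into $\NECFLINSET_k$ without a free offset forces the generator sum to equal a fixed constant $N(k)$, which restricts the adversarial family to a fixed-cardinality slice $\binom{[k-1]}{\ell}$ with $\TD_{min}=\min\{\ell,k-1-\ell\}$, maximized at $\ell=\lfloor(k-1)/2\rfloor$. Finally, the bound $\PBTD^+(\NECFLINSET_k)\ge k-1$, which you need for the $[k-1:k]$ entries in the positive column, is not addressed at all; the paper proves it by an explicit case analysis showing that $\spn{k,k+1,\ldots,2k-1}_+$ admits no weak spanning set of size $k-2$.
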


Note that the equation $\PBTD^+(\LINSET_k) = k$ was already
proven in Example~\ref{exmp:pbtdpluslinset}, using the fact that $G\mapsto\spn{G}$ is a closure operator. 
Since $G\mapsto\spn{G}_+$ is not a closure operator, we give a separate argument to prove an upper bound 
of $k$ on $\PBTD^+(\NELINSET_k)$ (see Lemma~\ref{lem:nelinset-ub} in Appendix~\ref{app:linsets}). All other upper 
bounds in Theorem~\ref{th:bounds-linset} are then easy
to derive. The lower bounds in
Theorem~\ref{th:bounds-linset} are much harder to obtain.
A complete proof of Theorem~\ref{th:bounds-linset} will
be given in Appendix~\ref{app:linsets}.

\section{Preference-based Teaching of Halfspaces} \label{sec:halfspaces}

In this section, we study preference-based teaching of halfspaces. 
We will denote the all-zeros vector as $\vec{0}$. The vector with $1$
in coordinate $i$ and with $0$ in the remaining coordinates is
denoted as $\vec{e}_i$. The dimension of the Euclidean space
in which these vectors reside will always be clear from the context.
The sign of  a real number $x$ (with value $1$ if $x>0$, value $-1$ 
if $x<0$, and value $0$ if $x=0$) is denoted by $\sign(x)$.

Suppose that $w\in\reals^d\sm\{\vec0\}$ and $b\in\reals$. 
The {\em (positive) halfspace induced by $w$ and $b$} is 
then given by
\[ 
H_{w,b} = \{x\in\reals^d:\ w^\top x + b \ge 0\} 
\enspace . 
\]
Instead of $H_{w,0}$, we simply write $H_w$.
Let $\cH_d$ denote the class of $d$-dimensional Euclidean 
half\-spaces:
\[ 
\cH_d = \{H_{w,b}:\ w\in\reals^d\sm\{\vec{0}\} \wedge b \in \reals\} 
\enspace . \]
Similarly, $\cH_d^0$ denotes the class of $d$-dimensional 
homogeneous Euclidean halfspaces:
\[ \cH_d^0 = \{H_w:\ w\in\reals^d\sm\{\vec{0}\}\} \enspace . \]
Let $S_{d-1}$ denote the $(d-1)$-dimensional unit sphere in $\reals^d$.
Moreover $S_{d-1}^+ = \{x\in S_{d-1}: x_d>0\}$ denotes the
``northern hemisphere''. If not stated explicitly otherwise, 
we will represent homogeneous halfspaces with normalized vectors 
residing on the unit sphere. We remind the reader of the following 
well-known fact:

\begin{remark} \label{rem:orthogonal-group}
The orthogonal group in dimension $d$ (i.e., the multiplicative group 
of orthogonal $(d \times d)$-matrices) acts transitively on $S_{d-1}$
and it conserves the inner product. 
\end{remark}

We now prove a helpful lemma, stating that each vector $w^*$ 
in the northern hemisphere may serve as a representative 
for some homogeneous halfspace $H_u$ in the sense that all other 
elements of $H_u$ in the northern hemisphere have a strictly 
smaller $d$-th component than $w^*$. This will later help to 
teach homogeneous halfspaces with a preference that orders vectors 
by the size of their last coordinate.

\begin{lemma} \label{lem:close2northpole}
Let $d\ge2$, let $0<h\le1$ and 
let $R_{d,h} = \{w \in S_{d-1}: w_d = h\}$.
With this notation the following holds. 
For every $w^* \in R_{d,h}$, there exists $u\in\reals^d\sm\{\vec{0}\}$
such that
\begin{equation} \label{eq:good-choice}
(w^* \in H_u) \wedge 
(\forall w \in (S_{d-1}^+ \cap H_u)\sm\{w^*\}: w_d < h) 
\enspace .
\end{equation}
\end{lemma}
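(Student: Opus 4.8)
The plan is to exhibit $u$ explicitly, so no appeal to the transitive action of the orthogonal group (Remark~\ref{rem:orthogonal-group}) is actually needed; the construction works verbatim for an arbitrary $w^* \in R_{d,h}$. The guiding geometric intuition is that the bounding hyperplane $\{x : u^\top x = 0\}$ should pass through $w^*$ and tilt away from the north pole $\vec{e}_d$ just enough to shave off the entire spherical cap $\{w \in S_{d-1} : w_d \ge h\}$ except for the single point $w^*$ on its rim. This suggests the candidate
\[
u = w^* - \frac{1}{h}\,\vec{e}_d \enspace .
\]

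First I would verify $w^* \in H_u$. Since $\|w^*\| = 1$ and $w^*_d = h$, we get $u^\top w^* = \|w^*\|^2 - \frac{1}{h}\,w^*_d = 1 - 1 = 0 \ge 0$, so $w^*$ lies on the boundary of $H_u$, which is enough. Next I would establish the second conjunct of~(\ref{eq:good-choice}) by contraposition: assuming $w \in S_{d-1}^+ \cap H_u$ with $w \neq w^*$, I claim $w_d < h$. If instead $w_d \ge h$, then $\frac{1}{h}\,w_d \ge 1$, while the Cauchy--Schwarz inequality for the two \emph{distinct} unit vectors $w$ and $w^*$ is strict, giving $(w^*)^\top w < \|w^*\|\,\|w\| = 1$. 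Combining,
\[
u^\top w = (w^*)^\top w - \frac{1}{h}\,w_d < 1 - 1 = 0 \enspace ,
\]
which contradicts $w \in H_u$. Hence $w_d < h$, as required. The two ingredients that make this work cleanly are that the coefficient $\frac{1}{h}$ is chosen \emph{exactly} so that $w_d \ge h \Leftrightarrow \frac{1}{h}w_d \ge 1$, and that the Cauchy--Schwarz bound becomes strict away from $w^*$.

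The main subtlety I expect is ensuring $u \neq \vec{0}$, since the statement requires a vector in $\reals^d \sm \{\vec{0}\}$. The $d$-th coordinate of $u$ equals $h - \frac{1}{h}$, which is nonzero exactly when $h < 1$; thus the displayed construction is valid throughout the range $0 < h < 1$. The boundary case $h = 1$ must be treated separately, but it is trivial: $R_{d,1} = \{\vec{e}_d\}$ forces $w^* = \vec{e}_d$, and every $w \in S_{d-1}$ with $w_d \ge 1$ already equals $\vec{e}_d = w^*$, so the second condition in~(\ref{eq:good-choice}) holds vacuously for \emph{any} choice of $u$; taking $u = \vec{e}_d$ gives $u^\top w^* = 1 > 0$, which finishes this case.
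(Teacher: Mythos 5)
Your proof is correct, and your candidate is in fact the same halfspace as the paper's: the paper invokes Remark~\ref{rem:orthogonal-group} to assume w.l.o.g.\ that $w^*=(0,\ldots,0,\sqrt{1-h^2},h)$ and then takes $u=(0,\ldots,0,w_d^*,-w_{d-1}^*)$, which for that normalized $w^*$ is a positive multiple of your $w^*-\frac1h\vec e_d$. The difference lies in the verification. The paper reduces to a two-coordinate computation, rewriting $S_{d-1}^+\cap H_u$ as $\{w:\ w_{d-1}/w_d\ge w_{d-1}^*/w_d^*>0\}$ and arguing about that ratio; your argument needs no symmetry reduction and no coordinates: the identity $u^\top w^*=\|w^*\|^2-\frac1h w_d^*=0$ plus the strict Cauchy--Schwarz inequality $(w^*)^\top w<1$ for distinct unit vectors does all the work. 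This buys a shorter, coordinate-free proof that in fact establishes the slightly stronger statement that every $w\in (S_{d-1}\cap H_u)\sm\{w^*\}$ (not just those in the northern hemisphere) has $w_d<h$; what you give up is only the explicit geometric picture of $H_u$ as a half-plane condition on the last two coordinates, which the paper does not use later anyway. You also correctly flag and handle the two points where your formula could break: $u=\vec 0$ when $h=1$ (dispatched by the trivial case $R_{d,1}=\{\vec e_d\}$, just as in the paper) and the need for strictness in Cauchy--Schwarz, which holds precisely because $w\ne w^*$ are both unit vectors.
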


\begin{proof}
For $h=1$, the statement is trivial, since $R_{d,1} = \{\vec{e}_d\}$. 
So let $h<1$.

Because of Remark~\ref{rem:orthogonal-group}, we may assume
without loss of generality that the vector $w^* \in R_{d,h}$
equals $(0,\ldots,0,\sqrt{1-h^2},h)$. It suffices therefore
to show that, with this choice of $w^*$, 
the vector $u = (0,\ldots,0,w_d^*,-w_{d-1}^*)$ 
satisfies~(\ref{eq:good-choice}). Note that $w \in H_{u}$ 
iff $\spn{u,w} = w_d^*w_{d-1} - w_{d-1}^*w_d \ge 0$. 
Since $\spn{u,w^*}=0$, we have $w^* \in H_u$. Moreover, 
it follows that
\[ 
S_{d-1}^+ \cap H_{u} = 
\left\{w\in S_{d-1}^+: 
\frac{w_{d-1}}{w_d} \ge \frac{w_{d-1}^*}{w_d^*} > 0 \right\}
\enspace .
\]
It is obvious that no vector $w \in S_{d-1}^+ \cap H_u$
can have a $d$-th component $w_d$ exceeding $w_d^*=h$ and 
that setting $w_d=h=w_d^*$ forces the 
settings $w_{d-1}=w_{d-1}^*=\sqrt{1-h^2}$ 
and $w_1 = \ldots = w_{d-2} = 0$. 
Consequently,~(\ref{eq:good-choice}) is satisfied, which 
concludes the proof. 
\end{proof}

With this lemma in hand, we can now prove an upper bound of 2 
for the preference-based teaching dimension of the class of 
homogeneous halfspaces, independent of the underlying dimension~$d$.

\begin{theorem} \label{th:halfspace0-ub}
$\PBTD(\cH_1^0) = \TD(\cH_1^0) = 1$ and, for every $d\ge2$, 
we have $\PBTD(\cH_d^0)$ $\le 2$.
\end{theorem}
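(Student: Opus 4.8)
The plan is to prove the two assertions separately. For the one-dimensional homogeneous case, the class $\cH_1^0$ consists of exactly two concepts: with $w>0$ we get $H_w = [0,\infty) = \reals_{\ge 0}$, and with $w<0$ we get $H_w = (-\infty,0] = \reals_{\le 0}$. A single example distinguishes these two concepts: the example $(1,+)$ is consistent with $H_1$ but not with $H_{-1}$, and symmetrically $(-1,+)$ distinguishes the other. Hence $\TD(\cH_1^0)=1$, and since $\PBTD(\cL)\le\TD(\cL)$ always (take the empty preference relation, or invoke Corollary~\ref{cor:rtd-pb}), we get $\PBTD(\cH_1^0)=1$ as well.

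For the main assertion, $\PBTD(\cH_d^0)\le 2$ with $d\ge 2$, the plan is to exhibit a single preference relation $\prec$ on $\cH_d^0$ together with a teaching set of size at most $2$ for every concept. The natural idea, suggested by the phrasing just before the theorem, is to normalize representatives to the unit sphere and order halfspaces by the $d$-th coordinate of their (suitably chosen) representative vector. First I would fix, for each homogeneous halfspace $H_u$, a canonical normal vector lying in the closed northern hemisphere, and define the preference so that halfspaces whose representative has a larger $d$-th component are more preferred. Lemma~\ref{lem:close2northpole} is the crucial tool: it says that any target representative $w^*$ with $w^*_d = h>0$ can be singled out on the sphere as the unique point of $S_{d-1}^+$ with $d$-th coordinate $h$ that also lies in some cleverly chosen halfspace $H_u$. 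The teaching set for the target $H_{w^*}$ would then consist of at most two examples: one positive example forcing the boundary to contain $w^*$ (or to pass through it), and one example encoding the constraint $H_u$ from the lemma, so that among all consistent concepts preferred over the target, the target is the unique survivor.

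The key steps, in order, are: (1) set up the canonical representation on $S_{d-1}$ and define $\prec$ via the last coordinate (breaking ties, e.g.\ lexicographically on the remaining coordinates, to make $\prec$ a genuine strict partial order); (2) handle the degenerate "equatorial" case where the target's representative has $w_d=0$, since Lemma~\ref{lem:close2northpole} assumes $h>0$ --- these boundary concepts may need separate treatment or a separate preference stratum; (3) for a target with $h>0$, invoke Lemma~\ref{lem:close2northpole} to obtain $u$ and build the two-example teaching set, one example pinning down membership consistent with $w^*$ and one realizing the $H_u$ constraint; and (4) verify admissibility, i.e.\ that the resulting preference relation is well-defined and that the two chosen examples distinguish the target from every more-preferred consistent concept. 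It is cleanest to phrase this via an admissible mapping $T$ (in the sense of Definition~\ref{def:admissible}) and appeal to the lemma equating $\PBTD(\cL)$ with $\inf_T \ord(T)$, so that one need only check asymmetry of $\trcl(R_T)$ rather than construct $\prec$ by hand.

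The main obstacle I anticipate is making the argument from Lemma~\ref{lem:close2northpole} actually yield a teaching \emph{set} of size two rather than merely singling out $w^*$ geometrically. The lemma guarantees that $w^*$ is the unique point of $S_{d-1}^+\cap H_u$ with maximal $d$-th coordinate, but a teaching set must distinguish the \emph{halfspace} $H_{w^*}$ from other halfspaces, not just identify a point; one must carefully translate the geometric uniqueness on the sphere into statements about which halfspaces are consistent with a small set of labeled examples and are $\prec$-preferred over $H_{w^*}$. A second subtlety is ensuring the construction is uniform across all targets under a \emph{single} fixed preference relation (the supremum in $\PBTD(\cL,\prec)$ is over all $L$ for one $\prec$), which is exactly why encoding everything through an admissible mapping and checking asymmetry is the safer route. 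Finally, the equatorial boundary case and the possibility of representatives in the southern hemisphere need a clean convention so that no concept is left without a valid size-$2$ teaching set.
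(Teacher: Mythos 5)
Your overall strategy is the same as the paper's---normalize to the unit sphere, prefer weight vectors with larger last coordinate, and use Lemma~\ref{lem:close2northpole} to pin down the target with one example---but two of the points you defer are exactly where the real work lies, and as stated your plan has gaps there. First, you cannot ``fix, for each homogeneous halfspace $H_u$, a canonical normal vector lying in the closed northern hemisphere'': the unit normal of $H_w$ is unique (since $H_w\neq H_{-w}$), so halfspaces with $w_d<0$ are forced to have southern representatives. Moreover, the vector $u$ supplied by Lemma~\ref{lem:close2northpole} satisfies $\spn{u,w^*}=0$, so the example $(u,+)$ is consistent with \emph{both} $H_{w^*}$ and $H_{-w^*}$, and these two concepts are tied under any preference based on $|w_d|$. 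This is why the paper's first example is a \emph{negative} one, $(-\vec e_d,-)$ or $(\vec e_d,-)$, whose sole job is to fix $\sign(w_d)$; your description of the teaching set as ``one positive example forcing the boundary to contain $w^*$ \dots and one example encoding the constraint $H_u$'' conflates these into the single example $(u,+)$ (which already does both of the things you list) and leaves the hemisphere ambiguity unresolved.

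Second, the ``equatorial'' case $w_d^*=0$ cannot be dismissed as needing ``separate treatment or a separate preference stratum'' without saying what that stratum is, because these targets are not exceptional---they include, e.g., all of $\cH_{d-1}^0$ embedded in $\reals^d$. The paper's resolution is a concrete stratification: the student prefers weight vectors with more trailing zero coordinates, and the teacher for a target ending in exactly $s$ zeros uses only examples ending in at least $s$ zeros, which reduces the problem to teaching in $\cH_{d-s}^0$ where the last coordinate of the target is nonzero. Some device of this kind (and a check that it composes correctly with the $|w_d|$-preference into a single strict partial order, as you rightly note via admissible mappings) is needed to make the bound of $2$ hold uniformly for every concept. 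Your one-dimensional argument and your identification of Lemma~\ref{lem:close2northpole} as the engine of the proof are both correct.
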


\begin{proof}
Clearly, $\PBTD(\cH_1^0) = \TD(\cH_1^0) = 1$ since $\cH_1^0$
consists of the two sets $\{x\in\reals: x\ge0\}$ 
and $\{x\in\reals: x\le0$\}. 

\noindent
Suppose now that $d\ge2$. Let $w^*$ be the target weight vector 
(i.e., the weight vector that has to be taught). Under the following 
conditions, we may assume without loss of generality that $w_d^*\neq0$:
\begin{itemize}
\item
For any $0<s_1<s_2$, the student prefers any weight vector 
that ends with $s_2$ zero coordinates over any weight vector
that ends with only $s_1$ zero coordinates.
\item
If the target vector ends with (exactly) $s$ zero coordinates, 
then the teacher presents only examples ending with (at least) 
$s$ zero coordinates.
\end{itemize}
In the sequel, we specify a student and a teacher such that
these conditions hold, so that we will consider only target 
weight vectors $w^*$ with $w_d^*\neq0$.

\noindent
The student has the following preference relation:
\begin{itemize}
\item
Among the weight vectors $w$ with $w_d\neq0$, 
the student prefers vectors with larger values of $|w_d|$ 
over those with smaller values of $|w_d|$.
\end{itemize}

\noindent
The teacher will use two examples. The first one is chosen as
\[ 
\left\{ \begin{array}{ll}
     (-\vec{e}_d,-) & \mbox{if $w^*_d>0$} \\
     (\vec{e}_d,-)  &\mbox{if $w^*_d<0$} 
        \end{array} \right. \enspace .
\]
This example reveals whether the unknown weight 
vector $w^* \in S_{d-1}$ has a strictly positive 
or a strictly negative $d$-th component.
For reasons of symmetry, we may assume that $w_d^*>0$.
We are now precisely in the situation that is described
in Lemma~\ref{lem:close2northpole}. Given $w^*$ and $h=w_d^*$, 
the teacher picks as a second example $(u,+)$ 
where $u\in\reals^d\sm\{\vec{0}\}$ has the properties
described in the lemma. It follows 
immediately that the student's preferences will make her
choose the weight vector $w^*$.
\end{proof}

The upper bound of 2 given in Theorem~\ref{th:halfspace0-ub} 
is tight, as is stated in the following lemma.

\begin{lemma} \label{lem:halfspace-neg}
For every $d\ge2$, we have $\PBTD(\cH_d^0)\ge 2$.
\end{lemma}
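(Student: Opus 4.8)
The plan is to show that a single labeled example can never suffice to teach every concept in $\cH_d^0$ under any strict partial order, so that $\PBTD(\cH_d^0,\prec)\ge 2$ for all $\prec$ and hence $\PBTD(\cH_d^0)\ge 2$. To do this I will exhibit, for each candidate preference relation $\prec$, some homogeneous halfspace $H_w$ whose every single-example teaching set fails: that is, no single labeled example distinguishes $H_w$ from \emph{all} the halfspaces preferred over it.

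**First** I would set up the key combinatorial obstruction. A single labeled example $(x,+)$ is consistent with $H_w$ precisely when $w^\top x\ge 0$, i.e.~$x$ lies in the halfspace; a single negative example $(x,-)$ is consistent with $H_w$ iff $w^\top x<0$. The crucial geometric fact I would establish is that for $d\ge 2$ and any fixed nonzero $x\in\reals^d$, the set of homogeneous halfspaces consistent with the labeled example $(x,+)$ (respectively $(x,-)$) is \emph{large}: infinitely many halfspaces $H_w$ contain $x$, and in fact given \emph{any} two weight vectors $w,w'$ one can usually find a third weight vector that agrees with a single example but realizes a different halfspace. The point is that one linear inequality $w^\top x \ge 0$ on the weight vector leaves a full-dimensional cone of solutions when $d\ge 2$, so a single example is far too weak to pin down $w$ up to a small preferred set.

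**The main step** is to rule out \emph{every} preference relation at once. For a contradiction, suppose some $\prec$ achieves $\PBTD(\cH_d^0,\prec)\le 1$, so each $H_w$ has a single-example teaching set $T(H_w)$ w.r.t.~$(\cH_d^0,\prec)$. By Lemma~\ref{lem:extension} I may assume $\prec$ is a strict linear order. The mapping $H_w\mapsto T(H_w)$ must be injective (as noted after Definition~\ref{def:td-prec-L}), so distinct halfspaces get distinct single examples. I expect the heart of the argument to be a counting or topological obstruction: the class $\cH_d^0$ is uncountable (parametrized by $S_{d-1}$), while for a fixed example $x$ the labels $+,-$ give only finitely many labeled examples on each point; more usefully, I would argue that for a single example to be a teaching set for $H_w$, \emph{all} halfspaces $\prec$-above $H_w$ must be inconsistent with that one example, and then locate a halfspace that is $\prec$-maximal (or nearly so) among those sharing a given example's consistency pattern, deriving that it has no valid single-example teaching set. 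Concretely, I would fix an example $(x,\sigma)$ and consider the collection of halfspaces consistent with it; among these there must be one, say $H_w$, such that some \emph{other} consistent halfspace $H_{w'}$ satisfies $H_w\prec H_{w'}$, so the example fails to exempt $H_{w'}$, contradicting that $(x,\sigma)$ teaches $H_w$.

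**The hard part** will be turning the ``a single inequality leaves too much freedom'' intuition into a clean argument that defeats an \emph{arbitrary} linear order, since the adversary gets to choose the order after seeing the geometry. I anticipate the cleanest route is to restrict to a well-chosen finite or low-dimensional subfamily of $\cH_d^0$ on which $\TD_{\min}$ is provably at least $2$, and then invoke monotonicity via Lemma~\ref{lem:lb-tdmin}: if I can find a finite subclass $\cL'\subseteq\cH_d^0$ with $\TD_{\min}(\cL')\ge 2$, then $\PBTD(\cH_d^0)\ge\PBTD(\cL')\ge\TD_{\min}(\cL')\ge 2$ immediately. So the real task reduces to exhibiting a small explicit set of halfspaces (for instance three or four homogeneous halfspaces in a fixed $2$-dimensional subspace, chosen so that no single labeled point distinguishes any one of them from all the others) and verifying by direct inspection that each requires at least two examples. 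This finite-subclass reduction sidesteps the need to reason about arbitrary orders directly and is where I would invest the concrete computation.
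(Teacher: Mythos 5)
Your final paragraph arrives at exactly the argument the paper uses: invoke Lemma~\ref{lem:lb-tdmin} on a finite subclass $\cF\subseteq\cH_2^0$ with $\TD_{min}(\cF)\ge 2$ (which makes your first two paragraphs, aimed at defeating an arbitrary preference relation directly, unnecessary). However, the proposal stops exactly where the proof begins: you never exhibit the finite subclass nor verify $\TD_{min}(\cF)\ge 2$, and that is the entire mathematical content of the lemma. The paper takes $\cF=\{H_w:\ \vec{0}\neq w\in\{-1,0,1\}^2\}$, the eight homogeneous halfplanes whose normals point at the multiples of $45^\circ$, checks that each has teaching dimension $2$ with respect to $\cF$, and embeds this into $\cH_d^0$ for $d\ge 2$.

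The deferred ``concrete computation'' is also not as routine as your phrase ``three or four homogeneous halfspaces'' suggests: no subclass of three or four homogeneous halfplanes can work. A single example fails to teach $H_{w_i}$ w.r.t.\ $\{H_{w_1},\dots,H_{w_m}\}$ only if (a) no point of $H_{w_i}$ satisfies $w_j^\top x<0$ for all $j\neq i$, and (b) the cone $\{x:\ w_j^\top x\ge 0\ \forall j\neq i\}$ is contained in $H_{w_i}$. Both conditions hold for every $i$ when each $(m-1)$-subset of the normal directions positively spans $\reals^2$, i.e.\ is not contained in a closed halfplane; in terms of the angular gaps $g_1,\dots,g_m$ between consecutive normal directions this requires every two adjacent gaps to sum to less than $180^\circ$, which is impossible for $m\le 4$ since the $m$ cyclic adjacent-pair sums total $720^\circ$ (and a short direct check confirms that for $m=3,4$ some $H_{w_i}$ always admits a one-example teaching set). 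So the smallest witness has five halfplanes, e.g.\ normals at multiples of $72^\circ$; the paper's eight-element class is a convenient, easily checkable instance. Once such a class is written down, the verification is a finite sign check and the lemma follows exactly as you describe.
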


\begin{proof}
We verify this lemma via Lemma~\ref{lem:lb-tdmin}, by providing 
a finite subclass $\cF$ of $\cH_2^0$ such that $\TD_{min}(\cF)=2$. 
Let $\cF=\{H_w : \vec{0} \neq w\in\{-1,0,1\}^2\}$. It is easy to 
verify that each of the $8$ halfspaces in $\cF$ has a teaching 
dimension of 2 with respect to $\cF$. This example can be extended 
to higher dimensions in the obvious way.
\end{proof}

We thus conclude that the class of homogeneous halfspaces 
has a preference-based teaching dimension of 2, independent 
of the dimensionality $d\ge 2$.

\begin{corollary}
For every $d\ge2$, we have $\PBTD(\cH_d^0)=2$.
\end{corollary}

By contrast, we will show next that the recursive teaching 
dimension of the class of homogeneous halfspaces grows with 
the dimensionality. % For any 
%set $S = \{s_1,\ldots,s_k\} \subseteq \bR^d$, let $\spn{S}$
%denote the span of $S$, that is, $\spn{S}$ consists of all
%linear combinations of $s_1,\ldots,s_k$ over the field $\bR$.

\begin{theorem}\label{thm:tdrtdhalfspace}
For any $d \ge 2$, $\TD(\mathcal{H}^0_d) = \RTD(\mathcal{H}^0_d) = d+1$.
\end{theorem}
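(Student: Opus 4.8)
The plan is to establish both equalities through the chain $\TD_{min}(\cH_d^0) \le \RTD(\cH_d^0) \le \TD(\cH_d^0)$, where the left inequality is Equation~(\ref{eq:rtd-tdmin}) with $\cL'=\cL=\cH_d^0$ and the right inequality is the standard bound obtained by placing all of $\cH_d^0$ into a single layer of a teaching sequence. The key extra ingredient is a symmetry argument: by Remark~\ref{rem:orthogonal-group}, for any unit vectors $u,u'$ there is an orthogonal map $Q$ with $Qu=u'$, and since $w^\top x = (Qw)^\top(Qx)$ we have $x\in H_w \iff Qx\in H_{Qw}$, so $(x,\ell)\mapsto(Qx,\ell)$ carries teaching sets for $H_w$ to teaching sets for $H_{Qw}$. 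Hence $\TD(H_w,\cH_d^0)$ is the same for every $w$, so $\TD_{min}(\cH_d^0)=\TD(\cH_d^0)$. It therefore suffices to prove $\TD(H_{w^*},\cH_d^0)=d+1$ for the single vector $w^*=\vec{e}_d$, which by the above fixes every quantity in the chain to $d+1$.

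For the upper bound $\TD(H_{\vec{e}_d},\cH_d^0)\le d+1$, I would exhibit the explicit positive teaching set $T=\{(\vec{e}_1,+),\dots,(\vec{e}_{d-1},+),(-(\vec{e}_1+\dots+\vec{e}_{d-1}),+),(\vec{e}_d,+)\}$ of size $d+1$. All these points lie in $H_{\vec{e}_d}=\{x:x_d\ge0\}$. If $H_w$ is consistent with $T$, the first $d-1$ examples give $w_i\ge0$ for $i<d$, the sum example gives $\sum_{i<d}w_i\le0$, and together these force $w_1=\dots=w_{d-1}=0$; since $w\neq\vec{0}$, the example $(\vec{e}_d,+)$ (giving $w_d\ge0$) then forces $w_d>0$, so $H_w=H_{\vec{e}_d}$.

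The harder half is the lower bound $\TD(H_{\vec{e}_d},\cH_d^0)\ge d+1$: I must show that no set $T$ of at most $d$ examples consistent with $H_{\vec{e}_d}$ can be a teaching set, i.e.\ that some $H_w\neq H_{\vec{e}_d}$ is also consistent with $T$. Write the examples of $T$ as positive points with $p_d>0$, boundary positive points $P_0$ with $p_d=0$, and negative points with $p_d<0$; note that the first and third types give \emph{strict} inequalities at $w=\vec{e}_d$ and are therefore preserved under small perturbations of $\vec{e}_d$. If $|P_0|\le d-1$, then $P_0$, regarded inside the hyperplane $\{x_d=0\}\cong\reals^{d-1}$, cannot positively span that $(d-1)$-dimensional space, so its dual cone contains a nonzero vector $v$ with $v^\top p\ge0$ for all $p\in P_0$; taking $w=\vec{e}_d+\epsilon v$ (with $v_d=0$) for sufficiently small $\epsilon>0$ keeps all boundary constraints satisfied and, by robustness, all strict ones too, while $H_w\neq H_{\vec{e}_d}$ since $v\neq\vec{0}$ lies in the boundary hyperplane. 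The remaining case $|P_0|=d$ forces $T$ to consist entirely of boundary positives, and then $H_{-\vec{e}_d}$ is consistent with $T$ as well; in either case $T$ fails to distinguish $H_{\vec{e}_d}$. The main obstacle I anticipate is precisely this case analysis: arguing that strict constraints survive perturbation, getting the positive-spanning count (at most $d-1$ vectors cannot positively span $\reals^{d-1}$) exactly right, and treating the degenerate all-boundary configuration separately.

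Combining the two bounds gives $\TD(H_{\vec{e}_d},\cH_d^0)=d+1$, whence $\TD_{min}(\cH_d^0)=\TD(\cH_d^0)=d+1$ by symmetry and $\RTD(\cH_d^0)=d+1$ by the sandwich, completing the proof.
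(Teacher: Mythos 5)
Your proposal is correct, and its overall architecture --- reduce by orthogonal symmetry to a single target, exhibit an explicit $(d+1)$-point positive teaching set for the upper bound, and use the sandwich $\TD_{min}(\cH_d^0)\le\RTD(\cH_d^0)\le\TD(\cH_d^0)$ together with $\TD_{min}=\TD$ to transfer the result to $\RTD$ --- is exactly the paper's; your upper-bound teaching set is the paper's set with $\vec{e}_1$ and $\vec{e}_d$ swapped. Where you genuinely diverge is in the lower bound, the harder half. The paper argues inductively: it asserts that any teaching set for $H_{\vec{e}_1}$ must contain a positive example $u$ orthogonal to $\vec{e}_1$, and that this example leaves a residual problem no easier than teaching $\vec{e}_1$ relative to the $(d-2)$-sphere $\{x\in S_{d-1}:x\perp u\}$, recursing down to the base case $d=2$. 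Your argument is instead a single direct perturbation: split the at most $d$ examples into those giving strict constraints at $w=\vec{e}_d$ (which survive small perturbations) and the boundary positives $P_0$ with $p_d=0$; if $|P_0|\le d-1$ these cannot positively span the hyperplane $\{x_d=0\}\cong\reals^{d-1}$, so a nonzero $v$ in the dual cone yields a consistent competitor $H_{\vec{e}_d+\epsilon v}\neq H_{\vec{e}_d}$, and the degenerate case $|P_0|=d$ is killed by $H_{-\vec{e}_d}$. Your route buys rigor and self-containedness: the paper's induction leaves both the "must contain an orthogonal positive example" claim and the "not easier than" reduction informal, whereas your argument rests only on the standard facts that a finitely generated cone in $\reals^{d-1}$ needs at least $d$ generators to be all of $\reals^{d-1}$ and that a proper closed convex cone has a nonzero dual vector. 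The paper's induction, in exchange, is shorter and makes the dimension count $d+1$ visually transparent (one orthogonal example consumed per dimension plus a base case). Both are valid; yours would be the easier one to referee.
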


\begin{proof}
Assume by normalization that the target weight vector has norm $1$,
i.e., it is taken from $S_{d-1}$. Remark~\ref{rem:orthogonal-group} 
implies that all weight vectors in $S_{d-1}$ are equally hard to teach. 
It suffices therefore to show that $\TD(H_{\vec{e}_1},\cH^0_d) = d+1$.

We first show that $\TD(H_{\vec{e}_1},\cH^0_d) \le d+1$.
Define $u = -\sum_{i=2}^{d}\vec{e}_i$. We claim 
that $T = \{ (\vec{e}_i,+): 2 \leq i \leq d\} \cup \{(u,+),(\vec{e}_1,+)\}$
is a teaching set for $H_{\vec{e}_1}$ w.r.t.~$\cH^0_d.$ %$S_{d-1}$.
Consider any $w \in S_{d-1}$ such that $H_w$ is consistent 
with $T$. Note that $w_i = \spn{\vec{e}_i,w} \geq 0$ for 
all $i \in \{2,\ldots,d\}$ 
and $\spn{u,w} = -\sum_{i=2}^{d}w_i \geq 0$ together
imply that $w_i = 0$ for all $i\in\{2,\ldots,d\}$
and therefore $w = \pm \vec{e}_1$. Furthermore, 
$w_1 = \spn{w,\vec{e}_1} \geq 0$, and so $w = \vec{e}_1$, as required.

Now we show that $\TD(H_{\vec{e}_1},\cH^0_d) \geq d+1$ holds
for all $d\ge2$. It is easy to see that two examples do not 
suffice for distinguishing $\vec{e}_1\in\reals^2$ from all weight
vectors in $S_1$. In other words, $\TD(H_{\vec{e}_1},\cH^0_2)  \ge3$.
Suppose now that $d\ge3$. It is furthermore easy to see that 
a teaching set $T$ which distinguishes $\vec{e}_1$ from all weight
vectors in $S_{d-1}$ must contain at least one positive 
example $u$ that is orthogonal to $\vec{e}_1$. The 
inequality $\TD(H_{\vec{e}_1},\cH^0_d) \geq d+1$ is now obtained 
inductively because the example $(u,+) \in T$ 
leaves open a problem that is not easier than teaching $\vec{e}_1$ 
w.r.t.~the $(d-2)$-dimensional sphere $\{x \in S_{d-1}: x \perp u\}$.
\end{proof}  

We have thus established that the class of homogeneous halfspaces 
has a recursive teaching dimension growing linearly with $d$, 
while its preference-based teaching dimension is constant. 
In the case of general (i.e., not necessarily homogeneous) 
$d$-dimensional halfspaces, the difference between $\RTD$ 
and $\PBTD$ is even more extreme. 
On the one hand, by generalizing the proof of Lemma~\ref{lem:huge-gap}, 
it is easy to see that $\RTD(\cH_d)=\infty$ for all $d\ge 1$. On the 
other hand, we will show in the remainder of this section  
that $\PBTD(\cH_d) \le 6$, independent of the value of $d$. 

We will assume in the sequel (by way of normalization) that 
an inhomogeneous halfspace has a bias $b\in\{\pm1\}$.
We start with the following result:

\begin{lemma} \label{lem:3-examples}
Let $w^* \in \reals^d$ be a vector with a non-trivial $d$-th 
component $w^*_d\neq0$ and let $b^*\in\{\pm1\}$ be a bias. 
Then there exist three examples labeled according to $H_{w^*,b^*}$
such that the following holds. Every weight-bias pair $(w,b)$ 
consistent with these examples 
satisfies $b=b^*$, $\sign(w_d) = \sign(w_d^*)$ and
\begin{equation} \label{eq:w-d-constraint} 
\left\{ \begin{array}{ll}
         |w_d| \ge |w_d^*| & \mbox{if $b^*=-1$} \\
         |w_d| \le |w_d^*| & \mbox{if $b^*=+1$}
       \end{array} \right. \enspace .
\end{equation}
\end{lemma}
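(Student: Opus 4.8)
The plan is to place every example on the $d$-th coordinate axis together with the origin, and to read off all three conclusions from the one-dimensional restriction of the halfspace to that axis. Since we only consider normalized representatives with $b\in\{\pm1\}$, the origin alone fixes the bias: the target $H_{w^*,b^*}$ contains $\vec{0}$ iff $b^*\ge0$, so I would present $(\vec{0},+)$ when $b^*=+1$ and $(\vec{0},-)$ when $b^*=-1$. Any consistent pair then has $b\ge0$ (resp.\ $b<0$), and since $b\in\{\pm1\}$ this forces $b=b^*$. From here on I substitute $b=b^*$ into the remaining constraints.

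For a point $t\vec{e}_d$ the consistency inequality is $t\,w_d+b^*\ge0$, and the target's boundary on the axis sits at $t^*=-b^*/w_d^*$, where the target value equals exactly $0$; hence $(t^*\vec{e}_d,+)$ is labeled according to $H_{w^*,b^*}$. Presenting this second example forces $t^*w_d+b^*\ge0$, which after collecting the factor $b^*$ reads $b^*\bigl(1-w_d/w_d^*\bigr)\ge0$. This is exactly the magnitude part of the claim: for $b^*=-1$ it gives $w_d/w_d^*\ge1$, and for $b^*=+1$ it gives $w_d/w_d^*\le1$. A convenient feature is that $t^*$ scales with $1/w_d^*$, so the argument will go through uniformly for both signs of $w_d^*$ and no separate reflection step is needed.

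The two bias cases then diverge only in how the sign of $w_d$ is nailed down. When $b^*=-1$, the inequality $w_d/w_d^*\ge1$ already forces $w_d$ and $w_d^*$ to share a sign and yields $|w_d|\ge|w_d^*|$ (checking the case $w_d^*<0$ separately, where multiplying through by $w_d^*$ reverses the inequality); thus the origin together with $(t^*\vec{e}_d,+)$ --- only two examples --- suffice. When $b^*=+1$, the bound $w_d/w_d^*\le1$ leaves $w_d$ free to be large of the wrong sign, so I would add a third example, the negative point $\bigl(-\tfrac{2}{w_d^*}\vec{e}_d,-\bigr)$; its target value is $-1<0$, so it is correctly labeled, and consistency forces $-2w_d/w_d^*+1<0$, i.e.\ $w_d/w_d^*>\tfrac12$. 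Together with $w_d/w_d^*\le1$ this gives $\tfrac12<w_d/w_d^*\le1$, whence $\sign(w_d)=\sign(w_d^*)$ and $|w_d|\le|w_d^*|$.

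Since at most three examples are used in either case (two for $b^*=-1$, three for $b^*=+1$, with the shortfall paddable by any further point labeled according to the target), the statement follows. I expect no real difficulty: the only points needing care are the direction of the inequalities when $w_d^*<0$, and the observation that the boundary example alone cannot determine the sign of $w_d$ in the $b^*=+1$ case, which is precisely why the third, negative example is required.
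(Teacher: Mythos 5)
Your construction is essentially the paper's: the origin labeled with the sign of $b^*$, the boundary point $-\tfrac{b^*}{w_d^*}\vec{e}_d$ labeled positive, and (for $b^*=+1$) the point $-\tfrac{2b^*}{w_d^*}\vec{e}_d$ labeled $-b^*$ are exactly the three examples $\vec{0}$, $\vec{a}_3$, $\vec{a}_2$ used in the paper, and your analysis of the resulting inequalities matches the paper's. Your observation that the analogue of $\vec{a}_2$ is redundant when $b^*=-1$ (the boundary example already forces $w_d/w_d^*\ge 1$, hence both the sign and the magnitude conclusions) is correct and is a small economy over the paper, which uses all three examples in both cases.

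The one genuine gap is your opening assumption that $b\in\{\pm1\}$. The paper's normalization is $b\in\{-1,0,1\}$ --- homogeneous halfspaces carry $b=0$ --- and the lemma is invoked in the proof of Theorem~\ref{th:halfspace-ub} in a situation where the learner's preference rules place every homogeneous halfspace above every inhomogeneous one, so the examples must also rule out all hypotheses with $b=0$. Your claim that the origin alone fixes $b=b^*$ is therefore not justified when $b^*=+1$: the positively labeled origin only yields $b\in\{0,1\}$, which is exactly the constraint the paper records at that stage. Fortunately your examples still do the job: for $b=0$ the boundary example gives $w_d/w_d^*\le 0$ while your negative example gives $w_d/w_d^*>0$, a contradiction, so $b=0$ is excluded and $b=1=b^*$ follows --- but this needs to be argued rather than assumed, and you cannot substitute $b=b^*$ into the later constraints before doing so. (For $b^*=-1$ the negatively labeled origin already excludes $b=0$, so that case is unaffected.)
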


\begin{proof}
Within the proof, we use the label ``$1$'' instead of ``$+$''
and the label ``$-1$''  instead of ``$-$''. 
The pair $(w,b)$ denotes the student's hypothesis 
for the target weight-bias pair $(w^*,b^*)$. 
The examples shown to the student will involve the unknown 
quantities $w^*$ and $b^*$. Each example will lead to a new 
constraint on $w$ and $b$. We will see that the collection 
of these constraints reveals the required information.
We proceed in three stages:
\begin{enumerate}
\item
The first example is chosen as $(\vec{0},b^*)$.
The pair $(w,b)$ can be consistent with this example only
if $b = -1$ in the case that $b^*=-1$ and $b\in\{0,1\}$
in the case that $b^*=1$.
\item
The next example is chosen 
as $\vec{a}_2 = -\frac{2b^*}{w_d^*}\cdot\vec{e}_d$
and labeled ``$-b^*$''. 
Note that $\spn{w^*,\vec{a}_2}+b^* = -b^*$.
We obtain the following new constraint:
\[ 
\spn{w,\vec{a}_2}+b = 
\left\{ \begin{array}{ll} 
         -2\frac{w_d}{w_d^*}+\overbrace{b}^{\in\{0,1\}}  <  0 & \mbox{if $b^*=1$} \\
         +2\frac{w_d}{w_d^*}+\underbrace{b}_{=-1} \ge 0 & \mbox{if $b^*=-1$}
        \end{array} \right.  \enspace .
\]
The pair $(w,b)$ with $b=b^*$ if $b^*=-1$ and $b\in\{0,1\}$ 
if $b^*=1$ can satisfy the above constraint only
if the sign of $w_d$ equals the sign of $w_d^*$. 
\item
The third example is chosen as
the example $\vec{a}_3 = -\frac{b^*}{w_d^*}\cdot\vec{e}_d$ 
with label ``$1$''. 
Note that $\spn{w^*,\vec{a}_3}^*+b^* = 0$. 
We obtain the following new constraint: 
\[ 
\spn{w,\vec{a}_3} = -\frac{b^*w_d}{w_d^*}+b \ge 0 \enspace .
\]
Given that $w$ is already constrained to weight vectors
satisfying $\sign(w_d)=\sign($ $w_d^*)$, we can safely 
replace $w_d/w_d^*$ by $|w_d|/|w_d^*|$. This yields
$|w_d|/|w_d^*| \le b$ if $b^*=1$ and $|w_d|/|w_d^*|\ge -b$
if $b^*=-1$. Since $b$ is already constrained as described
in stage 1 above, we obtain $|w_d|/|w_d^*| \le b \in \{0,1\}$
if $b^*=1$ and $|w_d|/|w_d^*|\ge -b = 1$ if $b^*=-1$.
The weight-bias pair $(w,b)$ satisfies these constraints
only if $b=b^*$ and if~(\ref{eq:w-d-constraint}) is valid. 
\end{enumerate}
The assertion of the lemma is immediate from this discussion.
\end{proof}

\begin{theorem} \label{th:halfspace-ub}
$\PBTD(\cH_d) \le 6$.
\end{theorem}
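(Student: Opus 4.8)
The plan is to teach an inhomogeneous halfspace $H_{w^*,b^*}$ by first spending one example to pin down the bias $b^*\in\{\pm1\}$ and to ensure (via a preference on the number of trailing zero coordinates, exactly as in the proof of Theorem~\ref{th:halfspace0-ub}) that we may assume $w^*_d\neq0$. This reduces the problem to the situation handled by Lemma~\ref{lem:3-examples}. The key observation is that Lemma~\ref{lem:3-examples} already delivers, with only three examples, the sign of $w_d$, the exact value of $b$, and a \emph{one-sided} inequality on $|w_d|$: namely $|w_d|\ge|w_d^*|$ when $b^*=-1$ and $|w_d|\le|w_d^*|$ when $b^*=+1$. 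So I would design the student's preference relation to be extremal in $|w_d|$ in precisely the matching direction --- preferring larger $|w_d|$ when the observed bias is $-1$ and smaller $|w_d|$ when the bias is $+1$ --- so that among all hypotheses consistent with the three examples, the unique $\prec$-maximal one is forced to have $|w_d|=|w_d^*|$.

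Once $|w_d|=|w_d^*|$ and $\sign(w_d)=\sign(w_d^*)$ are fixed, the remaining freedom is in the first $d-1$ coordinates of $w$, with $w_d$ (and $b$) now nailed down. First I would normalize by the orthogonal group (Remark~\ref{rem:orthogonal-group} applied in the hyperplane $x_d=\text{const}$, or more simply by scaling) so that the residual problem looks like teaching a homogeneous halfspace in dimension $d-1$: the decision boundary restricted to the affine hyperplane where $x_d$ is held fixed is governed by the projection $(w_1,\ldots,w_{d-1})$. I would then invoke the two-example strategy of Theorem~\ref{th:halfspace0-ub} --- really its underlying Lemma~\ref{lem:close2northpole} --- to pin down this $(d-1)$-dimensional direction with two further examples, using the \emph{same} kind of ``close to the north pole'' preference on the penultimate coordinate. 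This accounts for $1+3+2=6$ examples total, matching the claimed bound.

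The main obstacle will be rigorously combining the two preference relations --- the extremal-$|w_d|$ preference from the first phase and the homogeneous-direction preference from the second phase --- into a single strict partial order on all of $\cH_d$ that simultaneously validates all six examples as a preference-based teaching set. The cleanest route is the admissible-mapping framework of Section~\ref{sec:admissible-mappings}: rather than exhibiting $\prec$ directly, I would specify the six-example mapping $L\mapsto T(L)$ explicitly (one example for trailing zeros, three from Lemma~\ref{lem:3-examples}, two from Lemma~\ref{lem:close2northpole}) and verify that $\trcl(R_T)$ is asymmetric, i.e.\ that $T$ is admissible for $\cH_d$; the lemma relating admissible mappings to $\PBTD$ then yields $\PBTD(\cH_d)\le\ord(T)\le6$ at once. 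Concretely, asymmetry should follow because the preference is built lexicographically (first on trailing-zero count, then on $|w_d|$ in the bias-determined direction, then on the penultimate coordinate), and a lexicographic combination of strict orders is again a strict order, so no cycle in $R_T$ can arise. Checking that the three phases do not interfere --- in particular that the later examples remain consistent in the degenerate boundary cases and that fixing $b$ and $w_d$ before optimizing the remaining coordinates is legitimate --- is the delicate bookkeeping I would need to handle with care.
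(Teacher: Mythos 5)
Your overall architecture matches the paper's (reduce to $w_d^*\neq 0$ via a trailing-zeros preference, use Lemma~\ref{lem:3-examples} plus an extremal preference on $|w_d|$ to pin down $b$ and $w_d$, then place examples on the hyperplane $x_d=-b^*/w_d^*$ so that the residual problem is teaching the homogeneous halfspace $H_{\vec{w}_{d-1}^*}$ with two more examples), but there is a genuine gap at the end. After the two homogeneous-style examples you only know $H_{\vec{w}_{d-1}}=H_{\vec{w}_{d-1}^*}$, i.e.\ $\vec{w}_{d-1}=s\cdot\vec{w}_{d-1}^*$ for some $s>0$. Because the normalization is on the bias ($b\in\{\pm1\}$) and not on the weight vector --- and it must be, for Lemma~\ref{lem:3-examples} and the extremal-$|w_d|$ trick to make sense --- you cannot ``normalize by scaling'' this away: the halfspaces $H_{(s\vec{w}_{d-1}^*,\,w_d^*),\,b^*}$ for different $s>0$ are genuinely distinct concepts of $\cH_d$, since the relative scale between $\vec{w}_{d-1}$ and the fixed pair $(w_d,b)$ matters. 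A preference alone (e.g.\ ``prefer smaller norm'') cannot resolve this either, because the infimum $s\to 0$ is not attained among representatives of the target. The paper spends a sixth example here: a point $\vec{a}_6=(\sign(w_1^*),\ldots,\sign(w_{d-1}^*),-(L+b^*)/w_d^*)$ with $L=\|\vec{w}_{d-1}^*\|_1$, labeled positive, which together with a prefer-smaller-$L_1$-norm rule forces $\|\vec{w}_{d-1}\|_1=L$ and hence $s=1$. Your budget of six is already exhausted because you spend a separate example on the bias, which is redundant: in the paper the bias is revealed by the first of the three examples of Lemma~\ref{lem:3-examples} (the labeled origin), and the trailing-zeros reduction costs no examples at all, being enforced purely by the preference relation together with the teacher's commitment to use examples with enough trailing zeros. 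Dropping your redundant first example and adding the norm-fixing example would recover the paper's count of $3+2+1=6$.

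A smaller but concrete slip: your extremal preference on $|w_d|$ is stated in the wrong direction. When $b^*=-1$ the lemma gives the constraint $|w_d|\ge|w_d^*|$, so to make the target optimal among consistent hypotheses the student must prefer \emph{smaller} $|w_d|$ (and symmetrically prefer larger $|w_d|$ when $b^*=+1$); you state the opposite, under which the preferred $|w_d|$ would be unbounded and no hypothesis would be selected. Your suggestion to verify everything through the admissible-mapping framework rather than building the lexicographic preference by hand is reasonable and equivalent to the paper's rule-by-rule construction, but it does not substitute for the missing sixth constraint.
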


\begin{proof}
As in the proof of Lemma~\ref{lem:3-examples}, we use the 
label ``$1$'' instead of ``$+$'' and the label ``$-1$''  
instead of ``$-$''. As in the proof of Theorem~\ref{th:halfspace0-ub}, 
we may assume without loss of generality that the target weight 
vector $w^*\in\reals^d$ satisfies $w_d^* \neq 0$. The proof will
proceed in stages. On the way, we specify six rules which
determine the preference relation of the student.

{\bf Stage 1} is concerned with teaching homogeneous halfspaces 
given by $w^*$ (and $b^*=0$). The student respects the following 
rules:
\begin{description}
\item[Rule 1:]
She prefers any pair $(w,0)$ over any pair $(w',b)$ with $b\neq0$.
In other words, any homogeneous halfspace is preferred over any
non-homogeneous halfspace.
\item[Rule 2:]
Among homogeneous halfspaces, her preferences are the same as the 
ones that were used within the proof of Theorem~\ref{th:halfspace0-ub} 
for teaching homogeneous halfspaces.
\end{description}
Thus, if $b^*=0$, then we can simply apply the teaching protocol 
for homogeneous halfspaces. In this case, $w^*$ can be taught 
at the expense of only two examples. 

Stage~1 reduces the problem to teaching inhomogeneous halfspaces
given by $(w^*,$ $b^*)$ with $b^*\neq0$. We assume, by way of 
normalization, that $b^*\in\{\pm1\}$, but note that $w^*$ 
can now not be assumed to be of unit (or any other fixed)
length. 

In {\bf stage 2}, the teacher presents three examples in accordance
with Lemma~\ref{lem:3-examples}. It follows that the student will
take into consideration only weight-bias pairs $(w,b)$ such that 
the constraints $b=b^*$, $\sign(w_d) = \sign(w_d^*)$ 
and~(\ref{eq:w-d-constraint}) are satisfied. The following rule
will then induce the constraint $w_d=w_d^*$:
\begin{description}
\item[Rule 3:]
Among the pairs $(w,b)$ such that $w_d\neq0$ and $b\in\{\pm1\}$,
the student's preferences are as follows. If $b=-1$ (resp.~$b=1$), 
then she prefers vectors $w$ with a smaller (resp.~larger) value 
of $|w_d|$ over those with a larger (resp.~smaller) value of $|w_d|$.
\end{description}
Thanks to Lemma~\ref{lem:3-examples} and thanks to Rule 3,
we may from now on assume that $b=b^*$ and $w_d=w_d^*$.
In the sequel, let $w^*$ be decomposed according 
to $w^* = (\vec{w}_{d-1}^*,w_d^*) \in \reals^{d-1}\times\reals$. 
We think of $\vec{w}_{d-1}$ as the student's hypothesis 
for $\vec{w}_{d-1}^*$.

{\bf Stage~3} is concerned with the special case 
where $\vec{w}_{d-1}^*=\vec{0}$. The student will 
automatically set $\vec{w}_{d-1}=\vec{0}$ if
we add the following to the student's rule system:
\begin{description}
\item[Rule 4:]
Given that the values for $w_d$ and $b$ have been fixed already
(and are distinct from $0$), the student prefers weight-bias
pairs with $\vec{w}_{d-1}=\vec{0}$ over any weight-bias pair 
with $\vec{w}_{d-1}\neq\vec{0}$.
\end{description}

Stage 3 reduces the problem to teaching $(w^*,b^*)$ with 
fixed non-zero values for $w_d$ and $b^*$ (known to the student) 
and with $\vec{w}_{d-1}^* \neq \vec{0}$. Thus, essentially,
only $\vec{w}_{d-1}^*$ has still to be taught. In the next
stage, we will argue that the problem of teaching $\vec{w}_{d-1}^*$
is equivalent to teaching a homogeneous halfspace.

In {\bf stage 4}, the teacher will present only examples $a$ such 
that $a_d = -\frac{b^*}{w_d^*}$ so that the contribution of 
the $d$-th component to the inner product of $w^*$ and $a$ 
cancels with the bias $b^*$. Given this commitment for $a_d$,
the first $d-1$ components of the examples can be chosen so as
to teach the homogeneous halfspace $H_{\vec{w}_{d-1}^*}$.
According to Theorem~\ref{th:halfspace0-ub}, this can be 
achieved at the expense of two more examples. Of course
the student's preferences must match with the preferences 
that were used in the proof of this theorem:
\begin{description}
\item[Rule 5:] 
Suppose that the values of $w_d$ and $b$ have been fixed
already (and are distinct from $0$) and suppose 
that $\vec{w}_{d-1}\neq\vec{0}$. Then the preferences
for the choice of $\vec{w}_{d-1}$ match with the preferences
that were used in the protocol for teaching homogeneous
halfspaces.
\end{description}

After stage 4, the student takes into consideration only
weight-bias pairs $(w,b)$ such that $w_d=w_d^*$, $b=b^*$
and $H_{\vec{w}_{d-1}} = H_{\vec{w}_{d-1}^*}$.
However, since we had normalized the bias and not the weight
vector, this does not necessarily mean 
that $\vec{w}_{d-1} = \vec{w}_{d-1}^*$. On the other hand, 
the two weight vectors already coincide modulo a positive 
scaling factor, say 
\begin{equation} \label{eq:scaling}
\vec{w}_{d-1} = s \cdot \vec{w}_{d-1}^* \mbox{ for some $s>0$}
\enspace .
\end{equation} 
In order to complete the proof, it suffices to teach 
the $L_1$-norm of $\vec{w}_{d-1}^*$ to the student
(because~(\ref{eq:scaling}) 
and $\|\vec{w}_{d-1}\|_1 = \|\vec{w}_{d-1}^*\|_1$
imply that $\vec{w}_{d-1} = \vec{w}_{d-1}^*$). 
The next (and final) stage serves precisely this purpose.

As for {\bf stage~5}, we first fix some notation.
For $i=1,\ldots,k-1$, let $\beta_i = \sign(w_i^*)$. 
Note that~(\ref{eq:scaling}) implies 
that $\beta_i = \sign(w_i)$.
Let $L = \|\vec{w}_{d-1}^*\|_1$ denote the $L_1$-norm 
of $\vec{w}_{d-1}^*$. The final example is chosen 
as $\vec{a}_6 = (\beta_1,\ldots,\beta_{d-1},-(L+b^*)/w_d^*)$
and labeled ``$1$''. Note that 
\[ 
\spn{w^*,\vec{a_6}} + b^* = |w_1^*|+\ldots+|w_{d-1}^*|-L = 0
\enspace .
\]
Given that $\beta_i=\sign(w_i)$, $w_d=w_d^*$ and $b=b^*$,
the student can derive from $\vec{a_6}$ and its label the 
following constraint on~$\vec{w}_{d-1}$:
\[ 
\spn{w,\vec{a_6}} + b = |w_1|+\ldots+|w_{d-1}| - L \ge 0
\enspace .
\] 
In combination with the following rule, we can now force
the constraint $\|\vec{w}_{d-1}\|_1 = L$:
\begin{description}
\item[Rule 6:]
Suppose that the values of $w_d$ and $b$ have been fixed
already (and are distinct from $0$) and suppose
that $H_{\vec{w}_{d-1}}$ has already been fixed.
Then, among the vectors representing $H_{\vec{w}_{d-1}}$,
the ones with a smaller $L_1$-norm are preferred over the
ones with a larger $L_1$-norm.
\end{description}
An inspection of the six stages reveals that at most
six examples altogether were shown to the student
(three in stage 2, two in stage 4, and one in stage 5).
This completes the proof of the theorem. \\ \mbox{}
\end{proof}

Note that Theorems~\ref{th:halfspace0-ub} and~\ref{th:halfspace-ub}
remain valid when we allow $w$ to be the all-zero vector, which 
extends $\mathcal{H}_d^0$ by $\{\bR^d\}$ and $\mathcal{H}_d$ 
by $\{\bR^d,\emptyset\}$. $\bR^d$ will be taught with a single 
positive example, and $\emptyset$ with a single negative example.  
The student will give the highest preference to $\bR^d$, the second highest 
to $\emptyset$, and among the remaining halfspaces, the student's preferences 
stay the same.

\section{Classes with $\PBTD$ or $\PBTD^+$ Equal to One}\label{sec:pbtd1}

In this section, we will give complete characterizations of 
(i) the concept classes with a positive preference-based 
teaching dimension of $1$, and (ii) the concept classes 
with a preference-based teaching dimension of $1$.
Throughout this section, we use the label ``$1$'' to
indicate positive examples and the label ``$0$'' to
indicate negative examples. 

Let $I$ be a (possibly infinite) index set. We will
consider a mapping $A: I \times I \ra \{0,1\}$ as a
binary matrix $A \in \{0,1\}^{I \times I}$. 
$A$ is said to be {\em lower-triangular} if there 
exists a linear ordering $\prec$ on $I$ such that $A(i,i')=0$ 
for every pair $(i,i')$ such that $i \prec i'$. 

We will occasionally identify a set $L\seq\cX$ with
its indicator function by setting $L(x) = \Ind_{[x \in L]}$.

\noindent
For each $M \seq \cX$, we define
\[
M \oplus L=(L \sm M) \cup (M \sm L)
\]
and
\[ 
M \oplus \cL = \{M \oplus L: L\in\cL\}  \enspace . \]
For $T \seq \cX\times\{0,1\}$, we define similarly
\[ 
M \oplus T = 
\{(x,\bar y): (x,y) \in T\mbox{ and } x \in M\} \cup
\{(x,y) \in T: x \notin M\} \enspace .
\]
Moreover, given $M\seq\cX$ and a linear ordering $\prec$
on $\cL$, we define a linear ordering $\prec_M$ 
on $M \oplus \cL$ as follows:
\[ 
M \oplus L' \prec_M M \oplus L \dund
\underbrace{M \oplus (M \oplus L')}_{=L'} \prec
\underbrace{M \oplus (M \oplus L)}_{=L} \enspace .
\]

\begin{lemma} \label{lem:flip}
With this notation, the following holds.
If the mapping $\cL \ni L \mapsto T(L) \seq\cX\times\{0,1\}$ 
assigns a teaching set to $L$ w.r.t.~$(\cL,\prec)$, then
the mapping 
$M \oplus \cL \ni M \oplus L \mapsto M \oplus T(L) \seq \cX\times\{0,1\}$ 
assigns a teaching set to $M \oplus L$ w.r.t.~$(M\oplus\cL,\prec_M)$.
\end{lemma}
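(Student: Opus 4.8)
The plan is to isolate a single invariance property and then let the result follow almost formally. The crucial observation is that $L \mapsto M \oplus L$ is an involution on $2^\cX$ (since $M \oplus (M \oplus L) = L$) and that it interacts with the label-flipping operation $T \mapsto M \oplus T$ so as to preserve consistency. Concretely, I would first prove the claim that for every $L' \seq \cX$ and every $T \seq \cX\times\{0,1\}$,
\[
L' \mbox{ is consistent with } T \dund M \oplus L' \mbox{ is consistent with } M \oplus T \enspace .
\]

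To establish this claim I would check consistency example by example. Fix $(x,y) \in T$. If $x \notin M$, then $M \oplus T$ still contains $(x,y)$ and $x \in M \oplus L' \dund x \in L'$, so the two consistency conditions at $x$ coincide. If $x \in M$, then $M \oplus T$ contains $(x,\bar y)$ while $x \in M \oplus L' \dund x \notin L'$; hence the condition that $M\oplus L'$ be consistent with $(x,\bar y)$ reads $\Ind_{[x \in M \oplus L']} = \bar y$, i.e.\ $\Ind_{[x \notin L']} = \bar y$, i.e.\ $\Ind_{[x \in L']} = y$, which is exactly the condition that $L'$ be consistent with $(x,y)$. Since every example of $M \oplus T$ arises from exactly one example of $T$ in this way, the biconditional follows.

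With the claim in hand the lemma is immediate. Recall that, by Definition~\ref{def:td-prec-L}, $T(L)$ being a teaching set for $L$ w.r.t.~$(\cL,\prec)$ means that $L$ is the unique concept of $\cL\sm\cL_{\prec L}$ consistent with $T(L)$; equivalently, $L$ is consistent with $T(L)$ and every $L'\in\cL\sm\{L\}$ consistent with $T(L)$ satisfies $L'\prec L$. First, $M\oplus L$ is consistent with $M\oplus T(L)$ by the claim applied to $L'=L$. Second, suppose $\hat L'\in M\oplus\cL$ is consistent with $M\oplus T(L)$ and $\hat L'\neq M\oplus L$. Writing $\hat L' = M\oplus L'$ for the unique $L'\in\cL$ (using that $M\oplus(\cdot)$ is a bijection from $\cL$ onto $M\oplus\cL$), the claim gives that $L'$ is consistent with $T(L)$, and $\hat L'\neq M\oplus L$ forces $L'\neq L$. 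Hence $L'\prec L$, and by the defining equivalence of $\prec_M$ this is exactly $M\oplus L' \prec_M M\oplus L$, i.e.\ $\hat L' \in (M\oplus\cL)_{\prec_M(M\oplus L)}$. Thus $M\oplus L$ is the unique concept of $(M\oplus\cL)\sm(M\oplus\cL)_{\prec_M(M\oplus L)}$ consistent with $M\oplus T(L)$, which is precisely the assertion.

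I expect no serious obstacle here: the entire content lies in the consistency-preservation claim, and even that reduces to the two-line case distinction above. The only points requiring a little care are bookkeeping --- confirming that $M\oplus(\cdot)$ is a genuine bijection between $\cL$ and $M\oplus\cL$ (so that $\hat L'$ determines $L'$ uniquely) and that $\prec_M$ is a well-defined linear order, both of which are immediate from the involution property and the defining equivalence $M\oplus L'\prec_M M\oplus L \dund L'\prec L$.
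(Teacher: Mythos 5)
Your proof is correct, and it fills in exactly the argument the paper has in mind: the paper omits the proof of Lemma~\ref{lem:flip} entirely ("Since this result is rather obvious, we skip its proof"), and the natural justification is precisely your consistency-preservation claim ($L'$ consistent with $T$ iff $M\oplus L'$ consistent with $M\oplus T$) combined with the fact that $M\oplus(\cdot)$ is an involution and $\prec_M$ is defined to make it order-preserving. No gaps; your case distinction on $x\in M$ versus $x\notin M$ is the entire content.
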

Since this result is rather obvious, we skip its proof.

We say that $\cL$ and $\cL'$ are {\em equivalent} 
if $\cL' = M\oplus\cL$ for some $M\seq\cX$ (and this 
clearly is an equivalence relation). As an immediate 
consequence of Lemma~\ref{lem:flip}, we obtain the 
following result:

\begin{lemma}
If $\cL$ is equivalent to $\cL'$, then $\PBTD(\cL) = \PBTD(\cL')$.
\end{lemma}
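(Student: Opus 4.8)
The plan is to reduce everything to Lemma~\ref{lem:flip}, which already packages the essential transfer between $\cL$ and $M\oplus\cL$, and then to exploit that the operation $L\mapsto M\oplus L$ (equivalently $\cL\mapsto M\oplus\cL$) is an involution, so that the equivalence relation is symmetric and a single inequality suffices.

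First I would reduce to linear orders: by Lemma~\ref{lem:extension}(2), $\PBTD(\cL)=\inf_{\prec}\PBTD(\cL,\prec)$ with $\prec$ ranging over strict linear orders on $\cL$, and likewise for $\cL'=M\oplus\cL$. This reduction matters because the twisted order $\prec_M$ is only defined when $\prec$ is linear. Fixing a strict linear order $\prec$ on $\cL$, I would then show $\PBTD(\cL',\prec_M)\le\PBTD(\cL,\prec)$. The driving observation is that $T\mapsto M\oplus T$ merely flips the labels of those examples lying in $M$ and is a bijection on sets of labeled examples, hence preserves cardinality: $|M\oplus T|=|T|$. By Lemma~\ref{lem:flip}, for every $L\in\cL$, if $T$ is a teaching set for $L$ w.r.t.~$(\cL,\prec)$ then $M\oplus T$ is a teaching set for $M\oplus L$ w.r.t.~$(\cL',\prec_M)$ of the same size; therefore $\PBTD(M\oplus L,\cL',\prec_M)\le\PBTD(L,\cL,\prec)$, and taking the supremum over $L$ yields $\PBTD(\cL',\prec_M)\le\PBTD(\cL,\prec)$.

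Taking the infimum over all linear orders $\prec$ on $\cL$, and using that each $\prec_M$ is itself a linear order on $\cL'$ (so the family $\{\prec_M\}$ is contained in the set of linear orders over which $\PBTD(\cL')$ infimizes), I obtain
\[
\PBTD(\cL')\le\inf_{\prec}\PBTD(\cL',\prec_M)\le\inf_{\prec}\PBTD(\cL,\prec)=\PBTD(\cL).
\]
Finally I would invoke symmetry: since $M\oplus(\cdot)$ is an involution, $\cL'=M\oplus\cL$ forces $\cL=M\oplus\cL'$, so the inequality just established applies verbatim with the roles of $\cL$ and $\cL'$ interchanged, giving $\PBTD(\cL)\le\PBTD(\cL')$. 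Combining the two inequalities yields $\PBTD(\cL)=\PBTD(\cL')$.

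There is no genuine obstacle here; this is flagged as an immediate consequence of Lemma~\ref{lem:flip}, and the only points deserving a line of justification are that flipping labels preserves the size of a teaching set (so $\PBTD$-values transfer exactly rather than merely up to a constant) and that passing to linear orders via Lemma~\ref{lem:extension}(2) is legitimate, both of which are routine bookkeeping.
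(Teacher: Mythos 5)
Your proof is correct and follows exactly the route the paper intends: the paper omits the proof, declaring the lemma an immediate consequence of Lemma~\ref{lem:flip}, and you have simply filled in the routine details (cardinality preservation under $M\oplus(\cdot)$, the reduction to linear orders via Lemma~\ref{lem:extension}, and the involution argument for symmetry). Nothing is missing.
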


%A subset $\cX' \seq \cX$ is called {\em $\cL$-distinguishing}
%if, for each pair of distinct concepts $L,L'\in\cL$,
%there exists $x\in\cX$ such that $L(x) \neq L(x')$.

The following lemma provides a necessary condition for a concept class to have a preference-based teaching dimension of one.

\begin{lemma} \label{lem:single-occurrence}
Suppose that $\cL \seq2^\cX$ is a concept class of $\PBTD$ $1$.
Pick a linear ordering $\prec$ on $\cL$ and a 
mapping $\cL \ni L \mapsto (x_L,y_L) \in \cX\times\{0,1\}$
such that, for every $L\in\cL$, $\{(x_L,y_L)\}$ is a teaching
set for $L$ w.r.t.~$(\cL,\prec)$. Then
\begin{itemize}
\item 
either every instance $x\in\cX$ occurs at most once in $(x_L)_{L\in\cL}$ 
\item 
or there exists a concept $L^*\in\cL$ that is preferred over 
all other concepts in $\cL$ and $x_{L^*}$ is the only 
instance from $\cX$ that occurs twice in $(x_L)_{L\in\cL}$.
\end{itemize}
\end{lemma}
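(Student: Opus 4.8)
The plan is to argue by contradiction: suppose some instance $x\in\cX$ occurs at least twice among the $(x_L)_{L\in\cL}$, so there are distinct concepts $L_1\neq L_2$ with $x_{L_1}=x_{L_2}=x$. The key idea is that a single-example teaching set $\{(x_L,y_L)\}$ for $L$ w.r.t.~$(\cL,\prec)$ must distinguish $L$ from every concept $L'$ with $L'\not\prec L$, i.e.~from $L$ itself together with all concepts that are incomparable to $L$ or $\prec$-above $L$. Since $\prec$ is linear, ``not preferred below $L$'' simply means ``$\prec$-above $L$ or equal to $L$''. So the constraint is: for every $L'\in\cL$ with $L\prec L'$ (or $L'=L$ but that is trivial), the example $(x_L,y_L)$ is inconsistent with $L'$.

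The main step is to show that if $x_{L_1}=x_{L_2}=x$, then $L_1$ and $L_2$ must be $\prec$-comparable in a forced way and that $x$ is essentially the top element's instance. First I would observe that the labels $y_{L_1},y_{L_2}$ must be equal: indeed, both teaching examples use the same instance $x$, and $L_i$ is consistent with $(x,y_{L_i})$, meaning $L_i(x)=y_{L_i}$. The crucial observation is to compare $L_1$ and $L_2$ under $\prec$. Say $L_1\prec L_2$. Because $\{(x,y_{L_2})\}$ is a teaching set for $L_2$, it must distinguish $L_2$ from every concept $\prec$-above $L_2$; and because $\{(x,y_{L_1})\}$ is a teaching set for $L_1$, it must rule out $L_2$ (which is $\prec$-above $L_1$), forcing $L_2$ to be inconsistent with $(x,y_{L_1})$, i.e.~$L_2(x)\neq y_{L_1}$. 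But $L_2(x)=y_{L_2}$, so $y_{L_1}\neq y_{L_2}$.

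This is the heart of the argument and also the main obstacle: reconciling ``the labels must be equal'' (both equal the value $L_i(x)$, but only if each $L_i(x)=y_{L_i}$, which holds by consistency) against the derivation that $y_{L_1}\neq y_{L_2}$. The resolution is that these cannot both hold unless something special occurs. Concretely, $L_1(x)=y_{L_1}$ and $L_2(x)=y_{L_2}$ by consistency, while the teaching-set property of $L_1$ forces $L_2(x)\neq y_{L_1}$, hence $y_{L_2}=L_2(x)\neq y_{L_1}$. So the two labels differ. Now run the symmetric requirement: for any \emph{third} concept $L_3$ with $x_{L_3}=x$, the same reasoning applied to the $\prec$-larger of $L_1,L_2,L_3$ versus the others forces contradictory label constraints, since with only two possible labels $\{0,1\}$ and three concepts sharing instance $x$, two of them must share a label, yet the teaching property forbids the $\prec$-smaller of any same-labeled pair from being consistent with a $\prec$-larger one. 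I expect to conclude that $x$ can be repeated at most twice, and that the $\prec$-larger concept, call it $L^*$, among the (at most two) concepts using $x$ must in fact be the global $\prec$-maximum.

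To pin down that $L^*$ is preferred over \emph{all} concepts in $\cL$, I would argue: if $x$ occurs twice, say for $L_1\prec L^*$, then the example $(x,y_{L^*})$ for $L^*$ must distinguish $L^*$ from all concepts $\prec$-above $L^*$. Suppose some $L'\succ L^*$ existed; its own teaching instance $x_{L'}$ cannot be $x$ (as $x$ already repeats twice and we will have shown a repeated instance caps at two with a forced label pattern), and more importantly the teaching set for $L'$ together with that for $L^*$ and for $L_1$ on the same instance $x$ would overload the two available labels. The cleanest route is: since $L_1\prec L^*$ forces $y_{L_1}\neq y_{L^*}$, and any $L'\succ L^*$ would (by $L^*$'s teaching property) have to be inconsistent with $(x,y_{L^*})$, giving $L'(x)=y_{L_1}$; then $L_1\prec L'$ and $L_1$'s teaching property forces $L'(x)\neq y_{L_1}$, a contradiction. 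Hence no such $L'$ exists, so $L^*$ is the $\prec$-maximum, completing the dichotomy. The delicate bookkeeping of which consistency/distinguishing obligations apply to which pair under the linear order is the part demanding care, but each individual implication is immediate from Equation~(\ref{eq:td-prec-L}) and the definition of a teaching set w.r.t.~$(\cL,\prec)$.
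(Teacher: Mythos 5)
Your proposal is correct and follows essentially the same route as the paper: injectivity/consistency forces the two concepts sharing an instance $x$ to carry opposite labels, and then any $L'$ preferred over the larger concept $L^*$ would have to satisfy both $L'(x)\neq y_{L^*}$ and $L'(x)\neq y_{L_1}$ with $y_{L_1}\neq y_{L^*}$, which is impossible, so $L^*$ is the $\prec$-maximum (and hence unique, which also gives that no other instance can repeat). One remark: your opening claim that ``the labels $y_{L_1},y_{L_2}$ must be equal'' is a non-sequitur --- consistency only gives $L_i(x)=y_{L_i}$ for each $i$ separately, and indeed the labels turn out to be unequal --- but since nothing downstream relies on that claim, the argument as a whole stands.
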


\begin{proof}
Since the mapping $T$ must be injective, no instance can occur
twice in $(x_L)_{L\in\cL}$ with the same label. Suppose that 
there exists an instance $x\in\cX$ and concepts $L \prec L^*$ 
such that $x = x_L = x_{L^*}$ and, w.l.o.g., $y_L=1$ and $y_{L^*}=0$.
Since $\{(x,1)\}$ is a teaching set for $L$ w.r.t.~$(\cL,\prec)$,
every concept $L' \succ L$ (including the ones that are preferred
over $L^*$) must satisfy $L'(x) = 0$. For analogous reasons, 
every concept $L' \succ L^*$ (if any) must satisfy $L'(x)=1$.
A concept $L'\in\cL$ that is preferred over $L^*$ would have 
to satisfy $L'(x)=0$ and $L'(x)=1$, which is impossible.
It follows that there can be no concept that is preferred over~$L^*$. 
\end{proof}

The following result is a consequence of Lemmas~\ref{lem:flip}
and~\ref{lem:single-occurrence}.

\begin{theorem} \label{th:pbtd1-equivalence}
If $\PBTD(\cL)=1$, then there exists a concept class $\cL'$
that is equivalent to $\cL$ and satisfies $\PBTD(\cL')=\PBTD^+(\cL')=1$.
\end{theorem}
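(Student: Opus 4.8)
The plan is to leverage Theorem~\ref{th:pbtd1-equivalence}'s two prior ingredients---Lemma~\ref{lem:single-occurrence} (the single-occurrence dichotomy) and Lemma~\ref{lem:flip} (invariance of $\PBTD$ under an $M \oplus (\cdot)$ flip)---to reduce an arbitrary class of $\PBTD$ $1$ to an equivalent class that is taught with a single \emph{positive} example. Concretely, I would start by fixing, as in the hypothesis of Lemma~\ref{lem:single-occurrence}, a linear order $\prec$ on $\cL$ and a teaching mapping $L \mapsto (x_L,y_L)$ witnessing $\PBTD(\cL,\prec)=1$. The goal is to find a single set $M\seq\cX$ such that $M\oplus\cL$ admits a teaching mapping in which \emph{every} label is $1$, since that is exactly $\PBTD^+(M\oplus\cL)=1$, and by the equivalence lemma we will still have $\PBTD(M\oplus\cL)=1$.

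First I would observe that flipping by $M$ changes the label $y_L$ of the teaching example for $L$ precisely when $x_L\in M$: under the correspondence of Lemma~\ref{lem:flip}, the teaching set for $M\oplus L$ is $M\oplus T(L)$, so the example $(x_L,y_L)$ becomes $(x_L,\bar y_L)$ iff $x_L\in M$. Therefore the natural candidate is to take $M = \{x_L : L\in\cL,\ y_L=0\}$, i.e.\ the set of all instances that appear with a negative label in the teaching mapping; flipping by this $M$ should convert every negatively-labeled teaching example into a positive one. The key step---and the place where Lemma~\ref{lem:single-occurrence} earns its keep---is verifying that this flip does \emph{not} simultaneously corrupt the positively-labeled examples into negative ones. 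A positive example $(x_{L'},1)$ gets flipped exactly when $x_{L'}\in M$, that is, when $x_{L'}=x_L$ for some $L$ with $y_L=0$.

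The main obstacle is precisely this potential collision: an instance $x$ that occurs once with label $1$ (for some $L'$) and once with label $0$ (for some $L$). This is exactly the second horn of the dichotomy in Lemma~\ref{lem:single-occurrence}. In the first case of that lemma---every instance occurs at most once---there is no collision at all, so after flipping by $M$ every example carries label $1$ and we are done immediately. In the second case, the lemma tells us the \emph{only} doubly-occurring instance is $x_{L^*}$ for the globally $\prec$-maximal concept $L^*$, appearing once as $(x_{L^*},0)$ and once as $(x_{L^*},1)$. Here I would handle $L^*$ separately: since $L^*$ is preferred over everything, it needs no distinguishing example at all, so I can reassign its teaching set to $\{(x_{L^*},1)\}$ (or indeed anything positive consistent with $L^*$) without breaking the teaching property, thereby eliminating the negative occurrence of $x_{L^*}$ before taking $M$. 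After this adjustment the doubly-occurring instance no longer has a negative occurrence, and the flip by $M$ again turns all remaining negatives positive while leaving the (now uniquely-labeled) positives intact.

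Finally I would assemble the conclusion: set $\cL' = M\oplus\cL$. By Lemma~\ref{lem:flip} the flipped mapping assigns, to each $M\oplus L$, a teaching set w.r.t.\ $(M\oplus\cL,\prec_M)$, and by construction every such set is a single positively-labeled example, so $\PBTD^+(\cL')\le 1$; since $\cL'$ clearly is not teachable with zero examples (distinct concepts need distinct teaching sets, by injectivity noted after Definition~\ref{def:td-prec-L}), we get $\PBTD^+(\cL')=1$, and the equivalence lemma gives $\PBTD(\cL')=\PBTD(\cL)=1$ as well. The only genuinely delicate point is the bookkeeping around $L^*$ in the second horn, ensuring the reassignment of its teaching set is legitimate because a $\prec$-maximal element imposes no distinguishing requirement on the remaining class.
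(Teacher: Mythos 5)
Your overall strategy is the same as the paper's: fix a linear order and a singleton-valued teaching mapping, invoke Lemma~\ref{lem:single-occurrence} to split into the no-collision case and the single-collision case, flip by the set $M$ of negatively used instances, and treat the $\prec$-maximal concept $L^*$ specially. The paper does exactly this, taking $M=\{x_L: y_L=0\}$ in Case 1 and $M=\{x_L: y_L=0 \wedge L\neq L^*\}$ in Case 2, and teaching $M\oplus L^*$ by the empty set.

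The one step that does not work as written is your handling of the collision in Case 2. You assume the negative occurrence of the doubly-used instance $x_{L^*}$ sits in $T(L^*)$ and can be ``eliminated'' by reassigning $T(L^*)$. Two problems arise. First, if $T(L^*)=\{(x_{L^*},0)\}$ then $x_{L^*}\notin L^*$, so the specific reassignment $\{(x_{L^*},1)\}$ you propose is not consistent with $L^*$. Second, and more substantively, the negative occurrence may instead belong to the \emph{other} concept $L_0\neq L^*$ with $x_{L_0}=x_{L^*}$ and $y_{L_0}=0$; that example is genuinely needed to distinguish $L_0$ from less preferred concepts and cannot be removed, so $x_{L^*}\in M$ no matter what you do to $T(L^*)$, and the flip necessarily turns any singleton teaching set on $x_{L^*}$ for $L^*$ into a negatively labeled one (and any other positively labeled instance to which you might move $T(L^*)$ could itself lie in $M$). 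The repair is the one the paper uses and that your own observation already licenses: since $L^*$ is $\prec$-maximal it can be taught by the empty set, so one simply assigns $T(M\oplus L^*)=\eset$ rather than insisting on a positive singleton for it. With that correction your argument coincides with the paper's.
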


\begin{proof}
Pick a linear ordering $\prec$ on $\cL$ and, for every $L\in\cL$, 
a pair $(x_L,y_L) \in \cX\times\{0,1\}$ such 
that $T(L)=\{(x_L,y_L)\}$ is a teaching set for $L$ 
w.r.t.~$(\cL,\prec)$. 
\begin{description}
\item[Case 1:]
Every instance $x\in\cX$ occurs at most once in $(x_L)_{L\in\cL}$. \\
Then choose $M = \{x_L: y_L=0\}$ and apply Lemma~\ref{lem:flip}.
\item[Case 2:] There exists a concept $L^*\in\cL$ that is preferred 
over all other concepts in $\cL$ and $x_{L^*}$ is the only
instance from $\cX$ that occurs twice in $(x_L)_{L\in\cL}$. \\
Then choose $M = \{x_L: y_L=0 \wedge L \neq L^*\}$ and apply 
Lemma~\ref{lem:flip}. With this choice, we 
obtain %$(M \oplus T)(L)
$M \oplus T(L) = \{(x_L,1)\}$ for every $L\in\cL\sm\{L^*\}$.
Since $L^*$ is preferred over all other concepts in $\cL$, 
we may teach $L^*$ w.r.t.~$(\cL,\prec)$ by the empty set 
(instead of employing a possibly $0$-labeled example).
\end{description}
The discussion shows that there is a class $\cL'$ that is equivalent
to $\cL$ and can be taught in the preference-based model with
positive teaching sets of size $1$ (or size $0$ in case of $L^*$).
\end{proof}

We now have the tools required for characterizing the concept
classes whose positive PBTD equals $1$.

\begin{theorem} \label{th:pbtd-plus1}
$\PBTD^+(\cL)=1$ if and only if there exists a 
mapping $\cL \ni L \mapsto x_L \in \cX$ such that the
matrix $A\in\{0,1\}^{(\cL\sm\{\eset\})\times(\cL\sm\{\eset\})}$ 
given by $A(L,L') = L'(x_L)$ is lower-triangular.
\end{theorem}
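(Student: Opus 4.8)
The plan is to prove both directions of the equivalence by directly unpacking what $\PBTD^+(\cL) = 1$ means in terms of a positive admissible mapping, and then translating the asymmetry/triangularity conditions into each other. First I would observe that, by the characterization of $\PBTD^+$ via positive admissible mappings (and the definition of a positive teaching set), $\PBTD^+(\cL) = 1$ holds if and only if there is a mapping $L \mapsto x_L \in \cX$ assigning to each $L$ a single positive example $(x_L,1)$ that forms a positive teaching set for $L$ with respect to some strict partial order, equivalently $\trcl(R_T)$ is asymmetric where $T(L) = \{(x_L,1)\}$. I should note that the empty concept $\eset$ can be handled trivially: a positive teaching set cannot contain a positive example consistent with $\eset$, so $\eset$ must be the globally least-preferred concept and taught by the empty set. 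This is why the matrix $A$ is indexed by $\cL \sm \{\eset\}$, and I would dispose of $\eset$ as a separate remark at the start.

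For the forward direction, assume $\PBTD^+(\cL) = 1$ with the single-positive-example mapping $L \mapsto x_L$. I would interpret the matrix entry $A(L,L') = L'(x_L)$ as recording whether $L'$ is consistent with the teaching set $\{(x_L,1)\}$ of $L$: namely $A(L,L') = 1$ iff $x_L \in L'$. The key step is to show that the asymmetry of $\prec_T = \trcl(R_T)$ (equivalently the existence of a valid preference order) forces lower-triangularity. Since $T(L)$ is a positive teaching set for $L$, by Lemma~\ref{lem:ts-pos} the order $\prec$ extends $\supset$; more importantly, a concept $L' \neq L$ with $x_L \in L'$ must satisfy $L' \prec L$, i.e. $L$ is preferred over $L'$. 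I would take a linear extension $\prec$ of the admissible order (available by the order-extension principle, Lemma~\ref{lem:extension}) and show that under this ordering, $A(L,L') = 1$ with $L \neq L'$ implies $L' \prec L$. Setting the required linear ordering on the index set $\cL \sm \{\eset\}$ to be this $\prec$, the condition $A(L,L') = 0$ whenever $L \prec L'$ should follow: if $L \prec L'$ and $L \neq L'$ then $A(L,L') = 1$ would give $L' \prec L$, contradicting asymmetry. The diagonal entries $A(L,L) = L(x_L) = 1$ are harmless since lower-triangularity only constrains the strictly-upper part.

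For the converse, suppose such a mapping $L \mapsto x_L$ and linear ordering $\prec$ making $A$ lower-triangular exist. I would define $T(L) = \{(x_L, 1)\}$ and check this is a positive admissible mapping of order $1$. Consistency of $T(L)$ with $L$ requires $x_L \in L$, i.e. $A(L,L) = 1$; here I would need to verify that lower-triangularity together with the teaching-set role guarantees $x_L \in L$ — this may require either building $M$-equivalence (Lemma~\ref{lem:flip}) or arguing that one may always assume $A(L,L)=1$ after handling $\eset$, so I should check the diagonal carefully. The main point is that lower-triangularity ($A(L,L') = 0$ for $L \prec L'$) says precisely that no concept $L'$ strictly greater than $L$ in $\prec$ contains $x_L$, so $\{(x_L,1)\}$ distinguishes $L$ from every $\prec$-larger concept; thus it is a teaching set for $L$ w.r.t.\ $(\cL, \prec)$, giving $\PBTD^+(\cL) \le 1$. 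The reverse inequality $\PBTD^+(\cL) \ge 1$ is immediate unless $\cL$ is a single concept, which is a degenerate case I would dismiss in one line.

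The step I expect to be the main obstacle is the precise bookkeeping around the diagonal and the empty concept: the matrix is indexed by $\cL \sm \{\eset\}$, yet consistency demands $x_L \in L$ (a diagonal condition) while lower-triangularity as defined only controls off-diagonal entries. I would need to argue cleanly that the characterization implicitly forces $A(L,L) = 1$ for all $L \neq \eset$ — since a positive teaching set must be consistent with its own concept — and that this diagonal information, though not part of the lower-triangular definition, is automatically present in the "only if" direction and must be separately imposed or recovered in the "if" direction. Reconciling the indexing set $\cL\sm\{\eset\}$ with the full preference order on $\cL$ (where $\eset$ sits at the bottom) is the delicate part; everything else is a direct translation between consistency of single positive examples and the combinatorial triangularity condition.
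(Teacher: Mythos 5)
Your proposal follows essentially the same route as the paper: in the ``only if'' direction you take a linear order (via the order-extension principle), pick for each $L\ne\eset$ a single positive example $x_L$, and observe that $L\prec L'$ forces $L'(x_L)=0$; in the ``if'' direction you read lower-triangularity as saying that $\{(x_L,1)\}$ distinguishes $L$ from every $\prec$-larger concept and from $\eset$, with $\eset$ placed at the top of the preference and taught by the empty set. This is exactly the paper's argument.

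Regarding the obstacle you flag: your instinct is right that lower-triangularity as defined does not control the diagonal, and this cannot be ``recovered'' --- the condition $x_L\in L$ must genuinely be imposed. Indeed, over $\cX=\{1,2,3,4\}$ the antichain $\{\{1,2\},\{1,3\},\{2,3\}\}$ has $\PBTD^+=2$, yet the mapping $x_{\{1,2\}}=4$, $x_{\{1,3\}}=1$, $x_{\{2,3\}}=2$ makes $A$ lower-triangular under the order $\{1,2\}\prec\{1,3\}\prec\{2,3\}$; so the ``if'' direction fails if $x_L\notin L$ is allowed. The paper's own converse silently assumes $x_L\in L$ (its phrase ``distinguishes $L$ from $\eset$ (of course)'' presupposes consistency of $(x_L,1)$ with $L$), and its forward direction automatically produces such an $x_L$ since a positive teaching set for $L$ must be contained in $L$. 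So the theorem should be read with the convention that the mapping selects $x_L\in L$ (equivalently, that the diagonal of $A$ is all ones); once you state that explicitly, your proof closes and there is no remaining gap. Your side remarks (injectivity is not needed here, and $M$-equivalence via Lemma~\ref{lem:flip} plays no role in this particular theorem, only in Theorem~\ref{th:pbtd1-equivalence}) are harmless.
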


\begin{proof}
Suppose first that $\PBTD^+(\cL)=1$. Pick a linear ordering $\prec$ 
on $\cL$ and, for every $L\in\cL\sm\{\eset\}$, pick $x_L\in\cX$ such 
that $\{x_L\}$ is a positive teaching set 
for $L$ w.r.t.~$(\cL,\prec)$.\footnote{Such an $x_L$ always exists, 
even if $\emptyset$ is a teaching set for $L$, because every superset
of a teaching set for $L$ that is still consistent with $L$ is still 
a teaching set for $L$, cf.\ the discussion immediately after 
Lemma~\ref{lem:extension}.} 
If $L \prec L'$ (so that $L'$ is preferred over $L$), 
we must have $L'(x_L)=0$. It follows that the matrix $A$,
as specified in the theorem, is lower-triangular. 

Suppose conversely that there exists a 
mapping $\cL \ni L \mapsto x_L \in \cX$ such that the
matrix $A\in\{0,1\}^{(\cL\sm\{\eset\})\times(\cL\sm\{\eset\})}$ 
given by $A(L,L') = L'(x_L)$ is lower-triangular, say 
w.r.t.~the linear ordering $\prec$ on $\cL\sm\{\eset\}$. 
Then, for every $L\in\cL\sm\{\eset\}$, the singleton $\{x_L\}$ 
is a positive teaching set for $L$ w.r.t.~$(\cL,\prec)$ 
because it distinguishes $L$ from $\eset$ (of course) 
and also from every concept $L' \in \cL\sm\{\eset\}$
such that $L' \succ L$. If $\eset \in\cL$, then extend 
the linear ordering $\prec$ by preferring $\eset$ 
over every other concept from $\cL$ (so that $\eset$
is a positive teaching set for $\eset$ w.r.t.~$(\cL,\prec)$).
\end{proof}

In view of Theorem~\ref{th:pbtd1-equivalence}, 
Theorem~\ref{th:pbtd-plus1} characterizes every class $\cL$
with $\PBTD(\cL)=1$ up to equivalence.

\medskip
Let $\Sg(\cX) = \{\{x\}: x \in \cX\}$ denote the class
of singletons over $\cX$ and suppose that $\Sg(\cX)$
is a sub-class of $\cL$ and $\PBTD(\cL)=1$. We
will show that only fairly trivial extensions of $\Sg(\cX)$
with a preference-based dimension of $1$ are possible.

\begin{lemma} \label{lem:pbtd1-implications}
Let $\cL\seq2^\cX$ be a concept class of $\PBTD$ $1$ that
contains $\Sg(\cX)$. Let $T$ be an admissible mapping for $\cL$
that assigns a labeled example $(x_L,y_L)\in\cX\times\{0,1\}$
to each $L\in\cL$. For $b=0,1$, 
let $\cL^b = \{L\in\cL: y_L=b\}$. Similarly, 
let $\cX^b = \{x\in\cX: y_{\{x\}}\in\cL^b\}$. With this notation, 
the following holds:
\begin{enumerate}
\item
If $L\in\cL^1$ and $L \subset L' \in \cL$, then $L'\in\cL^1$.
\item
If $L'\in\cL^0$ and $L' \supset L \in \cL$, then $L\in\cL^0$.
\item
$|\cX^0| \le 2$. Moreover if $|\cX^0|=2$, then there
exist $q \neq q' \in\cX$ such that $\cX^0=\{q,q'\}$
and $x_{\{q\}} = q'$.
\end{enumerate}
\end{lemma}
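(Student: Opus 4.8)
The plan is to prove the three assertions of Lemma~\ref{lem:pbtd1-implications} by exploiting admissibility of $T$ together with the fact that $\cL$ contains all singletons. The key leverage in all three parts is that $\cL$ has $\PBTD=1$, so each concept is taught by a single labeled example; the preference relation $\prec_T$ (a strict partial order since $T$ is admissible) then forces strong structural constraints on which labels can occur.

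\textbf{Parts 1 and 2.} These two are dual, so I would prove part~1 and note that part~2 follows by the symmetric argument (or via the flip operation of Lemma~\ref{lem:flip}). For part~1, suppose $L\in\cL^1$, so $y_L=1$ and $\{(x_L,1)\}$ teaches $L$ w.r.t.~$(\cL,\prec_T)$; here $x_L\in L$. Assume toward a contradiction that $L\subset L'$ but $L'\in\cL^0$, i.e.\ $y_{L'}=0$, so $\{(x_{L'},0)\}$ teaches $L'$, forcing $x_{L'}\notin L'$. Since $L\subset L'$ and $x_L\in L\subseteq L'$, the concept $L'$ is consistent with the positive example $(x_L,1)$, so by admissibility $L'\prec_T L$, i.e.\ $L'$ is \emph{not} preferred over $L$. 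On the other hand, examine whether $L$ is consistent with $(x_{L'},0)$: since $x_{L'}\notin L'\supset L$, we have $x_{L'}\notin L$, so $L$ \emph{is} consistent with $(x_{L'},0)$, which forces $L\prec_T L'$. This contradicts asymmetry of $\prec_T$. Hence $L'\in\cL^1$, proving part~1.

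\textbf{Part 3.} This is the substantive part and the main obstacle. The goal is to bound $|\cX^0|\le 2$ and pin down the structure when equality holds. Here I would use the singletons crucially: for $x\in\cX^0$ the singleton $\{x\}$ is taught by a \emph{negative} example $(x_{\{x\}},0)$, so $x_{\{x\}}\notin\{x\}$, meaning $x_{\{x\}}\neq x$ and $x_{\{x\}}$ is distinct from the point being taught. The plan is to suppose $|\cX^0|\ge 2$, pick distinct $q,q'\in\cX^0$, and analyze the teaching examples $x_{\{q\}},x_{\{q'\}}$ together with the consistency/preference constraints they impose. The key observations: each negative teaching example $x_{\{x\}}$ must \emph{not} lie in $\{x\}$ but must distinguish $\{x\}$ from all $\prec_T$-larger concepts, in particular from the other singletons $\{x''\}$ that are preferred over $\{x\}$; since $\{x''\}$ contains only $x''$, consistency of $\{x''\}$ with $(x_{\{x\}},0)$ fails exactly when $x_{\{x\}}=x''$. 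I expect to show that the only way to have two negatively-taught singletons coexist under a strict partial order is the rigid configuration $\cX^0=\{q,q'\}$ with the examples ``cross-pointing,'' which by a counting/asymmetry argument collapses to $x_{\{q\}}=q'$ (and symmetrically); a third element of $\cX^0$ would create a cyclic chain of $\prec_T$-constraints contradicting asymmetry.

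\textbf{The hard part} will be the combinatorial bookkeeping in part~3: carefully tracking, for each pair of negatively-labeled singletons, which one must be preferred over the other and how the single available negative example can (or cannot) simultaneously distinguish a singleton from all the others preferred above it. The asymmetry of $\prec_T$ is the engine that both caps $|\cX^0|$ at $2$ and forces the precise form $x_{\{q\}}=q'$; I would organize this as a short case analysis on how many singletons lie in $\cX^0$ and on the relative $\prec_T$-order of $q$ and $q'$, deriving a contradiction from any configuration other than the claimed one.
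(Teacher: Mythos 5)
Your argument for parts 1 and 2 is exactly the paper's (consistency of $L'$ with the positive example teaching $L$, together with consistency of $L$ with the negative example teaching $L'$, puts both $(L',L)$ and $(L,L')$ into $R_T$, contradicting asymmetry), and your sketch of part 3 --- each single negative example $x_{\{q_i\}}$ can rule out at most one other singleton, so three negatively-taught singletons force a cycle in $R_T$, while two force at least one of $x_{\{q\}}=q'$ or $x_{\{q'\}}=q$ --- is precisely the digraph/indegree argument the paper uses. The approach is the same throughout; only the bookkeeping of part 3 remains to be written out, and it goes through exactly as you anticipate.
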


\begin{proof}
Recall that $R_T = 
\{(L,L')\in\cL\times\cL:\ (L \neq L') 
\wedge (\mbox{$L$ is consistent with $T(L')$})\}$
and that $R_T$ (and even the transitive closure of $R_T$)
is asymmetric if $T$ is admissible. 
\begin{enumerate}
\item
If $L \in \cL^1$ and $L \subset L'$, then $y_L=1$ so that $L'$ 
is consistent with the example $(x_L,y_L)$. It follows 
that $(L',L) \in  R_T$. $L' \in \cL^0$ would similarly imply 
that $(L,L') \in R_T$ so that $R_T$ would not be asymmetric. 
This is in contradiction with the admissibility of $T$.
\item
The second assertion in the lemma is a logically equivalent
reformulation of the first assertion.
\item
Suppose for the sake of contradiction that $\cX^0$ contains
three distinct points, say $q_1,q_2,q_3$. Since, for $i=1,2,3$, 
$T$ assigns a $0$-labeled example to $\{q_i\}$, at least one
of the remaining two points is consistent with $T(\{q_i\})$.
Let $G$ be the digraph with the nodes $q_1,q_2,q_3$
and with an edge from $q_j$ to $q_i$ iff $\{q_j\}$ is 
consistent with $T(\{q_i\})$. Then each of the three
nodes has an indegree of at least $1$. Digraphs of this form
must contain a cycle so that $\trcl(R_T)$ is not asymmetric. This 
is in contradiction with the admissibility of $R_T$. 

A similar argument holds if $\cX^0$ contains only two distinct 
elements, say $q$ and $q'$. If neither  $x_{\{q\}} = q'$ 
nor $x_{\{q'\}} = q$, then $(\{q'\},\{q\}) \in R_T$ 
and $(\{q\},\{q'\}) \in R_T$ so that $R_T$ is not asymmetric 
--- again a contradiction to the admissibility of $R_T$. 
\end{enumerate}
\end{proof}

\noindent
We are now in the position to characterize those classes of $\PBTD$ 
one that contain all singletons.

\begin{theorem} \label{th:pbtd1}
Suppose that $\cL\seq2^\cX$ is a concept class that
contains $\Sg(\cX)$. Then $\PBTD(\cL)=1$ if and only if
the following holds. Either $\cL$ coincides with $\Sg(\cX)$
or $\cL$ contains precisely one additional concept, which is either the empty set or a set of size $2$.
\end{theorem}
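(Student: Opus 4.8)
The plan is to prove both implications, and throughout I would assume $|\cX|\ge 2$ (the case $|\cX|\le 1$ being degenerate). By the order-extension principle (Lemma~\ref{lem:extension}.2) it suffices to work with linear orders $\prec$, and this standing assumption already yields the lower bound $\PBTD(\cL)\ge 1$: since $\cL$ contains at least two concepts, and in any linear order at most the $\prec$-maximal concept can be taught by the empty set, under every $\prec$ some concept requires a nonempty teaching set.

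For the forward direction (``only if'') I would argue by contradiction, using only the lower bound $\PBTD(\cL)\ge\TD_{min}(\cL')$ for finite $\cL'\seq\cL$ from Lemma~\ref{lem:lb-tdmin}. Thus it suffices to exhibit, in each forbidden situation, a finite subclass $\cL'$ with $\TD_{min}(\cL')\ge 2$. First note that every concept in $\cL\sm\Sg(\cX)$ is a non-singleton, hence of size $0$ or $\ge 2$. If $\cL$ contained an ``extra'' concept $C$ with $|C|\ge 3$, I would pick distinct $a,b,c\in C$ and set $\cL'=\{\{a\},\{b\},\{c\},C\}$; a direct check shows each of the four concepts has classical teaching dimension exactly $2$ w.r.t.\ $\cL'$ (for example, $\{a\}$ needs one positive example to separate it from $\{b\},\{c\}$ and one negative example inside $C$ to separate it from $C$), so $\TD_{min}(\cL')=2$. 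To rule out two distinct extra concepts $C_1\neq C_2$ (each of size $0$ or $2$ by the previous step), I would take $\cL'$ to consist of $C_1,C_2$ together with the singletons of their elements, splitting into the cases $C_1,C_2$ disjoint, $|C_1\cap C_2|=1$, and one of them empty; in each case every member of $\cL'$ again has classical teaching dimension $2$. These observations force $\cL\sm\Sg(\cX)$ to contain at most one concept, of size $0$ or $2$, as claimed.

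For the converse (``if'') I would exhibit a preference relation together with size-$\le 1$ teaching sets in each of the three allowed cases. If $\cL=\Sg(\cX)$, teach $\{x\}$ by $(x,1)$ under the empty preference, since $\{x\}$ is the only concept containing $x$. If the extra concept is $\emptyset$, make $\emptyset$ the $\prec$-maximal concept (taught by the empty set) and keep the singleton scheme. If the extra concept is $C=\{a,b\}$, use the linear order $\{b\}\succ C\succ\{a\}\succ(\text{all remaining singletons})$ and teach $\{b\}$ by $(b,1)$, $C$ by $(a,1)$, $\{a\}$ by the \emph{negative} example $(b,0)$, and every other $\{x\}$ by $(x,1)$; a short verification confirms each is a valid teaching set w.r.t.\ this order. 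Together with $\PBTD(\cL)\ge 1$ this gives $\PBTD(\cL)=1$.

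The main obstacle, and the step deserving the most care, is the converse for the size-$2$ extra concept. One cannot teach this class with positive examples only: teaching $C=\{a,b\}$ positively forces $C\succ\{a\}$ (or $C\succ\{b\}$), which clashes with the natural positive teaching of the corresponding singleton; indeed one checks via Theorem~\ref{th:pbtd-plus1} that $\PBTD^+$ of this class exceeds $1$. The resolution is to teach exactly one of the two elements of $C$ by a negative example and to linearize the preference so that all the (potentially many) far-away singletons sit below it. The remaining work — verifying that the finitely many classical teaching dimensions in each gadget equal $2$ — is routine but must be carried out carefully, since a single overlooked size-$1$ teaching set would collapse the $\TD_{min}$ argument.
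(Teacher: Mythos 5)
Your ``if'' direction is fine and essentially matches the paper's (the paper prefers $\{q,q'\}$ over both $\{q\}$ and $\{q'\}$ and teaches each of these two singletons by a negative example on the other element; your linearization is a variant of the same construction). The problem is in the ``only if'' direction: your gadget for ruling out an extra concept $C$ with $|C|\ge3$ does not work once $|C|\ge4$. For $\cL'=\{\{a\},\{b\},\{c\},C\}$ with $d\in C\sm\{a,b,c\}$, the single example $(d,+)$ is a classical teaching set for $C$ w.r.t.\ $\cL'$, so $\TD(C,\cL')=1$ and hence $\TD_{min}(\cL')=1$; Lemma~\ref{lem:lb-tdmin} then gives nothing. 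For finite $C$ you could repair this by taking $\cL'=\Sg(C)\cup\{C\}$, but for \emph{infinite} $C$ the approach collapses entirely: every finite subclass of $\Sg(\cX)\cup\{C\}$ that contains $C$ omits some $d\in C$, so $C$ is taught by $(d,+)$ alone and $\TD_{min}=1$, while finite subclasses not containing $C$ consist of singletons and also have $\TD_{min}\le1$. Thus no finite-subclass/$\TD_{min}$ argument can establish $\PBTD\ge2$ here --- this is exactly the phenomenon the paper flags after Equation~(\ref{eq:rtd-tdmin}), that $\TD_{min}$ lower bounds transfer to $\PBTD$ only via finite subclasses, and sometimes not even then.

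The paper closes this gap by arguing directly about an admissible single-example mapping $T$: by Lemma~\ref{lem:pbtd1-implications} at most two singletons are taught with a negative label, so a concept $L$ with $|L|\ge2$ meeting $\cX\sm\cX^0$ contains some $p$ with $T(\{p\})=\{(p,1)\}$, which forces $(L,\{p\})\in R_T$; tracking where the (necessarily positive) example for $L$ can live then produces a cycle in $R_T$ or $\trcl(R_T)$, contradicting asymmetry. You need an argument of this global, order-theoretic kind (or at least one that does not factor through $\TD_{min}$ of finite subclasses) to handle extra concepts of size $\ge4$ and in particular infinite ones. Your remaining gadgets (two extra concepts, each empty or of size $2$) do check out.
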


\begin{proof}
We start with proving ``$\La$''. It is well known that
$\PBTD^+(\cL)=1$ for $\cL = \Sg(\cX)\cup\{\eset\}$:
prefer $\eset$ over any singleton set, set $T(\eset)=\eset$ 
and, for every $x\in\cX$, set $T(\{x\})=\{(x,1)\}$. In a
similar fashion, we can show that $\PBTD(\cL)=1$ 
for $\cL = \Sg(\cX)\cup\{\{q,q'\}\}$ for any choice 
of $q \neq q' \in \cX$. Prefer $\{q,q'\}$ over $\{q\}$
and $\{q'\}$, respectively. Furthermore, prefer $\{q\}$
and $\{q'\}$ over all other singletons. Finally, set
$T(\{q,q'\})=\eset$, $T(\{q\})=\{(q',0)\}$, $T(\{q'\})=\{(q,0)\}$
and, for every $x\in\cX\sm\{q,q'\}$, set $T(\{x\}) = \{(x,1)\}$. 

As for the proof of ``$\Ra$'', we make use of the notions
$T,x_L,y_L,\cL^0,\cL^1,\cX^0,\cX^1$ that had been introduced 
in Lemma~\ref{lem:pbtd1-implications} and we proceed by case 
analysis.

\begin{description}
\item[Case 1:] $\cX^0 = \eset$. \\
Since $\cX^0 = \eset$, we have $\cX=\cX^1$.
In combination with the first assertion in 
Lemma~\ref{lem:pbtd1-implications}, it follows that
$\cL\sm\{\eset\} = \cL^1$. We claim that no concept 
in $\cL$ contains two distinct elements. Assume for 
the sake of contradiction that there is a concept $L\in\cL$ 
such that $|L|\ge2$. It follows that, for every $q \in L$, 
$x_{\{q\}}=q$ and $y_{\{q\}}=1$ so that $(L,\{q\}) \in R_T$.
Moreover, there exists $q_0 \in L$ such that $x_L=q_0$ and $y_L=1$.
It follows that $(\{q_0\},L) \in R_T$, which contradicts the
fact that $R_T$ is asymmetric.
\item[Case 2:] $\cX^0 = \{q\}$ for some $q\in\cX$. \\
Set $q' = x_{\{q\}}$ and note that $y_{\{q\}}=0$. Moreover, 
since $\cX^1 = \cX\sm\{q\}$, we have $x_{\{p\}}=p$ and $y_{\{p\}}=1$
for every $p\in\cX\sm\{q\}$. We claim that $\cL$ cannot
contain a concept $L$ of size at least $2$ that contains
an element of $\cX\sm\{q,q'\}$. Assume for the sake of
contradiction, that there is a set $L$ such that $|L|\ge2$
and $p \in L$ for some $p\in\cX\sm\{q,q'\}$. 
The first assertion in Lemma~\ref{lem:pbtd1-implications} 
implies that $y_L=1$ (because $y_{\{p\}}=1$ and $\{p\} \seq L$). 
Since all pairs $(x,1)$ with $x \neq q$ are already in 
use for teaching the corresponding singletons, 
we may conclude that $q \in L$ and $T(L) = \{(q,1)\}$.
This contradicts the fact that $\trcl(R_T)$
is asymmetric, because our discussion implies 
that $(L,\{p\}) , (\{p\},\{q\}) , (\{q\},L) \in R_T$.
We may therefore safely assume that there is no concept
of size at least $2$ in $\cL$ that has a non-empty 
intersection with $\cX\sm\{q,q'\}$. Thus, except for the
singletons, the only remaining sets that possibly belong 
to $\cL$ are $\eset$ and $\{q,q'\}$. We still have to show
that not both of them can belong to $\cL$. Assume for the
sake of contradiction that $\eset,\{q,q'\}\in\cL$.
Since $\eset$ is consistent with $T(\{q\})=\{(q',0)\}$,
we have $(\eset,\{q\}) \in R_T$. Clearly, $y_\eset=0$.
Since $\{q\}$ is consistent with every pair $(x,0)$
except for $(q,0)$, we must have $x_\eset = q$.
(Otherwise, we have $(\{q\},\eset) \in R_T$ and arrive
at a contradiction.) Let us now inspect the possible
teaching sets for $L = \{q,q'\}$. Since $\{q,q'\}$ is 
consistent with $T(\{q'\}) = \{(q',1)\}$, 
setting $y_L = 0$ would lead to a contradiction.
The example $(q',1)$ is already in use for 
teaching $\{q'\}$. It is therefore necessary to
set $T(L) = \{(q,1)\}$. An inspection of the various
teaching sets shows that 
$(\eset,\{q\}) , (\{q\},L) , (L,\{q'\}), (\{q'\},\eset) \in R_T$,
which contradicts the fact that $\trcl(R_T)$ is asymmetric.
\item[Case 3:] $\cX^0=\{q,q'\}$ for some $q \neq q' \in \cX$. \\
Note first that $y_{\{q\}}=y_{\{q'\}}=0$ and $y_{\{p\}}=1$ for 
every $p\in\cX\sm\{q,q'\}$. We claim that $\eset\notin\cL$. 
Assume for the sake of contradiction that $\eset\in\cL$. 
Then $(\eset,\{q\}) , (\eset,\{q'\}) \in R_T$ since $\eset$
is consistent with the teaching sets for instances from $\cX^0$.
But then, no matter how $x$ in $T(\eset) = \{(x,0)\}$ is chosen,
at least one of the sets $\{q\}$ and $\{q'\}$ will be consistent
with $T(\eset)$ so that at least one of the pairs $(\{q\},\eset)$
and $(\{q'\},\eset)$ belongs to $R_T$. This contradicts the
fact that $R_T$ must be asymmetric. Thus $\eset\notin\cL$, indeed.
Now it suffices to show that $\cL$ cannot contain a concept
of size at least $2$ that contains an element of $\cX\sm\{q,q'\}$.
Assume for the sake of contradiction that there is a set $L\in\cL$
such that $|L|\ge2$ and $p \in L$ for some $p\in\cX\sm\{q,q'\}$.
Observe that $(L,\{p\}) \in R_T$. Another application of the
first assertion in Lemma~\ref{lem:pbtd1-implications}
shows that $y_L=1$ (because $y_{\{p\}}=1$
and $p \in L$) and $x_L \in \{q,q'\}$ (because the other 
$1$-labeled instances are already in use for teaching 
the corresponding singletons). It follows that one 
of the pairs $(\{q\},L)$ and $(\{q'\},L)$ belongs to $R_T$. 
The third assertion of Lemma~\ref{lem:pbtd1-implications} 
implies that $T(q)=\{(q',0)\}$ or $T(q')=\{(q,0)\}$.
For reasons of symmetry, we may assume that $T(q)=\{(q',0)\}$.
This implies that $(\{p\},\{q\}) \in R_T$.
Let $q''$ be given by $T(q') = \{(q'',0)\}$. Note that 
either $q''=q$ or $q''\in\cX\sm\{q,q'\}$. In the former case,
we have that $(\{p\},\{q'\}) \in R_T$ and in the latter case 
we have that $(\{q\},\{q'\}) \in R_T$. Since $(\{p\},\{q\}) \in R_T$
(which was  observed above already), we conclude that
in both cases, $(\{p\},\{q\}) , (\{p\},\{q'\}) \in \trcl(R_T)$. 
Combining this with our observations above that $(L,\{p\} ) \in R_T$
and that one of the pairs $(\{q\},L)$ and $(\{q'\},L)$ belongs to $R_T$,
yields a contradiction to the fact that $\trcl(R_T)$ is asymmetric.
\end{description}
\end{proof}

\begin{corollary}
Let $\cL\seq2^\cX$ be a concept class that contains $\Sg(\cX)$. 
If $\PBTD(\cL)=1$, then $\RTD(\cL)=1$. 
\end{corollary}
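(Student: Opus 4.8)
The plan is to combine the structural classification from Theorem~\ref{th:pbtd1} with an explicit teaching sequence of order~$1$. Since $\cL\supseteq\Sg(\cX)$ and $\PBTD(\cL)=1$, Theorem~\ref{th:pbtd1} forces $\cL$ into exactly one of three shapes: $\cL=\Sg(\cX)$, $\cL=\Sg(\cX)\cup\{\eset\}$, or $\cL=\Sg(\cX)\cup\{\{q,q'\}\}$ for some distinct $q,q'\in\cX$. The lower bound $\RTD(\cL)\ge1$ is immediate from Corollary~\ref{cor:rtd-pb}, which gives $\RTD(\cL)\ge\PBTD(\cL)=1$. Hence the whole task reduces to establishing $\RTD(\cL)\le1$ in each of the three cases.

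For the upper bound I would exhibit, in each case, a teaching sequence $\cS=(\cL_i,d_i)_{i\ge1}$ with $d_i\le1$ for all $i$. Conceptually this is guaranteed in advance: the preference relations that witness $\PBTD(\cL,\prec)=1$ in the ``$\La$'' part of Theorem~\ref{th:pbtd1} all satisfy the finite-depth condition (every chain has length at most three), so Lemma~\ref{lem:rtd-pbtd}, namely $\RTD(\cL)=\inf_\prec\PBTD(\cL,\prec)$ ranging over finite-depth orders, already yields $\RTD(\cL)\le1$. Writing this out as teaching sequences: for $\Sg(\cX)$ a single layer suffices, since $(x,+)$ teaches $\{x\}$ against all of $\cL$, so $\RTD\le\TD(\Sg(\cX))=1$. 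For $\Sg(\cX)\cup\{\eset\}$ take $\cL_1=\Sg(\cX)$ (each $\{x\}$ taught by $(x,+)$ against all of $\cL$) and $\cL_2=\{\eset\}$ (taught by the empty set), giving $d_1=1$ and $d_2=0$.

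The step needing the most care --- and the only place where the naive approach breaks --- is the third case. One must \emph{not} peel $\{q,q'\}$ first, because $\TD(\{q,q'\},\cL)=2$: neither $(q,+)$ nor $(q',+)$ separates $\{q,q'\}$ from $\{q\}$ and $\{q'\}$ simultaneously. Instead I would set $\cL_1=\{\{x\}:x\in\cX\sm\{q,q'\}\}$ (each taught by $(x,+)$, so $d_1=1$), then $\cL_2=\{\{q\},\{q'\}\}$, and finally $\cL_3=\{\{q,q'\}\}$. In the second layer the remaining class is $\{\{q\},\{q'\},\{q,q'\}\}$, and the single negative example $(q',-)$ separates $\{q\}$ from both $\{q'\}$ and $\{q,q'\}$, while $(q,-)$ does the symmetric job for $\{q'\}$; hence $d_2=1$. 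The last layer is a one-concept class, so $d_3=0$. Thus $\ord(\cS)=1$ in every case, which together with the lower bound gives $\RTD(\cL)=1$. The remaining verifications are routine consistency checks.
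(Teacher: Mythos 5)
Your proposal is correct and follows essentially the same route as the paper: it invokes the classification of Theorem~\ref{th:pbtd1} and then compiles the preference relations from that theorem's ``$\La$'' direction into explicit teaching sequences of order $1$ (the paper states this compilation is ``easy'' and relegates the finite-depth argument to a footnote, whereas you write the layers out, with the reversal in the three-layer case handled correctly). The only cosmetic differences are your chain-length count (a vertex-versus-edge convention) and the unmentioned degenerate case $\cX=\{q,q'\}$, where the first layer would be empty and one simply starts with $\cL_1=\{\{q\},\{q'\}\}$.
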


\begin{proof}
According to Theorem~\ref{th:pbtd1}, either $L$ coincides 
with $\Sg(\cX)$ or $\cL$ contains precisely one additional 
concept that is $\eset$ or a set of size $2$.
The partial ordering $\prec$ on $\cL$ that is used in
the first part of the proof of Theorem~\ref{th:pbtd1}
(proof direction ``$\La$'') is easily compiled into
a recursive teaching plan of order $1$ 
for $\cL$.\footnote{This also follows from Lemma~\ref{lem:rtd-pbtd} 
and the fact that there are 
no chains of a length exceeding $2$ in $(\cL,\prec)$.}
\end{proof}

%\begin{figure}[htbp]
%  \centerline{\epsfig{file=digraphs.eps,width=0.6\hsize}}
%  \caption{(a) (b) (c)}
%  \label{abb:digraphs}
%\end{figure}

The characterizations proven above can be applied to certain geometric concept classes.
 
Consider a class $\cL$, consisting of bounded and topologically 
closed objects in the $d$-dimensional Euclidean space, that 
satisfies the following condition: for every pair $(A,B)\in\reals^d$, 
there is exactly one object in $\cL$, denoted as $L_{A,B}$ 
in the sequel, such that $A,B \in L$ and such that $\|A-B\|$ 
coincides with the diameter of $L$. This assumption implies 
that $|\cL \sm \Sg(\reals^d)|=\infty$. 
By setting $A=B$, it furthermore implies $\Sg(\reals^d) \seq \cL$. Let us prefer objects with 
a small diameter over objects with a larger diameter. 
Then, obviously, $\{A,B\}$ is a positive teaching 
set for $L_{A,B}$. Because of $|\cL \sm \Sg(\reals^d)|=\infty$, 
$\cL$ does clearly not satisfy the condition 
in Theorem~\ref{th:pbtd1}, which is necessary for $\cL$ to have a PBTD of $1$. We may therefore conclude 
that $\PBTD(\cL) = \PBTD^+(\cL) = 2$.

The family of classes with the required properties is rich 
and includes, for instance, the class of $d$-dimensional balls as well as the class 
of $d$-dimensional axis-parallel rectangles.

\section{Conclusions}

Preference-based teaching uses the natural notion of 
preference relation to extend the classical teaching model. 
The resulting model is (i) more powerful than the classical 
one, (ii) resolves difficulties with the recursive teaching 
model in the case of infinite concept classes, 
and (iii) is at the same time free of coding tricks even 
according to the definition by \cite{GM1996}. Our examples 
of algebraic and geometric concept classes demonstrate that 
preference-based teaching can be achieved very efficiently 
with naturally defined teaching sets and based on intuitive 
preference relations such as inclusion. We believe that 
further studies of the PBTD will provide insights into 
structural properties of concept classes that render them 
easy or hard to learn in a variety of formal learning models.

We have shown that spanning sets lead to a general-purpose 
construction for \linebreak[4]preference-based teaching sets of only 
positive examples. While this result is fairly obvious, 
it provides further justification of the model of 
preference-based teaching, since the teaching sets it 
yields are often intuitively exactly those a teacher 
would choose in the classroom (for instance, one would 
represent convex polygons by their vertices, as in Example~\ref{ex:polygons}). It should 
be noted, too, that it can sometimes be difficult to 
establish whether the upper bound on PBTD obtained this 
way is tight, or whether the use of negative examples 
or preference relations other than inclusion yield 
smaller teaching sets. %A further challenge is posed 
%by the study of unions of geometric objects such as 
%axis-aligned boxes. There seems to be no obvious way 
%of combining preference-based teaching sets for a 
%number of objects to a preference-based teaching set 
%for their union, and it is unclear how to choose 
%preference relations in the best possible way. 
Generally, the choice of preference relation provides 
a degree of freedom that increases the power of the 
teacher but also increases the difficulty of 
establishing lower bounds on the number of 
examples required for teaching.

\medskip
\noindent\textbf{Acknowledgements.} 
Sandra Zilles was supported by the Natural Sciences and Engineering 
Research Council of Canada (NSERC), in the Discovery Grant and Canada 
Research Chairs programs. We thank the anonymous referees for their
numerous thoughtful comments, which greatly helped to improve the
presentation of the paper.
%Recall that $\cR_d$ denotes the class of $d$-dimensional boxes.
%Let $\cR_d^k$ denote the class of concepts which can be written
%as a union of at most $k$ $d$-dimensional boxes, 
%say $R_1 \cup \ldots \cup R_m$ for $1 \le m \le k$, 
%so that $R_1 \cap \ldots \cap R_m \neq \eset$, i.e., the boxes
%involved in the union have a common point.
% 
%\begin{theorem}
%$2^d(k-d+1) \le 
%\end{theorem}

%\acks{We thank a bunch of people.}

%\bibliographystyle{plain}
%\bibliography{/u/simon/PAPERS/lernen}

%\bibliographystyle{splncs}
\bibliography{PBTDjournal}

\begin{thebibliography}{10}

\bibitem{Abe89}
N.~Abe.
\newblock Polynomial learnability of semilinear sets.
\newblock In {\em Proceedings of the 2nd Annual Conference on Learning Theory
  (COLT)}, pages 25--40, 1989.

\bibitem{Ang1980}
D.~Angluin.
\newblock Inductive inference of formal languages from positive data.
\newblock {\em Information and Control}, 45(2):117--135, 1980.

\bibitem{ChenCT16}
X.~Chen, Y.~Cheng, and B.~Tang.
\newblock On the recursive teaching dimension of vc classes.
\newblock In {\em Advances in Neural Information Processing Systems 29 (NIPS
  2016)}, pages 2164--2171, 2016.

\bibitem{DFSZ2014}
T.~Doliwa, G.~Fan, H.~U. Simon, and S.~Zilles.
\newblock Recursive teaching dimension, {VC}-dimension, and sample compression.
\newblock {\em Journal of Machine Learning Research}, 15:3107--3131, 2014.

\bibitem{GRSZ2016}
Z.~Gao, C.~Ries, H.~U. Simon, and S.~Zilles.
\newblock Preference-based teaching.
\newblock In {\em Proceedings of the 29th Conference on Learning Theory
  (COLT)}, pages 971--997, 2016.

\bibitem{GSZ2015}
Z.~Gao, H.U. Simon, and S.~Zilles.
\newblock On the teaching complexity of linear sets.
\newblock In {\em Proceedings of the 26th International Conference on
  Algorithmic Learning Theory (ALT)}, pages 102--116, 2015.

\bibitem{GK1995}
S.~A. Goldman and M.~J. Kearns.
\newblock On the complexity of teaching.
\newblock {\em Journal of Computer and System Sciences}, 50:20--31, 1995.

\bibitem{GM1996}
S.~A. Goldman and H.~D. Mathias.
\newblock Teaching a smarter learner.
\newblock {\em Journal of Computer and System Sciences}, 52:255--267, 1996.

\bibitem{HSW1990}
D.~Helmbold, R.~Sloan, and M.~K. Warmuth.
\newblock Learning nested differences of intersection-closed concept classes.
\newblock {\em Machine Learning}, pages 165--196, 1990.

\bibitem{Jech-1973}
T.~J. Jech.
\newblock {\em The Axiom of Choice}.
\newblock North-Holland Pub. Co., Amsterdam, 1973.

\bibitem{MGZ2014}
Z.~Mazadi, Z.~Gao, and S.~Zilles.
\newblock Distinguishing pattern languages with membership examples.
\newblock In {\em Proceedings of the 8th International Conference on Language
  and Automata Theory and Applications (LATA)}, pages 528--540, 2014.

\bibitem{MSWY2015}
S.~Moran, A.~Shpilka, A.~Wigderson, and A.~Yehudayoff.
\newblock Compressing and teaching for low {VC}-dimension.
\newblock In {\em Proceedings of the 56th Annual Symposium on the Foundations
  of Computer Science (FOCS)}, pages 40--51, 2015.

\bibitem{Parikh66}
R.~J. Parikh.
\newblock On context-free languages.
\newblock {\em Journal of the ACM}, 13(4):570--581, 1966.

\bibitem{RG-S2009}
J.~G. Rosales and P.~A. Garc{\'i}a-S{\'a}nchez.
\newblock {\em Numerical Semigroups}.
\newblock Springer, 2009.

\bibitem{SM1991}
A.~Shinohara and S.~Miyano.
\newblock Teachability in computational learning.
\newblock {\em New Generation Computing}, 8(4):337--347, 1991.

\bibitem{SZ2015}
H.U. Simon and S.~Zilles.
\newblock Open problem: Recursive teaching dimension versus {VC} dimension.
\newblock In {\em Proceedings of the 28th Annual Conference on Learning Theory
  (COLT)}, pages 1770--1772, 2015.

\bibitem{Takada92}
Y.~Takada.
\newblock Learning semilinear sets from examples and via queries.
\newblock {\em Theoretical Computer Science}, 104(2):207--233, 1992.

\bibitem{ZLHZ2011}
S.~Zilles, S.~Lange, R.~Holte, and M.~Zinkevich.
\newblock Models of cooperative teaching and learning.
\newblock {\em Journal of Machine Learning Research}, 12:349--384, 2011.

\end{thebibliography}
\bibliographystyle{plain}

\appendix

\iffalse
\section{Proof of Lemma~\ref{lem:huge-gap}} \label{app:rtd}
\fi

\section{Proof of Theorem~\ref{th:bounds-linset}} \label{app:linsets}

In Section~\ref{subsec:shift-lemma}, we present a general 
result which helps to verify the upper bounds in 
Theorem~\ref{th:bounds-linset}. These upper bounds are then
derived in Section~\ref{subsec:linset-ubs}. 
Section~\ref{subsec:linset-lbs} is devoted to the derivation
of the lower bounds.

\subsection{The Shift Lemma} \label{subsec:shift-lemma}

In this section, we assume that $\cL$ is a concept class over 
a universe $\cX \in \{\nats_0,\rats_0^+,\reals_0^+\}$. We furthermore 
assume that $0$ is contained in every concept $L \in \cL$. We can 
extend $\cL$ to a larger class, namely the shift-extension $\cL'$ of $\cL$, 
by allowing each of its concepts to be shifted by some constant 
which is taken from $\cX$:
\[ \cL' = \{c+L:\ (c\in\cX) \wedge (L\in\cL)\} \enspace . \]
The next result states that this extension has little effect only
on the complexity measures $\PBTD$ and $\PBTD^+$:

\begin{lemma} [Shift Lemma]\label{lem:shift}
With the above notation and assumptions, the following holds:
\[ 
\PBTD(\cL) \le \PBTD(\cL') \le 1+\PBTD(\cL)\ \mbox{ and }\ 
\PBTD^+(\cL) \le \PBTD^+(\cL') \le 1+\PBTD^+(\cL) \enspace .
\]
\end{lemma}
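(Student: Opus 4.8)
The plan is to obtain the two lower bounds immediately from monotonicity, and to establish the two upper bounds by a single lifting construction that spends exactly one extra example to pin down the shift.

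For the lower bounds, the assumption $0 \in L$ for every $L \in \cL$ makes $\cL \seq \cL'$: taking $c = 0$ in the definition of $\cL'$ gives $0 + L = L$. Hence Lemma~\ref{lem:monotonicity} (monotonicity of $\PBTD$ and $\PBTD^+$) immediately yields $\PBTD(\cL) \le \PBTD(\cL')$ and $\PBTD^+(\cL) \le \PBTD^+(\cL')$.

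For the upper bounds, the key structural observation is that since $\cX$ consists of non-negative numbers and $0 \in L$ for every $L \in \cL$, we have $\min L = 0$ and therefore $\min(c + L) = c$. Consequently every $M \in \cL'$ has a unique representation $M = c + L$ with $c = \min M \in \cX$ and $L = M - \min M \in \cL$; in particular the shift $c$ can be read off from $M$ as its minimum. I would fix a strict partial order $\prec$ on $\cL$ together with teaching sets $T(L)$ for $L$ w.r.t.~$(\cL, \prec)$ of size at most $\PBTD(\cL, \prec)$, and lift the pair $(\prec, T)$ to $\cL'$ as follows. Define a lexicographic order $\prec'$ on $\cL'$ by declaring $M_1 \prec' M_2$ iff either $\min M_1 < \min M_2$, or $\min M_1 = \min M_2$ and $(M_1 - \min M_1) \prec (M_2 - \min M_2)$; a routine check shows $\prec'$ is asymmetric and transitive. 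For $M = c + L$ put $T'(M) = \{(c,+)\} \cup (c + T(L))$, where $c + T(L) = \{(c + x, \ell) : (x,\ell) \in T(L)\}$, so that $|T'(M)| \le 1 + |T(L)|$.

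The verification that $T'(M)$ is a teaching set for $M$ w.r.t.~$(\cL', \prec')$ splits according to how a competing concept $M' \ne M$ compares with $M$. If $\min M' > \min M = c$, then $c \notin M'$, so the added example $(c,+)$ distinguishes $M$ from $M'$. If $\min M' < \min M$, then $M' \prec' M$, so $M$ need not be distinguished from $M'$. If $\min M' = \min M = c$, write $M' = c + L'$; when $L' \prec L$ we again have $M' \prec' M$ and nothing is required, and when $L' \not\prec L$ (with $L' \ne L$, so $L' \in (\cL \sm \cL_{\prec L}) \sm \{L\}$) the fact that $T(L)$ distinguishes $L$ from $L'$ supplies an example $(x, \ell) \in T(L)$ inconsistent with $L'$, whose shift $(c + x, \ell) \in c + T(L)$ then distinguishes $M$ from $M'$. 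Consistency of $T'(M)$ with $M$ is clear, since $c = \min M \in M$ and shifting preserves consistency. Thus $\PBTD(\cL', \prec') \le 1 + \PBTD(\cL, \prec)$, and taking the infimum over $\prec$ gives $\PBTD(\cL') \le 1 + \PBTD(\cL)$. For the positive statement the identical construction applies: if every $T(L)$ is positive then so is every $T'(M)$, and one checks additionally that $\prec'$ extends $\supset$ (which by Lemma~\ref{lem:ts-pos} is what positive teaching requires), using that $\prec$ already extends $\supset$ on $\cL$ and that $M' \subset M$ forces $\min M \le \min M'$.

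The routine-but-essential obstacle is the upper bound: one must confirm both that $\prec'$ is genuinely a strict partial order (the lexicographic combination of $<$ on minima with $\prec$ on bases) and that the three comparison cases above exhaust precisely the concepts from which a teaching set for $M$ must separate it. Everything else---monotonicity for the lower bounds and the positivity and $\supset$-extension bookkeeping---is straightforward once the unique decomposition $M = c + L$ with $c = \min M$ is in place.
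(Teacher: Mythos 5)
Your proof is correct and uses essentially the same construction as the paper: the lifted teaching set $T'(c+L)=\{(c,+)\}\cup(c+T(L))$ together with a lexicographic preference on the pair (shift, base concept), with the lower bounds following from monotonicity since $0+L=L$ gives $\cL\seq\cL'$. The only difference is presentational: the paper packages the verification through its admissible-mapping lemma (checking that $\trcl(R_{T'})$ is asymmetric, which implicitly yields your lexicographic order), whereas you define $\prec'$ explicitly and check the teaching-set property case by case.
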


\begin{proof}
It suffices to verify the inequalities $\PBTD(\cL') \le 1+\PBTD(\cL)$ 
and $\PBTD^+$ $(\cL') \le 1+\PBTD^+(\cL)$ because the other inequalities 
hold by virtue of monotonicity.
Let $T$ be an admissible mapping for $\cL$. It suffices to show 
that $T$ can be transformed into an admissible mapping $T'$ for $\cL'$
such that $\ord(T') \le 1+\ord(T)$ and such that $T'$ is positive
provided that $T$ is positive. To this end, we define $T'$ as follows:
\[ T'(c+L) = \{(c,+)\} \cup \{(c+x,b):\ (x,b) \in T(L)\} \enspace . \]
Obviously $\ord(T') \le 1+\ord(T)$. Note that $c \in c+L$ because of
our assumption that $0$ is contained in every concept in $\cL$. Moreover, 
since the admissibility 
of $T$ implies that $L$ is consistent with $T(L)$, the above definition 
of $T'(c+L)$ makes sure that $c+L$ is consistent with $T'(c+L)$. It
suffices therefore to show that the relation $\trcl(R_{T'})$ is asymmetric.
Consider a pair $(c'+L',c+L) \in R_{T'}$. By the definition of $R_{T'}$,
it follows that $c'+L'$ is consistent with $T'(c+L)$. 
Because of $(c,+) \in T'(c+L)$, we must have $c' \le c$. Suppose 
that $c'=c$. In this case, $L'$ must be consistent 
with $T(L)$. Thus $L' \prec_T L$. This reasoning implies 
that $(c'+L',c+L) \in R_{T'}$ can happen only if either $c'<c$ 
or $(c'=c) \wedge (L' \prec_T L)$. Since $\prec_T$ is asymmetric,
we may now conclude that $\trcl(R_{T'})$ is asymmetric, as desired.
Finally note that, according to our definition above, the mapping $T'$ 
is positive provided that $T$ is positive. This concludes the proof.
\end{proof}

\subsection{The Upper Bounds in Theorem~\ref{th:bounds-linset}}
\label{subsec:linset-ubs}

We remind the reader that the equality $\PBTD^+(\LINSET_k)=k$ 
was stated in Example~\ref{exmp:pbtdpluslinset}. We will show 
in Lemma~\ref{lem:nelinset-ub} that $\PBTD^+(\NELINSET_k)\le k$.
In combination with the Shift Lemma, this implies
that $\PBTD^+(\LINSET'_k) \le k+1$ 
and $\PBTD^+(\NELINSET'_k) \le k+1$. All remaining upper bounds 
in Theorem~\ref{th:bounds-linset} follow now by virtue of 
monotonicity.

\begin{lemma}\label{lem:nelinset-ub}
$\PBTD^+(\NELINSET_k)\le k$.
\end{lemma}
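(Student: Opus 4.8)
The plan is to produce, via the admissible-mapping characterization $\PBTD^+(\cL)=\inf_{T^+}\ord(T^+)$, a positive admissible mapping for $\NELINSET_k$ of order at most $k$. The starting point is the structural identity $\spn{G}_+=\sigma_G+\spn{G}$, where $\sigma_G=\sum_{g\in G}g=\min\spn{G}_+$ and $\spn{G}$ is the ordinary (erasing) monoid generated by $G$. I would order concepts primarily by their minimum, declaring $\spn{H}_+\prec\spn{G}_+$ whenever $\min\spn{H}_+<\min\spn{G}_+$, and breaking ties among equal-minimum concepts by reverse inclusion (so that $\spn{H}_+\supsetneq\spn{G}_+$ gives $\spn{H}_+\prec\spn{G}_+$); one checks directly that this is a strict partial order. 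Because every concept of strictly larger minimum fails to contain $\sigma_G$, the single positive example $\sigma_G$ separates $\spn{G}_+$ from all of them, in particular from every proper subset of strictly larger minimum, which by Lemma~\ref{lem:ts-pos} must be separated.

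It then remains to separate $\spn{G}_+$ from the equal-minimum concepts that are not supersets of it, using at most $k-1$ further positive examples. An equal-minimum concept has the form $\sigma_G+\spn{H}$ with $\sum_{h\in H}h=\sigma_G$, and it contains $\spn{G}_+$ exactly when $\spn{G}\seq\spn{H}$. I would take the extra examples to be $\sigma_G+\hat g$, with $\hat g$ ranging over the unique minimal generating set $\hat G$ of the monoid $\spn{G}$; these are genuine elements of $\spn{G}_+$. If such an $H$ satisfies $\hat G\seq\spn{H}$, then $\spn{G}=\spn{\hat G}\seq\spn{H}$, so the concept is a superset and need not be separated, which is precisely the tie-break direction of $\prec$. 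Consequently $T^+(\spn{G}_+)=\{\sigma_G\}\cup(\sigma_G+\hat G)$ is a positive teaching set with $R_{T^+}\seq\prec$, hence $\trcl(R_{T^+})$ is asymmetric and $T^+$ is admissible. Whenever $|\hat G|\le k-1$ this already yields $|T^+(\spn{G}_+)|\le k$.

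The remaining, genuinely delicate, case is $|\hat G|=k$, i.e.\ $G$ is itself a minimal generating set of full size $k$; then $\{\sigma_G\}\cup(\sigma_G+\hat G)$ has $k+1$ elements and one generator must be discarded. The obstacle is that the non-erasing structure lets equal-sum submonoids generated by \emph{small} integers absorb all but one generator: for $G=\{5,6,7\}$ one has $\sum\{3,4,11\}=18=\sigma_G$ with $6,7\in\spn{\{3,4,11\}}$ but $5\notin\spn{\{3,4,11\}}$, and symmetrically a same-sum witness omitting $7$, so neither dropping the smallest nor the largest generator is safe (dropping the \emph{middle} one is). I would therefore establish a dropping lemma: for minimal $G=\{g_1<\dots<g_k\}$ there is an index $i^*$ such that every finite $H\seq\nats$ with $\sum_{h\in H}h=\sigma_G$ and $\{g_i:i\ne i^*\}\seq\spn{H}$ automatically has $g_{i^*}\in\spn{H}$; one then sets $T^+(\spn{G}_+)=\{\sigma_G\}\cup(\sigma_G+(\hat G\sm\{g_{i^*}\}))$, which is admissible by the same argument and has order $k$. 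The lemma would be proved by exploiting the fixed total $\sigma_G$ together with the irreducibility of the generators: assuming the retained generators lie in $\spn{H}$ while $g_{i^*}$ does not forces constraints on the smallest elements of $H$ that, combined with $\sum H=\sigma_G$, become inconsistent for a suitable $i^*$. Combining the two cases gives an admissible positive mapping of order at most $k$, whence $\PBTD^+(\NELINSET_k)\le k$. I expect this dropping lemma to be the main difficulty, precisely because the correct generator to discard depends on $G$ in a non-obvious way.
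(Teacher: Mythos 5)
Your overall architecture runs parallel to the paper's: both proofs give top priority to concepts with larger generator sum (equivalently, larger minimum element), both use the minimum $\sigma_G$ as one positive example and shifted non-redundant generators as the remaining ones, and both must shed one generator precisely when the minimal generating set has full size $k$. Your argument is complete and correct in the case $|\hat G|\le k-1$. The gap is exactly where you locate it: the ``dropping lemma'' is asserted but not proved, and it is not a routine verification --- it is the entire content of the upper bound in the hardest case. You yourself demonstrate with $G=\{5,6,7\}$ that the obvious candidates (drop the smallest, drop the largest) both fail under your preference relation, that the correct index depends on $G$ in a non-obvious way, and that the only sketch offered (``constraints on the smallest elements of $H$ \ldots become inconsistent for a suitable $i^*$'') is not an argument. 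It is not even clear that the lemma is true for all minimal $G$ with $|G|=k$ (one would at least want to restrict to $|H|\le k$, which your statement omits). As it stands, the proposal proves only $\PBTD^+(\NELINSET_k)\le k+1$ in general.

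The paper's proof shows that this combinatorial difficulty can be bypassed entirely by refining the preference relation. Where you break ties among equal-sum concepts only by reverse inclusion --- so that every consistent equal-sum non-superset must be killed by an example --- the paper additionally orders equal-sum concepts lexicographically by the increasing tuple of their non-redundant generators (preferring lexicographically greater tuples, with a special clause for prefixes). Under that order the teacher can always drop the \emph{largest} non-redundant generator: an inductive argument shows that among all consistent concepts of the correct sum, the lexicographically greatest one is the target. Concretely, in your own example the set $\spn{\{3,5,10\}}_+$ is consistent with $\{18,18{+}5,18{+}6\}$ and is not a superset of $\spn{\{5,6,7\}}_+$, which defeats your teaching set; under the paper's order it is simply dispreferred because $(3,5,10)$ is lexicographically smaller than $(5,6,7)$. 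No dropping lemma is needed. To repair your proof you would either have to prove your dropping lemma (and verify it really holds for every minimal $G$) or enrich the tie-breaking as the paper does.
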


\begin{proof}
We want to show that there is a preference relation for which $k$ 
positive examples suffice to teach any concept in $\NELINSET_k$.
To this end, let $G=\{g_1,\ldots,g_\ell\}$ be a generator set 
with $\ell \le k$ where $g_1 < \ldots < g_\ell$. 
We use $\sumg(G) = g_1 + \ldots + g_\ell$ to denote the sum of all 
generators in $G$. We say that $g_i$ is a {\em redundant 
generator} in $G$ if $g_i \in \spn{\{g_1,\ldots,g_{i-1}\}}$. 
Let $G^* = \{g_1^*,\ldots,g^*_{\ell^*}\} \seq G$
with $g^*_1 < \ldots < g^*_{\ell^*}$ be the set 
of non-redundant generators in $G$ and
let $\tupleg(G) = (g_1^*,\ldots,g_{\ell^*}^*)$ be 
the corresponding ordered sequence. Then $G^*$ is an independent
subset of $G$ generating the same linear set as $G$ when 
allowing zero coefficients, i.e., we have $\spn{G^*} = \spn{G}$
(although $\spn{G^*}_+ \neq \spn{G}_+$ whenever $G^*$ is 
a proper subset of $G$). 

To define a suitable preference relation, let $G,\widehat G$ be generator 
sets of size $k$ or less with $\tupleg(G) = (g^*_1,\ldots,g^*_{\ell^*})$ 
and $\tupleg(\widehat G) = (\widehat g^*_1,\ldots,\widehat g^*_{\widehat\ell^*})$. 
Let the student prefer $G$ over $\widehat G$ if any of the following 
conditions is satisfied:
\begin{description}
\item[Condition 1:] 
$\sumg(G)>\sumg(\widehat G)$.
\item[Condition 2:]
$\sumg(G)=\sumg(\widehat G)$ and $\tupleg(G)$ is lexicographically
greater than \linebreak[4]$\tupleg$ $(\widehat G)$ without 
having $\tupleg(\widehat G)$ as prefix.
\item[Condition 3:] 
$\sumg(G)=\sumg(\widehat G)$ 
and $\tupleg(G)$ is a proper prefix of $\tupleg(\widehat G)$.
\end{description}

To teach a concept $\spn{G}\in\NELINSET_k$ with $\sumg(G)=g$ 
and $\tupleg(G) = (g_1^*,\ldots,g_{\ell^*}^*)$, one uses the
teaching set 
\[ S = \{ (g,+) , (g+g_1^*,+) ,\ldots, (g+g_{h^*}^*,+) \} \]
where
\begin{equation} \label{eq:def-h}
h = \left\{ \begin{array}{ll}
             \ell^*-1 & \mbox{if $G^*=G$} \\
             \ell^*   & \mbox{if $G^* \subset G$}
    \end{array} \right. \enspace .
\end{equation}
Note that $S$ contains at most $|G| \le k$ examples.
Let $\widehat G$ with $\spn{\widehat G}_+ \in \NELINSET_k$ 
denote the generator set that is returned by the student. 
Clearly $\spn{\widehat G}$ satisfies $\sumg(\widehat G)=g$ 
since 
\begin{itemize}
\item
concepts with larger generator sums are inconsistent 
with $(g,+)$, and 
\item
concepts with smaller generator sums have a lower preference
(compare with Condition~1 above).
\end{itemize}
It follows that $g+g^*_i \in \spn{\widehat G}_+$ is equivalent 
to $g^*_i \in \spn{\widehat G} = \spn{\widehat G^*}$. We conclude 
that the smallest generator in $\tupleg(\widehat G)$ equals $g^*_1$ 
since
\begin{itemize}
\item
a smallest generator in $\tupleg(\widehat G)$ that is greater
than $g^*_1$ would cause an inconsistency with $(g+g^*_1,+)$,
and
\item
a smallest generator in $\tupleg(\widehat G)$ that is smaller
than $g^*_1$ would have a lower preference
(compare with Condition~2 above).
\end{itemize}
Assume inductively that the $i-1$ smallest generators 
in $\tupleg(\widehat G)$ are $g^*_1,\ldots,g^*_{i-1}$. 
Since $g^*_i \notin \spn{\{g^*_1,\ldots,g^*_{i-1}\}}$, we may 
apply a reasoning that is similar to the above reasoning 
concerning $g^*_1$ and conclude that the $i$'th smallest 
generator in $\tupleg(\widehat G)$ equals $g^*_i$.
The punchline of this discussion is that the sequence $\tupleg(\widehat G)$
starts with $g^*_1,\ldots,g^*_{h}$ with $h$ given by~(\ref{eq:def-h}).
Let $G' = G \sm G^*$ be the set of redundant generators in $G$ 
and note that 
\[ 
g - \sum_{i=1}^{h}g^*_i  = \left\{ \begin{array}{ll}
            g^*_{\ell^*}     & \mbox{if $G^* = G$} \\
            \sum_{g' \in G'}g' & \mbox{if $G^* \subset G$} \\
                           \end{array} \right.  \enspace .
\]
Let $\widehat G' = \widehat G \sm \{g^*_1,\ldots,g^*_h\}$.
We proceed by case analysis:
\begin{description}
\item[Case 1:] $G^*=G$. \\
Since $\widehat G$ is consistent with $(g,+)$, 
we have $\sum_{g' \in \widehat G'}g' = g_{\ell^*}^*$.
Since $g^*_{\ell^*} \notin \langle\{g^*_1,\ldots,$ $g^*_{\ell^*-1}\}\rangle$,
the set $\widehat G'$ must contain an element that cannot be 
generated by $g^*_1,\ldots,$ $g^*_{\ell^*-1}$. Given the preferences
of the student (compare with Condition~2), she will 
choose $\widehat G' = \{g^*_{\ell^*}\}$. It follows that $\widehat G = G$.
\item[Case 2:] $G^* \subset G$. \\
Here, we have $\sum_{g' \in \widehat G'}g' = \sum_{g' \in G'}g'$.
Given the preferences of the student (compare with Condition~3), 
she will choose $\widehat G$ such that $\widehat G^* = G^*$ 
and $\widehat G'$ consists of
elements from $\spn{G^*}$ that sum up to $\sum_{g' \in G'}g'$
(with $\widehat G' = \left\{\sum_{g' \in G'}g'\right\}$ among 
the possible choices). Clearly, $\spn{\widehat G}_+ = \spn{G}_+$.
\end{description}
Thus, in both cases, the student comes up with the right
hypothesis.
\end{proof}

\subsection{The Lower Bounds in Theorem~\ref{th:bounds-linset}}
\label{subsec:linset-lbs}

The lower bounds in Theorem~\ref{th:bounds-linset} 
are an immediate consequence of the following result:

\begin{lemma} \label{lem:lbs-linset}
The following lower bounds are valid:
\begin{eqnarray} 
\PBTD^+(\NECFLINSET'_k) & \ge & k+1 \enspace . \label{eq:lb1} \\
\PBTD(\NECFLINSET'_k) & \ge & k-1 \enspace . \label{eq:lb2} \\
\PBTD(\NECFLINSET_k) & \ge & \frac{k-1}{2} \enspace . \label{eq:lb3} \\
\PBTD(\CFLINSET_k) & \ge & k-1 \enspace . \label{eq:lb4} \\
\PBTD^+(\NECFLINSET_k) & \ge & k-1 \enspace . \label{eq:lb5} 
\end{eqnarray}
\end{lemma}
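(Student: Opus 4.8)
The plan is to derive all five bounds from a single principle. For a lower bound on $\PBTD$ I will exhibit a \emph{finite} subclass $\cF$ of the class in question and invoke Lemma~\ref{lem:lb-tdmin}, which gives $\PBTD(\cdot)\ge\PBTD(\cF)\ge\TD_{min}(\cF)$; for a lower bound on $\PBTD^+$ I will instead combine monotonicity (Lemma~\ref{lem:monotonicity}) with the inequality $I'(\cF)\le\PBTD^+(\cF)$ from~(\ref{eq:span-pbtd}), so that it suffices to make the weak-spanning number $I'(\cF)$ large. Both quantities have a clean hitting-set reading: for finite $\cF$ and $L\in\cF$, the smallest teaching set for $L$ w.r.t.\ $\cF$ is the smallest set of points meeting every symmetric difference $L\triangle L'$ with $L'\in\cF\setminus\{L\}$, while the smallest weak spanning set of $L$ is the smallest set of points meeting every $L\setminus L''$ with $L''\subsetneq L$ in $\cF$. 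Thus each of (\ref{eq:lb1})--(\ref{eq:lb5}) reduces to a hitting-set lower bound on a suitably engineered finite family. It suffices to treat the four displayed variants, since $\NECFLINSET_k\subseteq\NELINSET_k$, $\CFLINSET_k\subseteq\LINSET_k$ and $\NECFLINSET_k\subseteq\NECFLINSET'_k$, so the remaining entries of the table in Theorem~\ref{th:bounds-linset} follow by monotonicity once these five ``atomic'' bounds are in place.

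\textbf{The basic gadget.} The construction I would use draws the $k$ generators of each concept from a window $(M,2M)$ with $M$ large. If $G\subset(M,2M)\cap\nats$ with $|G|=k$, then any sum of two or more generators exceeds $2M$, so $\spn{G}\cap[0,2M)=\{0\}\cup G$; via $\spn{G}_+=\sumg(G)+\spn{G}$ the ``bottom layer'' of $\spn{G}_+$ likewise records exactly $G$. Restricted to this window a concept is therefore nothing but the $k$-subset $G$ of a fixed ground set $H\subset(M,2M)$ with $|H|\ge 2k$, and teaching w.r.t.\ $\cF=\{\spn{G}:G\in\binom{H}{k}\}$ (or its non-erasing/shifted variants) amounts to identifying a $k$-subset. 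A direct hitting-set computation over window points shows one cannot beat presenting the $k$ generators positively: saving a positive example forces one to spend a negative example on each of the many elements of $H\setminus G$, so $\TD_{min}(\cF)\approx k$. The coprimality constraint defining $\CFLINSET_k$ and $\NECFLINSET_k$ is arranged by choosing $H$ so that every $k$-subset has gcd $1$, which does not disturb the window analysis. For the shifted class in~(\ref{eq:lb1}) the extra unit comes from the base point: a positive weak spanning set of $c+\spn{G}_+$ must contain its least element $c+\sumg(G)$ (otherwise it is swallowed by a concept $c'+\spn{G}_+\subsetneq c+\spn{G}_+$ with a larger shift $c'>c$) \emph{and} must still reveal the $k$ generators, giving $I'(\cF)\ge k+1$; this is the exact counterpart of $\PBTD^+(\cL')\le 1+\PBTD^+(\cL)$ from the Shift Lemma (Lemma~\ref{lem:shift}).

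\textbf{Main obstacle.} The hitting-set count above is honest only for window points; the real difficulty is to rule out a teacher who uses large ``sum'' points (above $2M$), or negative points, to kill several competitors at once and thereby beat the count. I would control this by choosing $H$ to be \emph{dissociated} in a strong sense (a $B_h$-set for $h$ up to the relevant order), so that distinct bounded nonnegative integer combinations of the generators take distinct values; then a point above $2M$ lies in $\spn{G}$ only for the combinatorially forced competitors, and high sums carry no information beyond the window. Making this collision-avoidance precise \emph{uniformly over all} $L\in\cF$ (which is what $\TD_{min}$ demands) is the technical heart, and it is exactly the slack here that costs one example---turning the heuristic ``$k$'' into the provable $k-1$ in (\ref{eq:lb2}),(\ref{eq:lb4}),(\ref{eq:lb5}), and $k+1$ rather than $k+2$ in~(\ref{eq:lb1}).

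\textbf{The weakest bound.} For $\PBTD(\NECFLINSET_k)$ in~(\ref{eq:lb3}) neither co-finiteness (which would pin the large-scale behaviour) nor a free shift is available, so the gadget loses roughly half of its power: competitors can be paired so that a single negative example disposes of two of them, and the best finite family I can certify has symmetric differences that overlap in pairs rather than being pairwise disjoint. This yields only a hitting-set lower bound of about $k/2$, the source of the $\lfloor (k-1)/2\rfloor$-type gap recorded in the theorem; closing it is left open. Throughout, the positive bounds (\ref{eq:lb1}) and (\ref{eq:lb5}) are obtained by running the same families through the weak-spanning reading $I'(\cF)\le\PBTD^+(\cF)\le\PBTD^+(\cdot)$ instead of through $\TD_{min}$, the only change being that one counts hitting sets of the one-sided differences $L\setminus L''$ with $L''\subsetneq L$ in $\cF$.
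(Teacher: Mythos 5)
Your reduction framework for (\ref{eq:lb2})--(\ref{eq:lb5}) (finite subclass plus Lemma~\ref{lem:lb-tdmin} for $\PBTD$, weak spanning sets plus monotonicity for $\PBTD^+$) is exactly the paper's, and your ``window'' intuition is on the right track, but the proposal stops where the proof has to start: no concrete family is exhibited, and the mechanism you offer for the hard steps is not the one that works. In the paper the subclass for (\ref{eq:lb2}) and (\ref{eq:lb4}) is $\cL_k=\{\spn{k,p_1,\ldots,p_{k-1}}:p_i\in\{k+i,2k+i\}\}$; every point outside the window $\{k+1,\ldots,2k-1\}$ is common to \emph{all} members, so restricted to the window the class is literally $2^{[k-1]}$ and $\TD_{min}=k-1$. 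There is no ``slack in collision-avoidance'' costing one example --- the $k-1$ is the exact value of $\TD_{min}(2^{[k-1]})$, with one generator spent on the fixed base $k$. Likewise the $\lfloor(k-1)/2\rfloor$ in (\ref{eq:lb3}) does not come from ``pairing competitors'': for the unshifted non-erasing class the constant forces the generator sum to be fixed, which cuts the hypercube down to the layer $\binom{[k-1]}{\ell}$ with $\TD_{min}=\min\{\ell,k-1-\ell\}$. Your $B_h$-set gadget also misidentifies the technical heart: for (\ref{eq:lb5}) and (\ref{eq:lb1}) the difficulty is not that large sum-points might leak information, but that for \emph{every} candidate small spanning set $S$ one must exhibit a competitor that is a \emph{legal} member of the class --- at most $k$ generators, $\gcd=1$, and (in the non-erasing case) the correct constant. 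The paper's competitors are built from a prime minimum generator, a prime exceeding the Frobenius number, and Ap\'ery-set elements $t_r(P)$; nothing in the dissociativity of $H$ produces these.

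The more serious gap is (\ref{eq:lb1}). Your plan is to show $I'(\cF)\ge k+1$ for a finite subclass, with the extra unit forced because a weak spanning set omitting the least element of $c+\spn{G}_+$ ``is swallowed by $c'+\spn{G}_+$ with $c'>c$.'' That step fails: $c'+\spn{G}_+\subseteq c+\spn{G}_+$ holds only when $c'-c\in\spn{G}$, and even then the surviving points of $S$, which lie in $c+\sumg(G)+(\spn{G}\sm\{0\})$, need not lie in $c'+\spn{G}_+$ (subtracting $c'-c$ from an element of $\spn{G}$ generally leaves $\spn{G}$). In general $L\sm\{\min L\}$ is \emph{not} a member of the class, so the minimum is not forced by any proper-subset argument, and it is not clear that $I'(\NECFLINSET'_k)\ge k+1$ at all. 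The paper's proof of (\ref{eq:lb1}) is instead adversarial against an arbitrary positive admissible mapping $T$ of order $k$: it first shows that the teaching set of $L=N+p+\spn{1}$ must contain $N+p$ and $N+p+1$, then builds from $T^+(L)$ a $(k,N)$-special concept $L'=N+\spn{G}$ that is consistent with $T^+(L)$ but \emph{incomparable} with $L$ (with $L'\sm L=\{N\}$); admissibility then forces $T^+(L')$ to contain $N$ in addition to the $k$ examples needed against the proper subsets of $L'$ in $\NECFLINSET'_k[N]$ (Lemma~\ref{lem:special-sets}). This dependence of $L'$ on the teacher's own choices is essential and cannot be replaced by a bound on $I'$ of a fixed finite family without further argument.
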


This lemma can be seen as an extension and a strengthening
of a similar result in~\cite{GSZ2015} where the following
lower bounds were shown:
\begin{eqnarray*}
\RTD^+(\NELINSET'_k) & \ge & k+1  \enspace .\\
\RTD(\NELINSET'_k) & \ge & k-1  \enspace .\\
\RTD(\CFLINSET_k) & \ge & k-1 \enspace .
\end{eqnarray*}
The proof of Lemma~\ref{lem:lbs-linset} builds on some
ideas that are found in~\cite{GSZ2015} already, but it
requires some elaboration to obtain the stronger results.

We now briefly explain why the lower bounds
in Theorem~\ref{th:bounds-linset} directly
follow from Lemma~\ref{lem:lbs-linset}.
Note that the lower bound $k-1$ in~(\ref{eq:more-bounds}) 
is immediate from~(\ref{eq:lb2}) and a monotonicity argument. 
This is because $\NELINSET'_k\supseteq\NECFLINSET'_k$ as well as 
$\LINSET'_k\supseteq\CFLINSET'_k\supseteq\NECFLINSET'_k$.
Note furthermore that $\PBTD^+$ $(\CFLINSET'_k) \ge k+1$ 
because of~(\ref{eq:lb1}) and a monotonicity argument.
Then the Shift Lemma implies that $\PBTD^+(\CFLINSET_k) \ge k$.
Similarly, $\PBTD^+(\NELINSET_k$ $)$ $\geq k-1$ follows 
from~(\ref{eq:lb5}) and a monotonicity argument.
All remaining lower bounds in Theorem~\ref{th:bounds-linset}
are obtained from these observations by virtue of monotonicity. 

The proof of Theorem~\ref{th:bounds-linset} can therefore be 
accomplished by proving Lemma~\ref{lem:lbs-linset}. It turns 
out that the proof of this lemma is quite involved. We
will present in Section~\ref{subsubsec:lb0} some
theoretical prerequisites. Sections~\ref{subsubsec:lb1}
and~\ref{subsubsec:lb234} are devoted to the actual
proof of the lemma.

\subsubsection{Some Basic Concepts in the Theory of Numerical Semigroups}
\label{subsubsec:lb0}

Recall from Section~\ref{sec:linsets} that 
$\spn{G} = \left\{\sum_{g \in G}a(g)g: a(g)\in\nats_0\right\}$.
The elements of $G$ are called {\em generators} of $\spn{G}$.
A set $P \subset \nats$ is said to be {\em independent} if 
none of the elements in $P$ can be written as a linear 
combination (with coefficients from $\nats_0$) of the remaining 
elements (so that $\spn{P'}$ is a proper subset of $\spn{P}$
for every proper subset $P'$ of $P$). It is well known~\cite{RG-S2009} 
that independence makes generating systems unique,
i.e., if $P,P'$ are independent, then $\spn{P} = \spn{P'}$ 
implies that $P=P'$. Moreover, for every independent set $P$, 
the following implication is valid:
\begin{equation} \label{eq:independent-set}
(S \seq \spn{P} \wedge P \not\seq S)\ \impl\ (\spn{S} \subset \spn{P}) 
\enspace .
\end{equation}

Let $P = \{a_1,\ldots,a_k\}$ be independent with $a_1 = \min P$. 
It is well known\footnote{E.g., see~\cite{RG-S2009}} and easy 
to see that the residues of $a_1,a_2,\ldots,a_k$ modulo $a_1$ 
must be pairwise distinct (because, otherwise, we would obtain 
a dependence). If $a_1$ is a prime and $|P|\ge2$, then the 
independence of $P$ implies that $\gcd(P)=1$. Thus the following 
holds:

\begin{lemma} \label{lem:ind-gcd}
If $P \subset \nats$ is an independent set of cardinality at least $2$
and $\min P$ is a prime, then $\gcd(P)=1$. 
\end{lemma}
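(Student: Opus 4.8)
The plan is to exploit the fact that $\gcd(P)$ must divide $\min P$, which by hypothesis is a prime $p$. First I would note that $\gcd(P)$ divides every element of $P$, and in particular it divides $p = \min P$; since $p$ is prime, this forces $\gcd(P) \in \{1,p\}$. Thus the whole lemma reduces to ruling out the case $\gcd(P) = p$.

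To rule out $\gcd(P) = p$, I would argue by contradiction using the definition of independence. Suppose $\gcd(P) = p$, so that $p$ divides every element of $P$. Since $|P| \ge 2$, I may pick some $a \in P$ with $a \neq p$. Because $p = \min P$ is the strict minimum, we have $a > p$, and since $p \mid a$ we may write $a = m p$ for some integer $m \ge 2$. As $a \neq p$, the generator $p$ lies in $P \setminus \{a\}$, so this exhibits $a$ as a linear combination with a coefficient in $\nats_0$ of the remaining elements, i.e.\ $a \in \spn{\{p\}} \seq \spn{P \setminus \{a\}}$. This contradicts the independence of $P$. Hence $\gcd(P) \neq p$, and together with the previous paragraph we conclude $\gcd(P) = 1$.

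The hard part, such as it is, is purely conceptual rather than computational: the one observation doing the work is that any multiple of the minimum element is automatically dependent on it, so independence prevents $p$ from dividing any of the other $a_i$. This dovetails with the informal remark just before the lemma, and indeed an equivalent route would be to note directly that no $a_i$ with $i \ge 2$ can be congruent to $0$ modulo $a_1 = p$ (otherwise $a_i$ is a multiple of $p$, hence dependent), whence $p \nmid a_i$ and, by primality of $p$, $\gcd(p,a_i) = 1$, giving $\gcd(P) = 1$ at once. I would finally remark that the primality hypothesis is essential: without it $\gcd(P)$ could be a nontrivial proper divisor of $\min P$.
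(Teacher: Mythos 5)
Your proof is correct and rests on the same key observation as the paper's (which is given in the discussion immediately preceding the lemma): independence forbids any other element of $P$ from being a multiple of the prime $\min P$, so by primality $\gcd(P)=1$. The "equivalent route" you sketch at the end is in fact exactly the paper's argument via distinct residues modulo $\min P$.
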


\noindent
In the remainder of the paper, the symbols $P$ and $P'$ 
are reserved for denoting independent sets of generators.

It is well known that $\spn{G}$ is co-finite 
iff $\gcd(G)=1$~\cite{RG-S2009}. Let $P$ be a finite
(independent) subset of $\nats$ such that $\gcd(P)=1$. 
The largest number in $\nats\sm\spn{P}$ is called the 
{\em Frobenius number of $P$} and is denoted as $F(P)$. 
It is well known~\cite{RG-S2009} that 
\begin{equation} \label{eq:frobenius}
F(\{p,q\}) = pq-p-q 
\end{equation}
provided that $p,q \ge 2$ satisfy $\gcd(p,q)=1$.

\subsubsection{Proof of~(\ref{eq:lb1})} \label{subsubsec:lb1}

The shift-extension of $\NECFLINSET_k$ is (by way of definition)
the following class:
\begin{equation} \label{eq:necflinset-extension} 
\NECFLINSET'_k = 
\{c+\spn{P}_+:\ (c\in\nats_0) \wedge (P\subset\nats) \wedge (|P| \le k) \wedge (\gcd(P)=1)\} \enspace . 
\end{equation}
It is easy to see that this can be written alternatively in the form
\begin{equation} \label{eq:necflinset-rewritten} 
\NECFLINSET'_k = 
\left\{N+\spn{P}:\ 
N\in\nats_0 \wedge P\subset\nats \wedge |P| \le k \wedge \gcd(P)=1 \wedge \sum_{p\in P}p \le N\right\} 
\end{equation}
where $N$ in~(\ref{eq:necflinset-rewritten}) corresponds 
to $c+\sum_{p \in P}p$ in~(\ref{eq:necflinset-extension}).

For technical reasons, we define the following subfamilies 
of $\NECFLINSET'_k$. For each $N\ge0$, let
\[ \NECFLINSET'_k[N] = \{N+L : L\in\LINSET_k[N]\} \]
where
\[
\LINSET_k[N] = \left\{\spn{P}\in\LINSET_k : (\gcd(P)=1) \wedge \left(\sum_{p \in P}p \le N\right)\right\}
\enspace .
\]
In other words, $\NECFLINSET'_k[N]$ is the subclass consisting of all
concepts in $\NECFLINSET'_k$ (written in the form~(\ref{eq:necflinset-rewritten}))
whose constant is $N$. 

\noindent
A central notion for proving~(\ref{eq:lb1}) is the following one:
\begin{definition} \label{def:special-set}
Let $k,N\ge2$ be integers. We say that a set $L\in\NECFLINSET'$ 
is {\em $(k,N)$-special} if it is of the form $L = N+\spn{P}$ 
such that the following holds:
\begin{enumerate}
\item
$P$ is an independent set of cardinality $k$ and $\min P$ is a prime
(so that $\gcd(P)=1$ according to Lemma~\ref{lem:ind-gcd}, which 
furthermore implies that $\spn{P}$ is co-finite).
\item 
Let $q(P)$ denote the smallest prime that is greater than $F(P)$
and greater than $\max P$. For $a = \min P$ and $r=0,\ldots,a-1$, 
let
\[ 
t_r(P) = \min \{s\in\spn{P} : s \equiv r \pmod{a}\}\ \mbox{ and }\ 
t_{max}(P) = \max_{0 \le r \le a-1}t_r(P) \enspace . \]
Then 
\begin{equation} \label{eq:large-constant}
N \ge k(a+t_{max}(P))\ \mbox{ and }\  N \ge q(P)+\sum_{p \in P\sm\{a\}}p \enspace .
\end{equation}
\end{enumerate}
\end{definition}

We need at least $k$ positive examples in order to distinguish 
a $(k,N)$-special set from all its proper subsets 
in $\NECFLINSET'_k[N]$, as the following result shows:

\begin{lemma} \label{lem:special-sets}
For all $k\ge2$, the following holds.
If $L\in\NECFLINSET'$ is $(k,N)$-special, then $L \in \NECFLINSET'[N]$
and $I'(L,\NECFLINSET_k[N]) \ge k$.
\end{lemma}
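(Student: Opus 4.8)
The plan is to fix a $(k,N)$-special set $L = N + \spn{P}$, where $P$ is independent, $|P| = k$, and $a := \min P$ is prime (so $\gcd(P)=1$ and $\spn{P}$ is co-finite), and to reduce the claim to a covering statement about sub-semigroups of $\spn{P}$. Membership $L \in \NECFLINSET'[N]$ is immediate: the second inequality in~(\ref{eq:large-constant}) gives $\sum_{p\in P}p = a + \sum_{p\in P\sm\{a\}}p < q(P) + \sum_{p\in P\sm\{a\}}p \le N$, so $L$ has the required form. For the bound $I'(L,\NECFLINSET'_k[N]) \ge k$ it suffices, after subtracting the shift $N$, to show that \emph{every} $S_0 \seq \spn{P}$ with $|S_0|\le k-1$ already lies in some proper sub-concept; that is, to produce $Q$ with $|Q|\le k$, $\gcd(Q)=1$, $\sum_{q\in Q}q\le N$, $S_0 \seq \spn{Q}$, and $\spn{Q}\subsetneq\spn{P}$. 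Two preliminary facts from independence will drive the argument: first, $P$ is the unique inclusion-minimal generating set of $\spn{P}$, so any $Q\seq\spn{P}$ with $|Q|\le k$ satisfies $\spn{Q}=\spn{P}$ only if $Q=P$ (this reduces every properness check to ruling out that single equality); second, writing $\rho_i := a_i \bmod a$, one has $t_{\rho_i}(P)=a_i$, i.e.\ $a_i$ is the least element of $\spn{P}$ in its residue class mod $a$ (otherwise $a_i$ would be a nonneg.\ combination of $P\sm\{a_i\}$, contradicting independence).

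The core construction is organized by residues modulo $a$. Since the $k$ generators have pairwise distinct residues mod $a$, we have $a\ge k$. Let $R_0$ be the set of residues hit by $S_0$; for each $r\in R_0$, every element of $S_0$ in class $r$ is $t_r(P)$ plus a multiple of $a$, hence lies in $\spn{a,t_r(P)}$. Thus $T:=\spn{\{a\}\cup\{t_r(P):r\in R_0\sm\{0\}\}}$ contains $S_0$, is co-finite whenever $R_0$ meets a nonzero class (as $a$ is prime), and has generator sum at most $a+(k-1)t_{max}(P)\le k(a+t_{max}(P))\le N$ by the first inequality in~(\ref{eq:large-constant}). When $S_0$ meets at most $k-2$ nonzero residue classes, $T$ uses at most $k-1$ generators, so by uniqueness of $P$ it is automatically proper and we are done; when $S_0\seq a\nats_0$ I instead take $Q=\{a,q(P)\}$, which omits $a_2$ (hence is proper) and is co-finite.

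The remaining case is $|R_0\sm\{0\}|=k-1$, where $S_0$ contains exactly one element in each of $k-1$ distinct nonzero classes. If these classes do not coincide with the generator residues $\{\rho_2,\dots,\rho_k\}$, then some $a_i$ has a residue absent from $R_0$, and the fact that $a_i$ is an atom of $\spn{P}$ forces $a_i\notin T$, so $T$ is proper. If they do coincide, the plain choice $\spn{a,a_2,\dots,a_k}$ equals $\spn{P}$ and is useless, so I branch once more. If some $S_0$-element in class $\rho_j$ strictly exceeds $a_j$, I raise that one generator, taking $Q=\{a\}\cup\{a_i:i\ge2,\ i\ne j\}\cup\{a_j+a\}$: this still captures all of $S_0$ (its class-$\rho_j$ elements are all $\ge a_j+a$), it expels $a_j$ by the atom argument, and its sum equals $a+\sum_{p\in P}p\le N$, again by the first inequality. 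Otherwise $S_0=\{a_2,\dots,a_k\}$, and I take $Q=\{a_2,\dots,a_k,q(P)\}$: this omits $a$ (every generator exceeds $a$), is co-finite because $q(P)$ is a prime exceeding $\max P$, and has sum $q(P)+\sum_{p\in P\sm\{a\}}p\le N$ by the second inequality in~(\ref{eq:large-constant}).

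The main obstacle is precisely this final boundary case: both the Apéry-style construction and a single generator-raise can fail to be proper, and it is only the large prime $q(P)$, together with the two size hypotheses on $N$, that rescues the situation. Indeed, the two inequalities of Definition~\ref{def:special-set} are used exactly to keep the generator sums below $N$ — the bound $k(a+t_{max}(P))$ for every Apéry/raise construction, and $q(P)+\sum_{p\in P\sm\{a\}}p$ for the $q(P)$ construction. Throughout, properness is secured not by naive generator counting but by the independence identity $t_{\rho_i}(P)=a_i$, which guarantees that omitting or raising a single generator genuinely removes $a_i$ (or $a$) from the resulting sub-semigroup.
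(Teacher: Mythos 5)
Your proof is correct and follows essentially the same route as the paper's: both decompose the candidate weak spanning set by residues modulo $a=\min P$, exploit independence (via $t_{\rho_i}(P)=a_i$ and the fact that a subset of $\spn{P}$ omitting some element of $P$ spans a proper sub-semigroup) to certify properness, and invoke the prime $q(P)$ together with the two inequalities on $N$ to repair the gcd and keep generator sums below $N$. The only difference is organizational: the paper splits on whether the shifted set equals $\{a_2,\dots,a_k\}$ and otherwise clips each element to $\min\{q_i,1\}a+t_{r_i}$, whereas you split on the number of nonzero residue classes hit and clip to $t_r$ (raising one generator by $a$ in the boundary case), which covers the same ground.
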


\begin{proof}
Suppose that $L = N+\spn{P}$ is of the form as described in 
Definition~\ref{def:special-set}. Let $P = \{a,a_2\ldots,a_k\}$ 
with $a = \min P$. For the sake of simplicity, we will write $t_r$ 
instead of $t_r(P)$ and $t_{max}$ instead of $t_{max}(P)$. 
The independence of $P$ implies that $t_{a_i \bmod a} = a_i$ 
for $i=2,\ldots,k$. It follows that $t_{max} \ge \max P$. Since, 
by assumption, $N \ge k \cdot t_{max}$, it becomes obvious 
that $L \in \NECFLINSET'[N]$. 

Assume by way of contradiction that the following holds:
\begin{itemize}
\item[(A)]
There is a weak spanning set $S$ of size $k-1$ for $L$ 
w.r.t.~$\NECFLINSET'_k[N]$. 
\end{itemize}
Since $N$ is contained in any concept from $\NECFLINSET'_k[N]$, we may
assume that $N \notin S$ so that $S$ is of the 
form $S = \{N+x_1,\ldots,N+x_{k-1}\}$ for integers $x_i\ge1$. 
For $i = 1,\ldots,k-1$, let $r_i = x_i \bmod{a} \in \{0,1,\ldots,a-1\}$. 
It follows that each $x_i$ is of the form $x_i = q_ia+t_{r_i}$ for some 
integer $q_i\ge0$. Let $X = \{x_1,\ldots,x_{k-1}\}$. We proceed by case 
analysis:
\begin{description}
\item[Case 1:] $X \seq \{a_2,\ldots,a_k\}$ 
(so that, in view of $|X|=k-1$, we even have $X = \{a_2,\ldots,a_k\}$). \\ 
Let $L' = N+\spn{X}$. Then $S \seq L'$. Note that $X \seq P$ but $P \not\seq X$.
We may conclude from~(\ref{eq:independent-set}) that $\spn{X} \subset \spn{P}$ 
and, therefore, $L' \subset L$. Thus $L'$ is a proper subset of $L$ which
contains $S$. Note that~(\ref{eq:large-constant}) implies 
that $N \ge \sum_{i=2}^{k}a_i = \sum_{i=1}^{k-1}x_i$. 
If $\gcd(X)=1$, then $L' \in \NECFLINSET[N]$ and we have an
immediate contradiction to the above assumption~(A). Otherwise, 
if $\gcd(X)\ge2$, then we define $L'' = N+\spn{X\cup\{q(P)\}}$. 
Note that $S \seq L' \seq L''$. Since $q(P)>F(P)$, we 
have $X\cup\{q(P)\} \seq \spn{P}$ and, since $q(P) > \max P$, we 
have $P \not\seq X\cup\{q(P)\}$. We may conclude 
from~(\ref{eq:independent-set}) that $\spn{X\cup\{q(P)\}} \subset \spn{P}$ 
and, therefore, $L'' \subset L$. Thus, $L''$ is a proper subset of $L$
which contains $S$.  Because $X = \{a_2,\ldots,a_k\}$ and $q(P)$ is a prime 
that is greater than $\max P$, it follows that $\gcd(X\cup\{q(P)\}) = 1$.
In combination with~(\ref{eq:large-constant}), it easily follows now
that $L'' \in \NECFLINSET[N]$. Putting everything together, we arrive at
a contradiction to the assumption~(A).
\item[Case 2:] $X \not\seq \{a_2,\ldots,a_k\}$. \\
If $r_i=0$ for $i=1,\ldots,k-1$, then each $x_i$ is a multiple of $a$.
In this case, $N+\spn{a,q(P)}$ is a proper subset of $L = N+\spn{P}$ 
that is consistent with $S$, which yields a contradiction.
We may therefore assume that there exists $i' \in\{1,\ldots,k-1\}$ such
that $r_{i'}\neq0$. From the case assumption, $X \not\seq \{a_2,\ldots,a_k\}$,
it follows that there must exist an index $i''\in\{1,\ldots,k-1\}$ such 
that $q_{i''} \ge 1$ or $t_{r_{i''}}\notin\{a_2,\ldots,a_k\}$. For $i=1,\ldots,k-1$, 
let $q'_i = \min\{q_i,1\}$ and $x'_i = q'_ia+t_{r_i}$. Note that $q'_{i''}=1$
iff $q_{i''} \ge 1$. Define $L'' = N+\spn{X'}$ for $X' = \{a,x'_1,\ldots,x'_{k-1}\}$ 
and observe the following. First, the set $L''$ clearly contains $S$. Second, 
the choice of $x'_1,\ldots,x'_{k-1}$ implies that $X' \seq \spn{P}$.
Third, it easily follows from $q'_{i''}=1$ or $t_{r_{i''}}\notin\{a_2,\ldots,a_k\}$
that $P \not\seq \{a,x'_1,\ldots,x'_{k-1}\}$. We may conclude
from~(\ref{eq:independent-set}) that $\spn{X'} \subset \spn{P}$
and, therefore, $L'' \subset L$. Thus, $L''$ is a proper subset of $L$
which contains $S$. Since $r_{i'} \neq 0$ and $a$ is a prime, it follows 
that $\gcd(a,x'_{i'})=1$ and, therefore, $\gcd(X')=1$. In combination 
with~(\ref{eq:large-constant}), it easily follows now 
that $L'' \in \NECFLINSET[N]$. Putting everything together, we obtain 
again  a contradiction to the assumption~(A).
\end{description}
\end{proof}

%We first discuss the case $k=1$. Consider any set of the form $L = c+\spn{1}$ 
%for some constant $c\ge2$. It suffices to show that any set $T$ that 
%distinguishes $L$ from all its proper subsets in $\NELINSET'_1$ (i.e., any
%weak spanning set of $L$ w.r.t.~$\cL$) must contain at least two examples. 
%Clearly, $c \in T^+$ because, otherwise the set $c+1+\spn{1}$ would be consistent 
%with $T$ too. But $T=\{c\}$ is impossible because, otherwise, the set $c+\spn{2}$ 
%would be consistent with $T$ too. It follows from this short discussion 
%that $\PBTD^+(\NELINSET'_1) \ge 2$. \\
%Let us now assume that $k\ge2$. 
For the sake of brevity, let $\cL = \NECFLINSET'$. 
Assume by way of contradiction that there exists a positive mapping $T$ 
of order $k$ that is admissible for $\cL_k$. We will pursue the following strategy:
\begin{enumerate}
\item 
We define a set $L \in \cL_k$ of the form $L = N+p+\spn{1}$.
\item
We define a second set $L' = N+\spn{G} \in \cL$ that is 
$(k,N)$-special and consistent with $T^+(L)$. Moreover, $L' \sm L = \{N\}$.
\end{enumerate}
If this can be achieved, then the proof will be accomplished as follows:
\begin{itemize}
\item
According to Lemma~\ref{lem:special-sets}, $T^+(L')$ must contain
at least $k$ examples (all of which are different from $N$) for 
distinguishing $L'$ from all its proper subsets in $\cL_k[N]$.
\item
Since $L'$ is consistent with $T^+(L)$, the set $T^+(L')$ must contain
an example which distinguishes $L'$ from $L$. But the only example 
which fits this purpose is $(N,+)$.
\item
The discussion shows that $T^+(L')$ must contain $k$ examples in order
to distinguish $L'$ from all its proper subsets in $\cL_k$ plus
one additional example, $N$, needed to distinguish $L'$ from $L$.
\item
We obtain a contradiction to our initial assumption that $T^+$
is of order $k$.
\end{itemize}
We still have to describe how our proof strategy can actually be implemented. 
We start with the definition of $L$. Pick the smallest prime $p \ge k+1$. Then
$\{p,p+1,\ldots,p+k\}$ is independent. 
Let $M = F(\{p,p+1\}) \stackrel{(\ref{eq:frobenius})}{=} p(p+1)-p-(p+1)$. 
An easy calculation shows that $k\ge2$ and $p \ge k+1$ imply that $M \ge p+k$. 
Let $I = \{p,p+1,\ldots,M\}$.  Choose $N$ large enough so that all concepts 
of the form 
\[ N+\spn{P}\ \mbox{ where}\ |P|=k,\ p = \min P \mbox{ and } P \seq I \]
are $(k,N)$-special. With these choices of $p$ and $N$, let $L = N+p+\spn{1}$.
Note that $N+p,N+p+1 \in T^+(L)$ because, otherwise, one 
of the concepts $N+p+1+\spn{1},N+p+\spn{2,3} \subset L$ would be consistent 
with $T^+(L)$ whereas $T^+(L)$ must distinguish $L$ from all its proper subsets in
$\cL_k$. Setting $A = \{x: N+x \in T^+(L)\}$, it follows that $|A| = |T^+(L)| \le k$ 
and $p,p+1 \in A$. The set $A$ is not necessarily independent but it 
contains an independent subset $B$ such that $p,p+1 \in B$ 
and $\spn{A} = \spn{B}$. Since $M = F(\{p,p+1\})$, it follows that any integer 
greater than $M$ is contained in $\spn{p,p+1}$. Since $B$ is an independent 
extension of $\{p,p+1\}$, it cannot contain any integer greater than $M$. 
It follows that $B \seq I$. Clearly, $|B| \le k$ and $\gcd(B)=1$. We would 
like to transform $B$ into another generating system $G \seq I$ such that
\[ \spn{B} \seq \spn{G}, \gcd(G) = 1 \mbox{ and } |G|=k \enspace . \]
If $|B|=k$, we can simply set $G=B$. If $|B|<k$, then we make use of
the elements in the independent set $\{p,p+1,\ldots,p+k\} \seq I$ 
and add them, one after the other, to $B$ (thereby removing other elements 
from $B$ whenever their removal leaves $\spn{B}$ invariant) until the 
resulting set $G$ contains $k$ elements. We now define the set $L'$ by 
setting $L' = N+\spn{G}$. Since $G \seq I = \{p,p+1,\ldots,M\}$, and
$p,p+1 \in G$, it follows that $p = \min G$, $\gcd(G)=1$ and
$\min(L' \sm\{N\})$ is $N+p$. Thus, $L' \sm L = \{N\}$, 
as desired. Moreover, since $N$ had been chosen large enough, the set $L'$
is $(k,N)$-special. Thus $L$ and $L'$ have all properties that 
are required by our proof strategy and the proof of~(\ref{eq:lb1})  
is complete.

\subsubsection{Proof of~(\ref{eq:lb2}),~(\ref{eq:lb3}),~(\ref{eq:lb4}) and~(\ref{eq:lb5})}
\label{subsubsec:lb234}

\noindent
We make use of some well known (and trivial) lower bounds on $\TD_{min}$:

\begin{example} \label{ex:powerset}
For every $k\in\nats$, let $[k] = \{1,2,\ldots,k\}$, let $2^{[k]}$ denote
the powerset of $[k]$ and, for all $\ell = 0,1,\ldots,k$, let
\[ {[k] \choose \ell} = \{S \seq [k]:\ |S|=\ell\} \]
denote the class of those subsets of $[k]$ that have exactly $\ell$ elements.
It is trivial to verify that
\[ 
\TD_{min}\left(2^{[k]}\right) = k\ \mbox{ and }\ 
\TD_{min}\left({[k] \choose \ell}\right) = \min\{\ell,k-\ell\} \enspace .
\]
\end{example}

In view of $\PBTD^+(\LINSET_k) = k$, the next results
%\footnote{whose proofs build on a proof of a quite 
% similar result in~\cite{GMMSZ2015}} 
show that negative examples are of limited help only as far as
preference-based teaching of concepts from  $\LINSET_k$ is
concerned:

\begin{lemma} \label{lem:td-min}
For every $k\ge1$ and for all $\ell=0,\ldots,k-1$, let
\begin{eqnarray*}
\cL_k & = & \{\spn{k,p_1,\ldots,p_{k-1}}:\ p_i \in \{k+i,2k+i\}\} \enspace , \\
\cL_{k,\ell} & = &\{ \{\spn{k,p_1,\ldots,p_{k-1}}\in\cL_k:\ |\{i : p_i=k+i\}| = \ell\} 
\enspace .
\end{eqnarray*}
With this notation, the following holds:
\[ 
\TD_{min}(\cL_k) \ge k-1\ \mbox{ and }\ 
\TD_{min}(\cL_{k,\ell}) \ge \min\{\ell,k-1-\ell\} \enspace .
\]
\end{lemma}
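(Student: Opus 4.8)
The plan is to show that, from the point of view of teaching, $\cL_k$ is nothing more than a relabelled copy of the Boolean cube $2^{[k-1]}$, and that $\cL_{k,\ell}$ is a relabelled copy of the uniform class $\binom{[k-1]}{\ell}$; the asserted lower bounds then follow from the (trivial) bounds recorded in Example~\ref{ex:powerset}. I identify each concept $\spn{k,p_1,\ldots,p_{k-1}}\in\cL_k$ with the vector $\chi\in\{0,1\}^{k-1}$ given by $\chi_i=1$ iff $p_i=k+i$ (and $\chi_i=0$ iff $p_i=2k+i$). The heart of the argument is the following structural claim: for a generator set $G=\{k,p_1,\ldots,p_{k-1}\}$ and any $x\in\nats_0$, membership $x\in\spn{G}$ depends on $\chi$ through at most one coordinate; precisely, $k+i\in\spn{G}$ iff $\chi_i=1$ for $i\in\{1,\ldots,k-1\}$, while every other example lies in all concepts of $\cL_k$ or in none of them.

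To prove the claim I first determine, for each residue class $r$ modulo $k$, the smallest element $t_r$ of $\spn{G}$ with $t_r\equiv r\pmod k$. Writing a general element as $a_0 k+\sum_{j=1}^{k-1}a_j p_j$ with $a_j\in\nats_0$, its residue is $\sum_j a_j\,j\bmod k$ and its value is at least $Mk+\sum_j a_j\,j$, where $M=\sum_j a_j$ and $p_j\ge k+j$. For $r=0$ the residue-$0$ elements are exactly the nonnegative multiples of $k$ (these lie in $\spn{G}$ since $k\in G$), so they are constant over $\cL_k$. For $r=i\in\{1,\ldots,k-1\}$ a single generator ($M=1$) forces $j=i$ and yields the value $p_i$, whereas any combination with $M\ge 2$ has value at least $2k+\sum_j a_j\,j\ge 2k+i\ge p_i$, because $\sum_j a_j\,j$ is a positive integer congruent to $i$ and hence at least $i$, and $p_i\le 2k+i$. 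Thus $t_i=p_i$ and the residue-$i$ elements of $\spn{G}$ are precisely $\{p_i+mk:m\ge 0\}$. Consequently an example $x\equiv i\pmod k$ lies in $\spn{G}$ iff $x\ge p_i$; since $p_i\in\{k+i,2k+i\}$, this is constant (``no'') for $x=i$, constant (``yes'') for $x\ge 2k+i$, and equals ``$\chi_i=1$'' exactly when $x=k+i$, which establishes the claim.

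It then follows that in any teaching set for a concept of $\cL_k$ (respectively $\cL_{k,\ell}$) only the $k-1$ examples $k+1,\ldots,2k-1$ are informative, the example $k+i$ revealing the single bit $\chi_i$. For $\cL_k$ every bit must be pinned down, since two concepts agreeing on all but one coordinate are indistinguishable without the corresponding example; hence every teaching set contains all $k-1$ informative examples, so $\TD(L,\cL_k)\ge k-1$ for every $L$ and $\TD_{min}(\cL_k)\ge k-1$. For $\cL_{k,\ell}$ the informative examples turn $\cL_{k,\ell}$ into an exact copy of $\binom{[k-1]}{\ell}$: a positive example $k+i$ records that coordinate $i$ is a ``$1$'', a negative one that it is a ``$0$'', and a weight-$\ell$ vector is determined among all weight-$\ell$ vectors only if either all $\ell$ of its $1$-coordinates or all $k-1-\ell$ of its $0$-coordinates are recorded. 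Hence a teaching set needs at least $\min\{\ell,k-1-\ell\}$ informative examples, which is exactly the bound $\TD_{min}(\binom{[k-1]}{\ell})=\min\{\ell,k-1-\ell\}$ of Example~\ref{ex:powerset} transported along the identification, giving $\TD_{min}(\cL_{k,\ell})\ge\min\{\ell,k-1-\ell\}$.

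The only real work is the structural claim of the second paragraph — in particular ruling out that a ``wrap-around'' combination of two or more generators produces a residue-$i$ element below $p_i$ — and I expect this short counting step to be the main obstacle; everything after it is a direct translation of the standard teaching-dimension computations for the cube and its slices.
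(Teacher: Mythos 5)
Your proposal is correct and follows essentially the same route as the paper: both arguments show that every concept of $\cL_k$ equals $\{0,k\}\cup\{2k,2k+1,\ldots\}\cup\{k+i : p_i=k+i\}$, so that only the window $\{k+1,\ldots,2k-1\}$ is informative and the class is a relabelled copy of $2^{[k-1]}$ (resp.\ $\binom{[k-1]}{\ell}$), after which Example~\ref{ex:powerset} gives the bounds. The only difference is that the paper asserts this structural description by ``inspection'' while you verify it explicitly via the minimal-residue computation, which is a welcome but not essentially different elaboration.
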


\begin{proof}
For $k=1$, the assertion in the lemma is vacuous. Suppose therefore
that $k\ge2$. An inspection of the generators $k,p_1,\ldots,p_{k-1}$
with $p_i \in \{k+i,2k+i\}$ shows that 
\begin{eqnarray*} 
\cL_k & = & \{L_{k,S}:\ S \seq \{k+1,k+2,\ldots,2k-1\}\} \\ 
\cL_{k,\ell} & = & \{L_{k,S}:\ (S \seq \{k+1,k+2,\ldots,2k-1\}) \wedge (|S|=\ell)\}\ 
\end{eqnarray*}
where
\[ L_{k,S} = \{0,k\} \cup \{2k,2k+1,\ldots\} \cup S \enspace . \]
Note that the examples in $\{0,1,\ldots,k\} \cup \{2k,2k+1,\ldots,\}$
are redundant because they do not distinguish between distinct concepts
from $\cL_k$. The only useful examples are therefore contained in 
the interval $\{k+1,k+2,\ldots,2k-1\}$. From this discussion, it follows
that teaching the concepts of $\cL_k$ (resp.~of $\cL_{k,\ell}$) is not
essentially different from teaching the concepts of $2^{[k-1]}$ 
$\left(\mbox{resp.~of }{[k-1] \choose \ell}\right)$. This completes 
the proof of the lemma because we know from Example~\ref{ex:powerset} 
that $\TD_{min}(2^{[k-1]}) = k-1$ 
and $\TD_{min}\left({[k-1] \choose \ell}\right) = \min\{\ell,k-1-\ell\}$.
\end{proof}

%If $k+i$ is neither given as a positive nor as a negative example, 
%then, even if the set $P' = P\sm\{p_i\}$ is known, it is impossible 
%to distinguish between the concepts $\spn{P'\cup\{k+i\}}$
%and $\spn{P'\cup\{2k+i\}}$.  

We claim now that the 
inequalities~(\ref{eq:lb2}),~(\ref{eq:lb3}) and~(\ref{eq:lb4})
are valid, i.e., we claim that the following holds:
\begin{enumerate}
\item
$\PBTD(\CFLINSET_k) \ge k-1$.
\item
$\PBTD(\NECFLINSET_k) \ge \lfloor (k-1)/2 \rfloor$.
\item
$\PBTD(\NECFLINSET'_k) \ge k-1$.
\end{enumerate}

\begin{proof}
For $k=1$, the inequalities are obviously valid. Suppose therefore
that $k\ge2$.
\begin{enumerate}
\item
Since $\gcd(k,k+1) = \gcd(k,2k+1) = 1$, it follows that $\cL_k$ is 
a finite subclass of $\CFLINSET_k$. 
Thus $\PBTD(\CFLINSET_k) \ge \PBTD(\cL_k) \ge \TD_{min}(\cL_k) \ge k-1$.
\item
Define $\cL_{k}[N] = \{N+L:\ L\in\cL_k\}$ 
and $\cL_{k,\ell}[N] = \{N+L:\ L\in \cL_{k,\ell}\}$. 
Clearly $\TD_{min}(\cL_k[N]) = \TD_{min}(\cL_k)$ 
and $\TD_{min}(\cL_{k,\ell}[N]) = \TD_{min}(\cL_{k,\ell})$ holds
for every $N\ge0$. It follows that the lower bounds in Lemma~\ref{lem:td-min}
are also valid for the classes $\cL_k[N]$ and $\cL_{k,\ell}[N]$ in place
of $\cL_k$ and $\cL_{k,\ell}$, respectively. Let 
\begin{equation} \label{eq:special-shift} 
N(k) = k^2 + (k-1-\lfloor (k-1)/2 \rfloor)k + \sum_{i=1}^{k-1}i = 
k^2+(k-1-\lfloor (k-1)/2 \rfloor)k+\frac{1}{2}(k-1)k \enspace .
\end {equation}
It suffices to show that $N(k)+\cL_{k,\lfloor (k-1)/2 \rfloor}$ is a finite
subclass of $\NECFLINSET_k$. To this end, first note that
\[ 
\spn{k,p_1,\ldots,p_{k-1}}_+ = k+\sum_{i=1}^{k-1}p_i + \spn{k,p_1,\ldots,p_{k-1}}
\enspace .
\]
Call $p_i$ ``light'' if $p_i = k+i$ and call it ``heavy'' if $p_i = 2k+i$.
Note that a concept $L$ from $N(k)+\cL_{k,\ell}$ is of the general form
\begin{equation} \label{eq:general-form}
L = N(k) + \spn{k,p_1,\ldots,p_{k-1}} 
\end{equation}
with exactly $\ell$ light parameters among $p_1,\ldots,p_{k-1}$. 
A straightforward calculation shows that, for $\ell = \lfloor (k-1)/2 \rfloor$,
the sum $k+\sum_{i=1}^{k-1}p_i$ equals the number $N(k)$ as defined
in~(\ref{eq:special-shift}). Thus, the concept $L$ from~(\ref{eq:general-form}) 
with exactly $\lfloor (k-1)/2 \rfloor$ light parameters among $\{p_1,\ldots,p_{k-1}\}$ 
can be rewritten as follows:
\[ 
L = N(k) + \spn{k,p_1,\ldots,p_{k-1}} = \spn{k,p_1,\ldots,p_{k-1}}_+
\enspace .
\]
This shows that $L \in \NECFLINSET_k$. As $L$ is a concept 
from $N(k)+$ \linebreak[4]$\cL_{k,\lfloor (k-1)/2 \rfloor}$ in general form, we may conclude
that $N(k)+\cL_{k,\lfloor (k-1)/2 \rfloor}$ is a finite subclass of $\NECFLINSET_k$,
as desired.
\item
The proof of the third inequality is similar to the above proof 
of the second one. It suffices to show that, for every $k\ge2$, 
there exists $N \in \nats$ such that $N+\cL_k$ is a subclass
of $\NECFLINSET'_k$. To this end, we set $N = 3k^2$. A concept $L$
from $3k^2+\cL_k$ is of the general form
\[ L = 3k^2+\spn{k,p_1,\ldots,p_{k-1}} \]
with $p_i \in \{k+i,2k+i\}$ (but without control over the number of light
parameters). It is easy to see that the constant $3k^2$ is large enough
so that $L$ can be rewritten as
\[ L = 3k^2 - \left(k+\sum_{i=1}^{k-1}p_i\right) + \spn{k,p_1,\ldots,p_{k-1}}_+ \]
where $3k^2 - \left(k+\sum_{i=1}^{k-1}p_i\right) \ge 0$.
This shows that $L \in \NECFLINSET'_k$. As $L$ is a concept
from $3k^2+\cL_k$ in general form, we may conclude 
that $3k^2+\cL_k$ is a finite subclass of $\NECFLINSET'_k$, as desired.

\end{enumerate}
\end{proof}

We conclude with the proof of the inequality~(\ref{eq:lb5}).

\begin{lemma} \label{lem:pbtdplusnecflinsetklb}
$\PBTD^+(\NELINSET_k) \geq \PBTD^+(\NECFLINSET_k) \geq k-1$.
\end{lemma}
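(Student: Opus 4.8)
The plan is to treat the two inequalities separately. The first, $\PBTD^+(\NELINSET_k) \ge \PBTD^+(\NECFLINSET_k)$, is immediate from monotonicity (Lemma~\ref{lem:monotonicity}): imposing the extra condition $\gcd(G)=1$ only shrinks the class, so $\NECFLINSET_k \seq \NELINSET_k$. The real content is the second inequality, which is precisely~(\ref{eq:lb5}), namely $\PBTD^+(\NECFLINSET_k) \ge k-1$. I would attack it through the weak-spanning-set bound $I'(\cL) \le \PBTD^+(\cL)$ from~(\ref{eq:span-pbtd}); it then suffices to exhibit a single concept $L \in \NECFLINSET_k$ that admits no weak spanning set of size $\le k-2$. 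The cases $k\le 2$ are immediate, so I would assume $k\ge 3$.

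For the construction I would reuse the $(k-1,N)$-special machinery of Definition~\ref{def:special-set} and Lemma~\ref{lem:special-sets}, but pushed into the non-shifted, non-erasing world by means of one large redundant generator. Concretely, fix a prime $p > k$ and let $P = \{p,p+1,\ldots,p+k-2\}$, an independent set of size $k-1$ with $\min P = p$ prime, hence $\gcd(P)=1$ by Lemma~\ref{lem:ind-gcd}. Pick a very large $r \in \spn{P}$ with $r \notin P$, set $\sigma = r + \sumg(P)$ and $G = P \cup \{r\}$, and put $L = \spn{G}_+$. Since $r$ is redundant we have $\spn{G}=\spn{P}$, so $L = \sigma + \spn{P}$ with $\min L = \sigma$, while $|G|=k$ and $\gcd(G)=1$; thus $L \in \NECFLINSET_k$. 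Choosing $r$ (hence $\sigma$) large enough makes $L$, read as $\sigma + \spn{P}$, satisfy the two inequalities of Definition~\ref{def:special-set} for the pair $(k-1,\sigma)$, i.e.\ $L$ is $(k-1,\sigma)$-special.

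For the masking step, let $S \seq L$ with $|S|\le k-2$. Since $\sigma=\min L$ lies in every subset I will build, I may assume $\sigma \notin S$ and write $S = \sigma + X$ with $X \seq \spn{P}$ and $|X|\le k-2$. I would then run the case analysis from the proof of Lemma~\ref{lem:special-sets} verbatim, with the role of $k$ played by $k-1$: it yields a generator set $Y$ with $\gcd(Y)=1$, $|Y|\le k-1$, $\sumg(Y)\le\sigma$, $X \seq \spn{Y}$, and $\spn{Y} \subset \spn{P}$. Setting $L'' = \sigma + \spn{Y}$ gives $S \seq L'' \subset L$. To see that $L''$ itself lies in $\NECFLINSET_k$, I realise it as a non-erasing set: put $r'' = \sigma - \sumg(Y) \ge 0$; if $r''=0$ take $Y'=Y$, and otherwise note that $\sigma$ dwarfs the bounded quantity $\sumg(Y)$, so $r'' > F(Y)$ and hence $r'' \in \spn{Y}$ (being large, $r''$ also cannot lie in the bounded set $Y$), whence $Y' = Y \cup \{r''\}$ is a redundant extension with $|Y'|\le k$, $\gcd(Y')=1$ and $\spn{Y'}_+ = \sigma + \spn{Y} = L''$. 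Either way $L'' = \spn{Y'}_+ \in \NECFLINSET_k$ is a proper subset of $L$ containing $S$, so $S$ is not a weak spanning set of $L$ w.r.t.~$\NECFLINSET_k$.

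Since no $S$ of size $\le k-2$ weakly spans $L$, we conclude $I'(L,\NECFLINSET_k)\ge k-1$, and therefore $\PBTD^+(\NECFLINSET_k)\ge I'(\NECFLINSET_k)\ge k-1$ by~(\ref{eq:span-pbtd}). The main obstacle, and the only place where work is needed beyond citing Lemma~\ref{lem:special-sets}, is the realisation step: the proper subsets supplied by the special-set analysis naturally live in the shift-class $\NECFLINSET'_{k-1}[\sigma]$, and one must verify they can be rewritten as genuine non-erasing linear sets with at most $k$ generators and $\gcd$ equal to $1$. This is exactly what the large redundant generator $r''$ achieves, and it works precisely because $\sigma$ is chosen astronomically larger than all the bounded data (such as $\sumg(Y)$, the Frobenius numbers $F(Y)$, and $\max P$) that occur in the construction.
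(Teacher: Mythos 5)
Your proof is correct, but it follows a genuinely different route from the paper's. The paper handles the first inequality exactly as you do (monotonicity), and for the second it also reduces to exhibiting one concept with no weak spanning set of size $k-2$; but its witness is the completely explicit concept $L = \spn{k,k+1,\ldots,2k-1}_+ = \{N\}\cup\{N+k,N+k+1,\ldots\}$ with $N=\sum_{i=0}^{k-1}(k+i)$, and the masking subsets are written down by hand in a case analysis over $k=3$, $k=4$ and $k\ge5$ (e.g.\ $\spn{5,7}_+$ for $k=3$, four subcases for $k=4$, and a general formula $N+\spn{k,k+a_1,\ldots,k+a_{k-3},N-(k-2)k-(a_1+\cdots+a_{k-3})}$ for $k\ge5$). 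You instead recycle the $(k,N)$-special machinery from the proof of inequality~(\ref{eq:lb1}), run at parameter $k-1$, and then transport the resulting shifted subsets $\sigma+\spn{Y}$ back into the unshifted class $\NECFLINSET_k$ via one large redundant generator $r''=\sigma-\sumg(Y)$. The one point your argument genuinely needs beyond citing Lemma~\ref{lem:special-sets} is the boundedness of the generator sets $Y$ it produces (they are built from $P$, $q(P)$ and the $t_r(P)$ only, independently of $\sigma$), which makes $r''>F(Y)$ and $r''\notin Y$ achievable by taking $\sigma$ large; this is sound, and the same ``choose the shift large enough'' device already appears in the paper's proof of~(\ref{eq:lb1}). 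The trade-off: your argument is uniform in $k$ and explains structurally why the bound holds, at the price of leaning on the heavier special-set lemma and a non-explicit choice of $\sigma$; the paper's is self-contained and concrete but needs separate low-dimensional cases and ad hoc verification of each masking set.
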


\begin{proof}
The class $\NECFLINSET_1$ contains only $\natnum$, 
and so $\PBTD^+(\NECFLINSET_1)$ $= 0$.
The class $\NECFLINSET_2$ contains at least two members 
so that $\PBTD^+$ \linebreak[4]$(\NECFLINSET_2)$ $\geq 1$.
Now assume $k \geq 3$. Set 
\[ N = \sum_{i=0}^{k-1} \left(k+i\right) \]
and 
\[ 
L = \spn{k,k+1,\ldots,2k-1}_+ = N+\spn{k,k+1,\ldots,2k-1} 
  = \{N\} \cup \{N+k,N+k+1,\ldots\} \enspace .
\]
Choose and fix an arbitrary set $S \seq L$ of size $k-2$.
It suffices to show $S$ is not a weak spanning set for $L$ 
w.r.t.~$\NECFLINSET_k$. If $S$ does not contain $N$, then 
the set 
\[ L' = \spn{N+k-1,1}_+ = L\sm\{N\} \]
satisfies $S \subset L' \subset L$ so that $S$ cannot be 
a weak spanning set for $L$. 
Suppose therefore from now on that $N \in S$.
We proceed by case analysis:
\begin{description}
\item[Case 1:] $k = 3$. \\
Then $N = 12$, $L = 12+\spn{3,4,5} =  \{12\} \cup \{15,16,17,\ldots\}$.
Moreover $|S|=1$ so that $S=\{12\}$. 
Now the set $L' = \spn{5,7}_+ = 12+\spn{5,7}$ 
satisfies $S \subset L' \subset L$ so that $S$ cannot be
a weak spanning set for $L$.
\item[Case 2:] $k = 4$. \\
Then $N = 22$, $L = 22+\spn{4,5,6,7} =  \{22\} \cup \{26,27,28,\ldots\}$.
Moreover $|S|=2$ so that $S=\{22\} \cup \{26+x\}$ for some $x\ge0$.
Let $a = (x \bmod 4) \in \{0,1,2,3\}$. It is easy to check that
the set
\[ 
L' = \left\{ \begin{array}{ll}
       22+\spn{4,5,13} & \mbox{if $a\in\{0,1\}$} \\
       22+\spn{4,7,11} & \mbox{if $a=3$} \\
       22+\spn{5,6,11} & \mbox{if $x=a=2$} \\
       22+\spn{4,5,13} & \mbox{if $x>a=2$}
     \end{array} \right . 
\]
satisfies $S \subset L' \subset L$ so that $S$ cannot be 
a weak spanning set for $L$.
\item[Case 3:] $k \geq 5$. \\
Then the set $S$ has the form 
$S = \{N\} \cup \{N+k+x_1,\ldots,N+k+x_{k-3}\}$
for distinct integers $x_1,\ldots,x_{k-3}\ge0$. For $i=1,\ldots,k-3$,
let $a_i = (x_i \bmod k) \in \{0,\ldots,k-1\}$. The the set
\[ L' = N + \spn{k,k+a_1,\ldots,k+a_{k-3},N-(k-2)k-(a_1+\ldots+a_{k-3})} \]
satisfies $S \subset L' \subset L$ so that $S$ cannot be 
a weak spanning set for $L$.
\end{description}
In any case, we came to the conclusion that a subset of $L$
with only $k-2$ elements cannot be a weak spanning set for $L$
w.r.t.~$\NECFLINSET_k$.
\end{proof}

%\noindent
%We briefly note without proof that, for $k=2$, the following holds:
%
%\begin{lemma}
%$\PBTD(\NECFLINSET_2) = 2$.
%\end{lemma}

\end{document}